\documentclass{article}

\usepackage[margin=1in]{geometry}
\usepackage[T1]{fontenc}    
\usepackage{hyperref}       
\usepackage{url}            
\usepackage{booktabs}       
\usepackage{amsfonts}       
\usepackage{nicefrac}       
\usepackage{microtype}      
\usepackage{xcolor}         
\usepackage{graphicx}
\usepackage{wrapfig}
\usepackage{capt-of}
\usepackage{subcaption}

\usepackage{multirow}           
\usepackage{duckuments}         

\usepackage{amsmath}
\usepackage{mathtools}
\usepackage{enumitem}
\usepackage{amsthm}
\usepackage{pifont}
\usepackage[linesnumbered,vlined,ruled,commentsnumbered]{algorithm2e}
\usepackage{tcolorbox}
\usepackage{tikz}
\usetikzlibrary{automata, positioning}
\usepackage{wrapfig}
\usepackage[export]{adjustbox}
\usepackage[square, numbers]{natbib}

\usepackage{thmtools}
\usepackage{thm-restate}

\usepackage{pifont}
\newcommand{\cmark}{\ding{51}}%
\newcommand{\xmark}{\ding{55}}%
\newcommand\xtw[1]{\textcolor{blue}{[xw: #1]}}

\newtheorem{theorem}{Theorem}[section]
\newtheorem{proposition}[theorem]{Proposition}
\newtheorem{definition}[theorem]{Definition}
\newtheorem{assumption}[theorem]{Assumption}
\newtheorem{lemma}[theorem]{Lemma}
\newtheorem{property}[theorem]{Property}

\usepackage{float}

\newtheorem{example}{Example}[section]
\newtheorem{remark}{Remark}[section]
\newcommand{\yuhong}[1]{{\textcolor{red}{Y: #1}}}

\newcommand{\proj}{WALLA\xspace}
\newcommand{\PP}{\mathcal{P}}
\newcommand{\Q}{\mathcal{Q}}

\title{Decentralized Aggregation of LLM Predictions  \\ via Wagering Mechanisms}

\author{Yuhong Luo \thanks{Rutgers University, \texttt{y.luo@rutgers.edu}} \and David M. Pennock \thanks{DIMACS, Rutgers University, \texttt{dpennock@dimacs.rutgers.edu}} \and Xintong Wang \thanks{Rutgers University, \texttt{xintong.wang@rutgers.edu}}}

\begin{document}

\maketitle
\begin{abstract}
It is increasingly common to aggregate predictions from multiple LLMs, each with domain expertise or access to private tools and data, to improve collective prediction performance. 
In decentralized settings, aggregation weights need to be determined without access to models' private information and should remain robust to strategic reporting. 
We propose a family of \textit{advantage-aligned wagering mechanisms for LLM aggregation} (WALLA), in which each model reports a prediction and a learned wager, and predictions are aggregated using wagers as weights.
WALLA introduces a leave-one-out baseline into the net payout function, yielding three desirable properties: (1) dominant-strategy incentive compatibility of prediction under arbitrary belief structure, (2) advantage--wager alignment, where the optimal wager is proportional to the model's expected score advantage, and (3) prediction-agnostic wager optimization, enabling decentralized learning of wager policies without requiring optimal predictions. 
We further instantiate two mechanism variants that trade off normality and no-arbitrage while maintaining a bounded worst-case deficit for the mechanism.
Experiments on question-answering and forecasting benchmarks across heterogeneous models and private-information settings show that WALLA matches centralized aggregation methods in predictive performance, while simultaneously achieving decentralized learning, advantage-aligned aggregation weights, uncertainty awareness, and incentive-compatible prediction.\footnote{The source code is provided in \url{https://github.com/chailab-rutgers/WALLA}.}

\end{abstract}

\section{Introduction}
Having different large language models---each with distinct capabilities, domain expertise, or access to private tools and data---collaborating on complex tasks is becoming increasingly common. 
When no single model dominates across domains, a natural objective is to combine their complementary strengths.
For prediction tasks such as forecasting and question answering, this amounts to aggregating their probabilistic predictions.

A central challenge in aggregating LLM predictions is determining how much influence or weight each model should have on each question. 
In centralized systems, question-dependent aggregation weights can be learned directly from models' hidden representations or private contexts~\citep{lu2023routing, chen2024routerdc, ong2025routellm, song2025irt, WOLPERT1992241}. 
In decentralized deployments, however, models' private signals are unavailable to the aggregator. 
Existing decentralized approaches either assign fixed weights~\citep{wang2022selfconsistency, du2023improving, choi2025debate} or derive weights from reported confidence~\citep{kang2025scalable} or input perplexity~\citep{mavromatis2024pack}.  
However, aggregation weights based on these heuristic signals may be unreliable when models are miscalibrated and may also be strategically exaggerated by independent LLM services seeking greater influence or rewards. 
We propose designing decentralized mechanisms whose incentives align aggregation weights with models' expected contribution to the aggregated prediction. Ideally, for each domain-specific question, models that expect to outperform the rest of the pool should participate and receive greater aggregation weights, while models that expect to be outperformed should receive little or no weight in the aggregation.

We propose to address this challenge
through \textit{wagering
mechanisms}~\citep{Lambert2008Self,LAMBERT2015axiom,Chen2014Removing,Freeman2017Double}, in which each participant reports a prediction and places a wager---a stake reflecting its confidence in that prediction. 
The mechanism then evaluates predictions using proper scoring rules and allocates payouts based on these scores relative to others, with rewards scaled by the size of the wagers. 
By interpreting wagers as aggregation weights, the mechanism simultaneously incentivizes truthful predictions and elicits how much influence each model should have in the final aggregated prediction. 




The canonical weighted-score wagering mechanism (WSWM)~\citep{LAMBERT2015axiom} satisfies key incentive properties, but two aspects limit its applicability to learning-based agents. 
First, its incentive-compatibility guarantee relies on \textit{immutable beliefs}---the assumption that agents would not revise their posteriors upon observing others' reports. 
Second, because of the form of WSWM's net payout, a rational agent's optimal wager always lies at the boundaries, producing a binary signal that is insufficiently informative for weighted aggregation or for learning wagering strategies that reflect an agent's comparative advantage on domain-specific questions.


\paragraph{Our contribution.}
We propose a family of wagering mechanisms referred to as \textit{advantage-aligned wagering mechanisms for LLM aggregation} (\proj) that resolve both limitations through a single structural modification to the net payout function.
This modification yields three properties that make the mechanism well-suited to learning-based agents:
(1)~\textit{dominant-strategy incentive compatibility under arbitrary belief structures}: truthful prediction remains optimal regardless of opponents' strategies or how agents' beliefs are formed; (2)~\textit{advantage--wager alignment}: the best-response wager is proportional to the agent's expected score advantage; and (3)~\textit{prediction-agnostic wager optimization}: for any fixed prediction, the best-response wager remains well defined, enabling agents to learn wagering strategies without requiring optimal predictions.
Unlike heuristic confidence measures, the equilibrium wager has a precise interpretation as an agent's expected score advantage, arising as the equilibrium outcome of a net payout function that aligns wagers with comparative advantage.

We further instantiate the proposed net payout function with two variants,
where the first satisfies \textit{normality} but admits \textit{arbitrage}~\citep{Chen2014Removing}, while the second sacrifices normality but guarantees no-arbitrage. 
We prove that the mechanism's worst-case deficit is a tunable design choice independent of both the number of participants and the amount wagered.



Empirically, we conduct experiments on decentralized prediction aggregation in which LLM agents use their model hidden states to learn a wager network.
We evaluate \proj across three scenarios: homogeneous models with private contexts, heterogeneous models, and heterogeneous models with private contexts, demonstrating that the wager network learns to identify comparative advantage from diverse sources: context assignment, domain expertise, and their combination.
\proj matches centralized aggregation baselines (stacked generalization, learned routers) on QA benchmarks (MMLU, MedMCQA, ARC-Challenge, PubMedQA) and a forecasting benchmark (BayesX), while being the only method that simultaneously achieves weighted aggregation, uncertainty awareness, and fully decentralized learning with incentive-compatibility guarantees.

\subsection{Related Work}\label{sec:related_work}

\paragraph{Aggregating LLM predictions.} 
The aggregation of LLM predictions has been studied extensively~\citep{chen2025harnessing}. 
Equal-weighted aggregation, such as self-consistency~\citep{wang2022selfconsistency}, majority vote~\citep{li2024more,yu-etal-2024-breaking, choi2025debate}, or multi-agent debate~\citep{du2023improving} is shown to be effective. 
However, these methods implicitly assume that models have comparable capabilities across tasks, an assumption that may fail when tasks require specialized knowledge. 
%
Recent work shows that LLM hidden representations encode uncertainty and confidence~\citep{kadavath2022language, azaria2023internal, Xiao_Dou_Xiong_Chen_Chen_2026}, motivating training-free methods that estimate aggregation weights from model uncertainty~\citep{mavromatis2024pack,kang2025scalable}. 
While computationally efficient, these methods rely on accurate confidence estimates and can be vulnerable to miscalibration or strategic manipulation.

More general approaches learn aggregation weights in a centralized supervised manner. 
A stream of work studies pre-inference routing, where a centrally trained routing model selects which LLM should answer a query, without accessing model predictions or hidden representations at inference time~\citep{lu2023routing,shnitzer2024large, chen2024routerdc, ong2025routellm, song2025irt}. 
Another line adopts stacked generalization~\citep{WOLPERT1992241}, which learns aggregation weights from model hidden representations.
Finally, several methods train external reward models to evaluate generated outputs and determine aggregation weights~\citep{uesato2022solvingmathwordproblems, jiang2023llm,lightman2023letsverifystepstep}, which can be computationally expensive. 
In contrast, our method supports fully decentralized learning, where aggregation weights emerge as equilibrium wagers with a precise interpretation as agents' expected score advantages.

A series of recent works studies mechanism designs for LLM aggregation~\citep{dutting2024Mechanism, Dubey2024Auctions, Bergemann2025Data, soumalias2026truthful}. Rather than improving the aggregated prediction performance, these mechanisms primarily elicit truthful reports of preferences (e.g., bids for tokens) for applications such as online advertising. 
Our work here uses mechanism design to elicit aggregation weights (reflecting prediction advantage) for decentralized prediction aggregation.

\paragraph{LLM forecasting.} 
Forecasting has emerged as an important benchmark for evaluating LLMs' world knowledge and uncertainty-estimation capabilities~\citep{chenghaozhu-etal-2025-llm,dai2025are}. 
Existing work has focused on improving the forecasting performance of language-model-based forecasting systems~\citep{ZouForecasting2022, halawi2024approaching} and on designing benchmarks for LLM forecasters~\citep{jin-etal-2021-forecastqa, zhang-etal-2024-analyzing, karger2025forecastbench, wang-etal-2025-openforecast, zeng2025futurexadvancedlivebenchmark, Guan2026open,yang2026llmasaprophet}. \citet{Schoenegger2024Wisdom} showed that aggregating forecasts from multiple LLMs can rival the accuracy of human crowds. 


\paragraph{Wagering mechanisms and prediction markets.}
Wagering mechanisms are designed to elicit probabilistic predictions from strategic forecasters~\citep{Lambert2008Self,Chen2014Removing, LAMBERT2015axiom, Freeman2020no, RAJA2024142}. 
These mechanisms are proposed to satisfy desirable properties, including incentive compatibility, budget balance, individual rationality, normality, and no-arbitrage.  Our work builds on this literature by designing wagering mechanisms for decentralized aggregation of LLM predictions.

A closely related line of work studies prediction markets~\citep{Abernethy2011Collaborative,Beygelzimer2012Learning,Frongillo2012Interpreting, Abernethy2013Efficient,Hu2014Multi}, which aggregate beliefs through market prices in dynamic environments where participants continuously update their positions as new information arrives. In contrast, we consider a static information setting in which each agent reports a prediction and a wager only once.


\section{Preliminaries}
This section introduces preliminaries underlying our proposed mechanism and analysis.
We first review proper scoring rules and the canonical \textit{weighted-score wagering mechanism} on which we build, then discuss desirable properties, and motivate additional ones for extending the framework to LLM agents.\looseness=-1

%
\paragraph{Scoring rules and wagering mechanisms.}
We consider a probabilistic prediction setting, where a mechanism elicits predictions on uncertain outcomes from multiple agents and evaluates them based on the quality of their predictions.
Let $\mathcal{M} = \{1, 2, \dots, M\}$ denote a set of $M$ agents. 
A \textit{wagering mechanism} operates in two stages.
In the first stage, given a question $x \in \mathcal{X}$ with an unknown outcome $y \in \mathcal{Y}$, each agent $i$ submits a prediction $p_i \in \Delta(\mathcal{Y})$ and wagers some non-negative amount $w_i \in \mathbb{R}_{\ge 0}$ on her prediction.
We denote the resulting \textit{prediction} and \textit{wager profiles} by $\mathbf{p} \coloneqq (p_1, \ldots, p_M)$ and $\mathbf{w} \coloneqq (w_1, \ldots, w_M)$, and write $\mathbf{p}_{-i}$ and $\mathbf{w}_{-i}$ for the corresponding profiles excluding agent $i$. 
In the second stage, after the outcome is revealed, the mechanism determines the \textit{net payout} to each agent, denoted $\pi_i(\mathbf{p}, \mathbf{w}, y) \in \mathbb{R}$, based on the quality of her prediction and the amount wagered.

The quality of a prediction is measured by \textit{a scoring rule}: $s \colon \Delta(\mathcal{Y}) \times \mathcal{Y} \to \mathbb{R}$, which assigns a numerical score $s(p, y)$ to a probabilistic forecast $p$ 
upon realization of outcome $y$.
A \textit{strictly proper scoring rule} incentivizes a \textit{risk-neutral} agent to predict according to her true belief.
%
\begin{definition}[Strictly Proper Scoring Rule]
A scoring rule $s$ is \emph{strictly proper} if and only if for all $p,q \in \Delta(\mathcal{Y})$ and $p \neq q$,
\[
  \mathbb{E}_{Y \sim q}[s(q, Y)] 
  >
  \mathbb{E}_{Y \sim q}[s(p, Y)]. 
\]
A prediction of $p = q$ is \emph{truthful}, as the agent simply reports her belief $q$.
\end{definition}
\begin{example} [Brier Score~\citep{Brier1950,Gneiting01032007}]\label{def:brier}
$s(p,y)=c_1-c_2\|p-\delta_y\|_2^2$, where $c_1 \in \mathbb{R}$, $c_2 \in \mathbb{R}_{+}$, and $\delta_y\in\Delta(\mathcal{Y})$ denotes the Dirac measure at $y$, the indicator vector of the realized outcome $y$.
\end{example}
The Brier score, also known as the quadratic score, is a simple and
widely used strictly proper scoring rule; it is bounded and has the property that for an agent with belief $q$, the expected score $\mathbb{E}_{Y \sim q}[s(p, Y)]$ is monotonically decreasing in $\|p-q\|_2^2$, the squared distance between the prediction $p$ and the belief $q$.
%
A scoring rule is then used in the mechanism's payout function $\pi: (\Delta(\mathcal{Y}) \times \mathbb{R}_{\ge 0})^M \times \mathcal{Y} \rightarrow \mathbb{R}^M$.
\begin{definition}[Weighted-Score Wagering Mechanism (WSWM)~\citep{LAMBERT2015axiom}]
Given a scoring rule $s$, the WSWM specifies the net payout to agent $i$ as the difference between her score and the wager-weighted average of all agents' scores, scaled by her own wager, i.e.,
\begin{equation}
    \small
    \label{eq:wswm_payout}
    \pi^{\text{WSWM}}_i(\mathbf{p}, \mathbf{w}, y)
    = w_i\left(s(p_i,y) - \frac{\sum_{j=1}^M w_j s(p_j, y)}{\sum_{j=1}^M w_j}\right).
\end{equation}
\end{definition}
The WSWM based on a Brier score is also referred to as the \textit{Brier betting} mechanism.
%

\paragraph{Desirable properties of wagering mechanisms.}
Following~\citet{LAMBERT2015axiom} and~\citet{Chen2014Removing}, we introduce a set of properties that wagering mechanism designers typically aim to satisfy and that are particularly relevant in our setting: (a) \textit{incentive compatibility} (Def.~\ref{def:ic}), 
(b) \textit{individual rationality} (Def.~\ref{def:ir}), 
(c) \textit{weak budget balance} (Def.~\ref{def:wbb}), 
(d) \textit{normality} (Def.~\ref{def:normal_old}), and 
(e) \textit{no arbitrage} (Def.~\ref{def:noarb}).

\begin{definition} [Incentive Compatibility (IC)]\label{def:ic}
A wagering mechanism is incentive-compatible if all players maximize their expected net payout when predicting their true beliefs. Formally, $\forall i\in \mathcal{M}$, $q_i\in  \Delta(\mathcal{Y})$ which reflects agent $i$'s true belief, $p_i\in  \Delta(\mathcal{Y})$ where $q_i \ne p_i$, $\mathbf{p}_{-i} \in  \Delta(\mathcal{Y})^{M-1}$, and $\mathbf{w} \in \mathbb{R}_{\ge 0}^M$ with $w_i > 0$,
\[\mathbb{E}_{Y\sim q_i}[\pi_i((q_i, \mathbf{p}_{-i}), \mathbf{w}, Y)] > \mathbb{E}_{Y\sim q_i}[\pi_i((p_i, \mathbf{p}_{-i}), \mathbf{w}, Y)].\] 
The mechanism is \textit{dominant-strategy incentive-compatible (DSIC)} if truthful prediction is optimal regardless of other agents' reports, for any $w_i > 0$.
\end{definition}

\begin{definition} [Individual Rationality (IR)]\label{def:ir} 
Every agent prefers participating to not participating in the mechanism (i.e., the expected net payoff of participating is nonnegative). 
Formally, $\forall i\in \mathcal{M}$, $q_i\in  \Delta(\mathcal{Y})$ which reflects agent $i$'s true belief and $w_i \in \mathbb{R}_{\ge0}$, there exists some $p_i \in  \Delta(\mathcal{Y})$ such that for all $\mathbf{p}_{-i} \in  \Delta(\mathcal{Y})^{M-1}$ and 
$\mathbf{w}_{-i} \in \mathbb{R}_{\ge 0}^{M-1}$,%
\[\mathbb{E}_{Y\sim q_i}[\pi_i((p_i, \mathbf{p}_{-i}), (w_i, \mathbf{w}_{-i}), Y)] \ge 0.\]
\end{definition}

 \begin{definition} [Weak Budget Balance]\label{def:wbb} The mechanism does not lose money but can potentially make a profit. Formally, $\forall \mathbf{p}\in \Delta(\mathcal{Y})^M, \mathbf{w} \in \mathbb{R}_{\ge 0}^M $ and $y\in \mathcal{Y}$,
\[\sum_{i=1}^M \pi_i (\mathbf{p}, \mathbf{w}, y) \le 0.\]
\end{definition}

\begin{definition}[Normality]\label{def:normal_old} 
    If from agent $i$'s perspective the prediction of another agent $j$ improves, then agent $i$'s expected net payoff decreases. Formally, for every $i \ne j \in \mathcal{M}$, $q_i\in \Delta (\mathcal{Y})$ which reflects agent $i$'s true belief, $\mathbf{w}\in \mathbb{R}^M_{\ge 0}$ and $\mathbf{p} \in \Delta(\mathcal{Y})^M$, defining $\mathbf{p}'$ by $p'_j= q_i$ and $p'_k = p_k$ for $k \ne j$,
    \[\mathbb{E}_{Y\sim q_i}[\pi_i(\mathbf{p}, \mathbf{w}, Y)] \ge \mathbb{E}_{Y\sim q_i}[\pi_i(\mathbf{p}', \mathbf{w}, Y)],\]
    with strict inequality whenever $p_j\ne q_i$, $w_i > 0$ and $w_j > 0$.
\end{definition}

\begin{definition}[No-Arbitrage~\citep{Chen2014Removing}]
\label{def:noarb}
 No agent can guarantee a strictly positive payout regardless of the realized outcome.
 Formally, a mechanism is arbitrage-free if there exists no agent $i \in \mathcal{M}$, prediction profile $\mathbf{p}\in \Delta(\mathcal{Y})^M$, and $\mathbf{w}\in \mathbb{R}_{\geq 0}^M$ such that $\pi_i(\mathbf{p}, \mathbf{w}, y) > 0$ for all $y \in \mathcal{Y}$.
\end{definition}

WSWM is well-suited for human forecasters, satisfying the above properties except for no-arbitrage, but can fall short for learning-based agents. 
First, a rational agent's best-response wager is extreme: zero when she expects to underperform the pool average, and as large as permitted otherwise.
It yields only coarse, binary feedback for weighted aggregation and learning wagering strategies. 
Second, its IC relies on immutable beliefs~\citep{LAMBERT2015axiom}; when agents update beliefs after observing others’ reports, truthful prediction may no longer be optimal. 
We aim to close both gaps by designing a mechanism suitable for both human and learning-based participants (e.g., LLMs): the best-response wager reflects expected score advantage, and DSIC holds under an arbitrary belief structure.

\paragraph{Warm-up: WSWM with non-strategic agents.}
\label{sec:warmup}
%
We analyze the wagering mechanism in a simplified, non-strategic setting where agents report truthful predictions and reinvest their cumulative payoffs as wagers. 
We consider a variant of WSWM with a scaled linear scoring rule and linear pooling, and show that the resulting dynamics admit connections to gradient-based learning, including equivalence to multiplicative weights updates, and that the induced aggregation recovers Bayesian model averaging~\citep{hoeting1999bayesian} with improved regret guarantees over standard WSWM formulations~\citep{Freeman2020no}. 
The formal statements and proofs are deferred to Appendix~\ref{apx:warmup}.
\section{Design of Wagering Mechanisms for LLM Aggregation}\label{sec:mechanism}
%
This section presents our proposed \textit{advantage-aligned wagering mechanism for LLM aggregation (WALLA)}.
We start by analyzing our mechanism under the \emph{one-shot, single-question} setting (Sections~\ref{sec:oneshot_setting}--\ref{sec:our_mech}), where each agent holds a belief and acts to maximize her expected net payoff.
We characterize the equilibrium prediction and wager, and establish incentive and structural properties of the mechanism. 
Section~\ref{sec:learn_wager} then extends to the \textit{multiple-question} setting, formalizing how LLM agents can learn wager policies from observed payouts as a function of their hidden states, which encode the question and any context.

%
\subsection{Setting, Information Model, and Strategies}
\label{sec:oneshot_setting}
We now formalize the setting, information model, and agent strategy.
Let $(\Omega, \mathcal{F})$ be a measurable space, where $\Omega$ is the set of possible states of the world and $\mathcal{F}$ is the $\sigma$-algebra of measurable events.
Each state $\omega \in \Omega$ encodes the question $X$, the outcome $Y \in \mathcal{Y}$, and all agents' private signals $(S_1, \ldots, S_M)$. 
The private signal $S_i$ represents additional information available to agent $i$ beyond the question itself, 
such as outputs from tool calls or extra context. 
The information available to agent $i$ is captured by $\mathcal{F}_i = \sigma(X, S_i) \subseteq \mathcal{F}$: all agents observe the question $X$, but may differ in their private signals $S_i$.
%
Each agent $i$ holds a prior belief $\Q_i$, a probability measure on $(\Omega, \mathcal{F})$ that may differ from the data-generating probability measure $\PP$.
When agents share a common prior, we write $\Q_i = \Q$ for all $i$.

A strategy for agent $i$ is a pair $(p_i, w_i)$, where $p_i\colon \Omega \to \Delta(\mathcal{Y})$ is a probabilistic prediction and $w_i\colon \Omega \to \mathbb{R}_{\geq 0}$ is a wager. 
Both are required to be $\mathcal{F}_i$-measurable: they depend on $\omega$ only through the question $X$ and agent $i$'s private signal $S_i$.%
\footnote{For LLMs, $\mathcal{F}_i$-measurability is automatic: for fixed model weights (e.g., during inference), the model's output depends only on input $(X, S_i)$.}
All agents are assumed \textit{risk-neutral}, optimizing expected net payout on each individual question.
For notational simplicity, we suppress the explicit dependence on $\omega \in \Omega$ where the context is clear, using $p_i$, $w_i$, and $y$ to denote the prediction $p_i(\omega)$, wager $w_i(\omega)$, and realized outcome $Y(\omega)$ respectively.

%

\subsection{Advantage-Aligned Wagering Mechanisms}
\label{sec:our_mech}

We discuss three components of our design of mechanisms: (1)~a scoring rule $s$ that evaluates individual prediction, (2)~a net payout function that determines each agent's \textit{net} payoff, and (3)~an aggregation rule that combines predictions into a single forecast.
The net payout function is the main object of design.\looseness=-1

%

\begin{definition}[Net Payout Function]\label{def:mechanism}
Given a strictly proper scoring rule $s$ and a constant $c_3 > 0$, the \textit{net} payout to agent $i$ upon the realization of outcome $y \in \mathcal{Y}$ is
\begin{equation}
\label{eq:ourpayout}
\pi_i(\mathbf{p}, \mathbf{w}, y) := w_i\!\left(s(p_i, y) - b_{-i}(\mathbf{p}_{-i}, \mathbf{w}_{-i}, y) - c_3 w_i\right),
\end{equation}
where $b_{-i}\colon \Delta(\mathcal{Y})^{M-1} \times \mathbb{R}_{\geq 0}^{M-1} \times \mathcal{Y} \to \mathbb{R}$ is a baseline score
: it depends only on other agents' predictions and wagers, not on $(p_i, w_i)$. We refer to $b_{-i}$ as the \emph{leave-one-out baseline} and $s(p_i, y) - b_{-i}$ as the comparative advantage. 
\end{definition}

The specific form of $b_{-i}(\cdot)$ is left free; Section~\ref{sec:variants} instantiates two concrete choices that trade off normality with no-arbitrage.
Since $w_i$ is $\mathcal{F}_i$-measurable, it factors out of $\mathbb{E}[\cdot \mid \mathcal{F}_i]$ under any probability measure.
The expected net payout decomposes as
\begin{equation}\label{eq:expected-payout}
\mathbb{E}_{\Q_i}[\pi_i \mid \mathcal{F}_i] 
= w_i \bigg(\underbrace{\,\mathbb{E}_{\Q_i}[s(p_i, Y) \mid \mathcal{F}_i]}_{\text{self-score}} 
- \underbrace{\,\mathbb{E}_{\Q_i}[b_{-i} \mid \mathcal{F}_i]}_{\text{leave-one-out baseline}}\bigg) 
- \underbrace{c_3\,w_i^2}_{\text{regularization}}.
\end{equation}



We show the other two main properties of this net payout function: the first establishes the optimal prediction for \textit{any} wager; the second establishes that the best-response wager elicits the agent's \emph{expected score advantage}.

\begin{theorem}[DSIC of Prediction]
\label{thm:dsic}
Under the net payout function~\eqref{eq:ourpayout}, truthful prediction is a dominant strategy: each agent maximizes her expected net payoff by reporting her own belief about $Y$, regardless of opponents' strategies and her own wager. 
Formally, for any belief $\Q_i$ held by agent $i$, any wager profile $\mathbf{w} \in \mathbb{R}_{\geq 0}^M$, and any $\mathbf{p}_{-i} \in \Delta(\mathcal{Y})^{M-1}$,
\[
p^*_i := \Q_i(Y = \cdot \mid \mathcal{F}_i) \;\in\; \arg\max_{p_i \in \Delta(\mathcal{Y})} \mathbb{E}_{\Q_i}\!\left[\pi_i\!\left((p_i, \mathbf{p}_{-i}), \mathbf{w}, Y\right) \mid \mathcal{F}_i\right],
\]
and the maximizer is unique when $w_i > 0$. 
\end{theorem}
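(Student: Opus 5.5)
The plan is to use the decomposition \eqref{eq:expected-payout} to separate out everything that depends on $p_i$, and then appeal to strict propriety of $s$. Fix agent $i$, her belief $\Q_i$, the wager profile $\mathbf{w}$, and the opponents' reports $\mathbf{p}_{-i}$. The opponents' reports may be arbitrary random elements, since no measurability restriction is placed on them from $i$'s perspective. By Definition~\ref{def:mechanism}, $b_{-i}$ depends only on $(\mathbf{p}_{-i},\mathbf{w}_{-i},Y)$ and not on $p_i$. Since $w_i$ is $\mathcal{F}_i$-measurable, it factors out of the conditional expectation. Hence
\[
\mathbb{E}_{\Q_i}[\pi_i \mid \mathcal{F}_i] = w_i\,\mathbb{E}_{\Q_i}[s(p_i,Y)\mid\mathcal{F}_i] - w_i\,\mathbb{E}_{\Q_i}[b_{-i}\mid\mathcal{F}_i] - c_3 w_i^2 ,
\]
and the last two terms are constants with respect to the choice of $p_i$. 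This holds whatever the dependence between $b_{-i}$ and $Y$ or $S_i$ under $\Q_i$. That is exactly why no immutable-belief assumption is needed: the baseline is additively separable from $i$'s report and never multiplies anything depending on $p_i$.

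The remaining step is to maximize $w_i\,\mathbb{E}_{\Q_i}[s(p_i,Y)\mid\mathcal{F}_i]$ over $\mathcal{F}_i$-measurable $p_i$. Since $p_i$ is $\mathcal{F}_i$-measurable, we can write
\[
\mathbb{E}_{\Q_i}[s(p_i,Y)\mid\mathcal{F}_i] = \sum_{y} q_i^*(y)\, s(p_i,y), \qquad q_i^* := \Q_i(Y=\cdot\mid\mathcal{F}_i),
\]
pointwise in $\omega$ (a.s.). Here $\mathcal{Y}$ is taken finite, as implicit in the Brier example; for general $\mathcal{Y}$ we would use a regular conditional distribution, which exists for standard Borel $\mathcal{Y}$. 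Conditioning on $\mathcal{F}_i$ thus freezes $p_i$ at a fixed distribution $p$, and the objective becomes $\mathbb{E}_{Y\sim q_i^*}[s(p,Y)]$. By strict propriety this is maximized uniquely at $p=q_i^*$. Multiplying by $w_i\ge 0$ preserves the argmax, so $p_i^*=q_i^*$ is a maximizer. When $w_i>0$ it is the unique maximizer, while for $w_i=0$ every $p_i$ is optimal.

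The main obstacle I expect is the measure-theoretic bookkeeping in this last step, namely justifying that a conditional expectation of $s(p_i,Y)$ with $\mathcal{F}_i$-measurable $p_i$ equals the score evaluated in expectation under the regular conditional law. I would handle this with the standard "freezing" lemma for conditional expectations, for which it suffices that $s$ be jointly measurable and either bounded (as the Brier score is) or integrable. Uniqueness then holds almost surely: any other maximizer agrees with $p_i^*$ $\Q_i$-a.s. on $\{w_i>0\}$.
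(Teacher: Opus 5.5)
Your proposal is correct and matches the paper's proof. Both use the decomposition \eqref{eq:expected-payout} to observe that the baseline and regularization terms are constant in $p_i$, then invoke strict properness of $s$ to get the unique maximizer $\Q_i(Y=\cdot\mid\mathcal{F}_i)$ when $w_i>0$, with $w_i=0$ trivial; your extra measure-theoretic care (the freezing lemma, regular conditional laws, a.s.\ uniqueness) only makes explicit what the paper leaves implicit.
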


\begin{proof}
From the decomposition~\eqref{eq:expected-payout}, since $b_{-i}$ does not depend on $p_i$, when $w_i > 0$, the optimization reduces to
\[
p_i^* \in \arg\max_{{p}_i \in \Delta(\mathcal{Y})} \mathbb{E}_{\Q_i}[s({p}_i, Y) \mid \mathcal{F}_i].
\]
By strict properness of $s$, this is uniquely attained at $p^*_i := \mathbb{E}_{\Q_i}[\delta_Y \mid \mathcal{F}_i] = \Q_i(Y = \cdot \mid \mathcal{F}_i)$, the agent's belief about $Y$ given her observation of the question $X$ and private signal $S_i$.
When $w_i = 0$, the payout is identically zero regardless of the prediction.
\end{proof}

Two features deserve emphasis. \textit{First}, our DSIC holds under any belief structure, including mutable beliefs: if an agent revises $\Q_i$ after observing others' reports, the updated posterior remains optimal. This contrasts with WSWM, where the scaling factor $\sum_{j\ne i} w_j/\sum_{j} w_j$ correlates with $Y$ under mutable beliefs, producing an $O(1/M)$ bias~\citep{LAMBERT2015axiom}. \textit{Second}, the optimal prediction does not depend on the agent's own wager ($w_i > 0$), so a learned (or even suboptimal) wager does not distort the prediction incentive. Together with the property that the best-response wager is well defined for any fixed prediction (Theorem~\ref{thm:swa}), this decoupling enables the learning pipeline of Section~\ref{sec:learn_wager}. 

\begin{theorem}[Advantage--Wager Alignment]
\label{thm:swa}
Under the net payout function~\eqref{eq:ourpayout}, the best-response wager is proportional to the agent's expected score advantage.
%
Formally, for any belief $\Q_i$ held by agent $i$, any $\mathbf{p} \in \Delta(\mathcal{Y})^M$, and any $\mathbf{w}_{-i} \in \mathbb{R}_{\geq 0}^{M-1}$,
the best-response wager for agent $i$ is
\begin{equation}
\label{eq:opt-wager}
w_i^*(p_i) = \left(\frac{A_i}{2c_3}\right)^+ ; \quad A_i : = \mathbb{E}_{\Q_i}[s(p_i, Y) - b_{-i} \mid \mathcal{F}_i],
\end{equation}
where $(x)^+ := \max(x, 0)$ and $A_i$ denotes agent $i$'s \emph{expected score advantage} over the baseline.


\begin{proof}
For any fixed $p_i$, $\mathbf{p}_{-i}$, and $\mathbf{w}_{-i}$, the expected net payout~\eqref{eq:expected-payout} as a function of $w_i \in \mathbb{R}_{\ge 0}$ is
$
\mathbb{E}_{\Q_i}[\pi_i \mid \mathcal{F}_i] = w_i\,A_i - c_3\,w_i^2.
$

Since $\frac{\partial^2}{\partial w_i^2}\mathbb{E}_{\Q_i}[\pi_i \mid \mathcal{F}_i] = -2c_3 < 0$, this is strictly concave. The unique unconstrained maximizer is $w_i = A_i/(2c_3)$, and the constrained maximizer over $w_i \geq 0$ is $w_i^*(p_i) = (A_i/(2c_3))^+$. 
\end{proof}

%
\end{theorem}

Three features follow from Theorem~\ref{thm:swa}. \textit{First}, agents with $A_i \leq 0$ wager zero and are automatically excluded from aggregation. \textit{Second}, the best-response wager is well defined for any prediction $p_i$, not just the optimal one, allowing the wagers to be learned separately without altering the base model's predictions (more in Section~\ref{sec:learn_wager}). \textit{Third}, under the Brier score and truthful prediction, $\mathbb{E}_{\Q_i}[s(p_i, Y) \mid \mathcal{F}_i] = c_1 - c_2\,\mathrm{Var}_{\Q_i}(\delta_Y \mid \mathcal{F}_i)$, so higher predictive uncertainty reduces the expected self-score and hence the wager.\footnote{Overconfident models underestimate this variance and may over-wager; calibration on realized outcomes can help (Sec.~\ref{sec:additional_exp}).} 


Furthermore, our mechanism satisfies individual rationality in the interim sense: after observing the question and her private signal, an agent prefers participating to not participating.

\begin{definition}[Interim Individual Rationality]\label{def:iir} The expected net payoff of participating upon receiving the question and private signal is nonnegative. Formally, $\forall i \in \mathcal{M}$, any $ \Q_i$ and $\mathcal{F}_i$, 
there exist some $p_i \in \Delta (\mathcal{Y})$ and $w_i\in \mathbb{R}_{\ge 0}$ such that for all $\mathbf{p}_{-i} \in \Delta(\mathcal{Y})^{M-1}$ and $\mathbf{w}_{-i} \in \mathbb{R}_{\ge 0}^{M-1}$,
\[ \mathbb{E}_{\Q_i}[\pi_i((p_i, \mathbf{p}_{-i}), (w_i, \mathbf{w}_{-i}), Y) \mid \mathcal{F}_i] \ge 0. \]
    
\end{definition}

 Interim IR (Def.~\ref{def:iir}) is satisfied because for arbitrary $p_i$, $w_i^*(p_i)$ guarantees non-negative expected net payout $\mathbb{E}_{\Q_i}[\pi_i \mid \mathcal{F}_i] = A_i w_i^*(p_i)  - c_3(w_i^*(p_i))^2 =   (A^2_i/(4c_3))^+
$.

\paragraph{Equilibrium Characterization.}
Heterogeneous predictions can stem from two sources: differences in information and differences in beliefs. 
We use Theorems~\ref{thm:dsic} and~\ref{thm:swa} to characterize the equilibrium in each extreme case. 
In practice, both sources often appear together. 

\begin{example}[Homogeneous agents with private signals]
\label{ex:homo}
Consider $M$ homogeneous agents sharing a common prior $\Q$ over $(\Omega, \mathcal{F})$. 
Each agent $i$ observes a private signal $S_i$, giving rise to distinct private information $\mathcal{F}_i = \sigma(X, S_i) \neq \mathcal{F}_j$ when $S_i \neq S_j$. 
Agents differ not in their priors but in what they observe: heterogeneous predictions arise from heterogeneous information.
In LLM settings, this corresponds to $M$ copies of the same base model, each receiving a different context for the same question. \looseness=-1

\end{example}
\begin{restatable}[BNE under common prior]{corollary}{bne}
\label{cor:bne}

Suppose agents share a common prior $\Q$ (Example~\ref{ex:homo}). 
Let $p_i^* = \Q(Y = \cdot \mid \mathcal{F}_i)$ (Theorem~\ref{thm:dsic}) and $w_i^*(p_i^*) = (A_i/2c_3)^+$ with $A_i = \mathbb{E}[s(p_i^*, Y) - b_{-i} \mid \mathcal{F}_i]$ (Theorem~\ref{thm:swa}). The strategy profile $\{(p_i^*, w_i^*(p_i^*))\}_{i=1}^M$ is a Bayes--Nash equilibrium: for every agent $i$ and every alternative strategy $(\tilde{p}_i, \tilde{w}_i)$,
\[
\mathbb{E}_{\Q}\!\left[\pi_i\!\left((p_i^*, \mathbf{p}_{-i}^*), (w_i^*(p_i^*), \mathbf{w}_{-i}^*), Y\right) \mid \mathcal{F}_i\right] \;\geq\; \mathbb{E}_{\Q}\!\left[\pi_i\!\left((\tilde{p}_i, \mathbf{p}_{-i}^*), (\tilde{w}_i, \mathbf{w}_{-i}^*), Y\right) \mid \mathcal{F}_i\right].
\]
\end{restatable}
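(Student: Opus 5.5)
The plan is to fix the opponents' strategies at $(\mathbf{p}^*_{-i},\mathbf{w}^*_{-i})$ and show that agent $i$'s expected payoff, conditional on $\mathcal{F}_i$, is maximized at $(p_i^*,w_i^*(p_i^*))$. The maximization is jointly over $(\tilde p_i,\tilde w_i)$, and I would split it into two nested steps. The first step is the wager for a fixed prediction. The second step is the prediction itself.

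A preliminary point concerns the baseline. The opponents' strategies $p_j^*,w_j^*$ are $\mathcal{F}_j$-measurable random variables, so $b_{-i}(\mathbf{p}^*_{-i},\mathbf{w}^*_{-i},Y)$ is a random variable on $\Omega$. It does not depend on $(\tilde p_i,\tilde w_i)$. Under the common prior $\Q$, its conditional expectation $B_i:=\mathbb{E}_{\Q}[b_{-i}\mid\mathcal{F}_i]$ is therefore a fixed $\mathcal{F}_i$-measurable quantity. This is the one place where the common prior matters: agent $i$ evaluates opponents' equilibrium play under the same $\Q$ used to define $A_i$, so the $A_i$ in the statement is exactly the conditional advantage agent $i$ faces. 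Because $\tilde w_i$ is $\mathcal{F}_i$-measurable, the decomposition \eqref{eq:expected-payout} gives
\[
\mathbb{E}_{\Q}[\pi_i\mid\mathcal{F}_i]=\tilde w_i\big(\mathbb{E}_{\Q}[s(\tilde p_i,Y)\mid\mathcal{F}_i]-B_i\big)-c_3\tilde w_i^2 .
\]

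\textbf{Step 1 (wager for a fixed prediction).} Fix $\tilde p_i$ and write $\tilde A_i:=\mathbb{E}_{\Q}[s(\tilde p_i,Y)\mid\mathcal{F}_i]-B_i$. By Theorem~\ref{thm:swa}, the payoff satisfies
\[
\tilde w_i\tilde A_i-c_3\tilde w_i^2\le \big((\tilde A_i)^+\big)^2/(4c_3).
\]

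\textbf{Step 2 (prediction).} By Theorem~\ref{thm:dsic}, $\mathbb{E}_{\Q}[s(\tilde p_i,Y)\mid\mathcal{F}_i]\le \mathbb{E}_{\Q}[s(p_i^*,Y)\mid\mathcal{F}_i]$, and hence $\tilde A_i\le A_i$. The map $a\mapsto (a^+)^2/(4c_3)$ is nondecreasing, so
\[
\big((\tilde A_i)^+\big)^2/(4c_3)\le (A_i^+)^2/(4c_3).
\]
The right-hand side is exactly the payoff attained by $(p_i^*,w_i^*(p_i^*))$, as computed in the interim IR paragraph. Chaining Steps 1 and 2 gives the displayed inequality for every alternative strategy.

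The main obstacle is measure-theoretic. Pointwise maximization over $\mathcal{F}_i$-measurable choices must yield an admissible strategy, which requires that $p_i^*$ and $w_i^*$ be $\mathcal{F}_i$-measurable and that the needed expectations be finite. Measurability holds because both are conditional expectations, or monotone functions of them. Finiteness follows from assuming $s$ and $b_{-i}$ are bounded (as with the Brier score) or integrable. All inequalities above hold $\Q$-almost surely, and conditioning on $\mathcal{F}_i$ is compatible with every agent doing the same simultaneously, which is what makes the profile a Bayes--Nash equilibrium rather than merely a best response for one agent.
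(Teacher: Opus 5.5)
\paragraph{Verification step.} Your best-response argument is correct and follows the same route as the paper. You fix opponents at $(\mathbf{p}^*_{-i},\mathbf{w}^*_{-i})$, bound the payoff of any fixed $\tilde p_i$ by $((\tilde A_i)^+)^2/(4c_3)$ via Theorem~\ref{thm:swa}, and use Theorem~\ref{thm:dsic} to get $\tilde A_i\le A_i$. Your Step~2 is more careful than the paper's, which simply asserts that because $p_i^*$ maximizes $A_i$ the pair is a joint maximizer. The monotonicity of $a\mapsto (a^+)^2/(4c_3)$ is exactly what justifies that inference. Your value $(A_i^+)^2/(4c_3)$ is also the correct form of what the interim-IR paragraph writes as $(A_i^2/(4c_3))^+$.

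\paragraph{Gap: existence of the profile.} What you omit is the step the paper's proof ends with: showing that the profile exists at all. The wagers in the statement are not defined independently of one another. $A_i$ involves $b_{-i}$, hence $\mathbf{w}^*_{-i}$, while each $w_j^*$ is defined through $b_{-j}$, hence through $w_i^*$. So ``let $w_i^*(p_i^*)=(A_i/2c_3)^+$'' is a system of $M$ coupled equations, and the profile is a fixed point of the joint wager best-response map. Your argument shows that every solution of this system is an equilibrium, not that a solution exists.

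Your closing remark that conditioning on $\mathcal{F}_i$ is ``compatible with every agent doing the same simultaneously'' does not address this. Measurability is not the issue; mutual consistency of the wagers is. The predictions $p_i^*$, by contrast, do not depend on opponents and raise no such problem.

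\paragraph{How the paper closes it.} The paper uses Brouwer. For bounded $s$ one has $A_i\le\overline{s}-\underline{s}$, so the best-response map sends $[0,(\overline{s}-\underline{s})/(2c_3)]^M$ into itself, and continuity gives a fixed point. If you add this step, do it with more care than that one line, for two reasons.
\begin{itemize}
\item Wagers are $\mathcal{F}_i$-measurable functions of $(X,S_i)$, so the domain is really a product of function spaces. It is finite-dimensional only when the signals take finitely many values; otherwise a Schauder-type theorem is needed.
\item Both baselines are undefined when $\sum_{j\neq i}w_j=0$. Continuity on the closed box therefore needs a convention on that set, or a correspondence-based fixed-point argument.
\end{itemize}
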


 \begin{proof} See Appendix~\ref{proof:BNE}.
 \end{proof}

\begin{example}[Heterogeneous agents]
\label{ex:hetero}
Consider $M$ agents holding beliefs $\Q_1, \ldots, \Q_M$ that may differ.
No assumptions are made regarding their relationship; the measures are permitted to be mutually singular, perfectly absolutely continuous, or anywhere in between.
Agent $i$ does not know agent $j$'s belief $\Q_j$. 
Heterogeneous predictions arise not from different information but from different perspectives on the same input.
%
In the LLM setting, this corresponds to $M$ distinct models---differing in architecture, training data, or fine-tuning procedure---each observing the same question with no additional context ($S_i = \emptyset$, so $\mathcal{F}_i = \sigma(X) = \mathcal{F}_j$). 

\end{example}

\begin{restatable}[NE under heterogeneous beliefs]{corollary}{nashe}
\label{cor:ne}
Suppose each agent $i$ holds her own belief $\Q_i$ (Example~\ref{ex:hetero}). Let $p_i^* = \Q_i(Y = \cdot \mid \mathcal{F}_i)$ (Theorem~\ref{thm:dsic}) and $w_i^*(p_i^*) = (A_i/2c_3)^+$ with $A_i = \mathbb{E}_{\Q_i}[s(p_i^*, Y) - b_{-i} \mid \mathcal{F}_i]$ (Theorem~\ref{thm:swa}). The strategy profile $\{(p_i^*, w_i^*(p_i^*))\}_{i=1}^M$ is a Nash equilibrium: for every agent $i$ and every alternative strategy $(\tilde{p}_i, \tilde{w}_i)$,
\[
\mathbb{E}_{\Q_i}\!\left[\pi_i\!\left((p_i^*, \mathbf{p}_{-i}^*), (w_i^*(p_i^*), \mathbf{w}_{-i}^*), Y\right) \mid \mathcal{F}_i\right] \;\geq\; \mathbb{E}_{\Q_i}\!\left[\pi_i\!\left((\tilde{p}_i, \mathbf{p}_{-i}^*), (\tilde{w}_i, \mathbf{w}_{-i}^*), Y\right) \mid \mathcal{F}_i\right].
\]\end{restatable}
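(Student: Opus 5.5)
The plan is to verify the Nash condition agent by agent, holding the other agents' strategies $(\mathbf{p}_{-i}^*, \mathbf{w}_{-i}^*)$ fixed. Fixing the opponents makes the leave-one-out baseline $b_{-i}(\mathbf{p}_{-i}^*, \mathbf{w}_{-i}^*, Y)$ a well-defined random variable that does not depend on agent $i$'s choice $(\tilde p_i, \tilde w_i)$. Agent $i$ evaluates her expected payout under her own belief $\Q_i$ conditional on $\mathcal{F}_i$. She does not need to know the $\Q_j$; she only needs some $\Q_i$-conditional expectation of $b_{-i}$, which Example~\ref{ex:hetero} allows without restriction. I would then reduce the deviation problem to a two-stage maximization.

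\textbf{Step 1 (decomposition).} Since $\tilde w_i$ is $\mathcal{F}_i$-measurable, the decomposition~\eqref{eq:expected-payout} gives, for any alternative strategy,
\[
\mathbb{E}_{\Q_i}[\pi_i \mid \mathcal{F}_i] = \tilde w_i\,\tilde A_i - c_3\tilde w_i^2, \qquad \tilde A_i := \mathbb{E}_{\Q_i}[s(\tilde p_i,Y) - b_{-i}\mid\mathcal{F}_i].
\]
Call this quantity $V(\tilde p_i,\tilde w_i)$.

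\textbf{Step 2 (optimize the wager for a fixed prediction).} By Theorem~\ref{thm:swa}, for every fixed $\tilde p_i$ we have
\[
V(\tilde p_i,\tilde w_i)\le V(\tilde p_i, w_i^*(\tilde p_i)) = \tfrac{((\tilde A_i)^+)^2}{4c_3}.
\]

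\textbf{Step 3 (optimize the prediction).} The expected baseline $\mathbb{E}_{\Q_i}[b_{-i}\mid\mathcal{F}_i]$ is the same for every $\tilde p_i$. So $\tilde A_i$ is maximized exactly when $\mathbb{E}_{\Q_i}[s(\tilde p_i,Y)\mid\mathcal{F}_i]$ is maximized. By strict properness, which is the same step used in Theorem~\ref{thm:dsic}, the maximizer is $p_i^*=\Q_i(Y=\cdot\mid\mathcal{F}_i)$, and hence $\tilde A_i\le A_i$. The map $a\mapsto (a^+)^2/(4c_3)$ is nondecreasing, so
\[
V(\tilde p_i,\tilde w_i)\le \tfrac{(A_i^+)^2}{4c_3}=V(p_i^*, w_i^*(p_i^*)).
\]
This is the required inequality.

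The main obstacle is Step 3. Theorem~\ref{thm:dsic} is phrased for a fixed wager, but the deviation changes the prediction and the wager together. That is why I bound through the optimal-wager value function and its monotonicity in $A$, rather than applying Theorem~\ref{thm:dsic} at the single wager $\tilde w_i$. A secondary point needs care: the baseline's conditional expectation is taken under $\Q_i$ even though the opponents' strategies $p_j^*, w_j^*$ come from their beliefs $\Q_j$. This needs only that $b_{-i}$ is a measurable function of $\omega$, which holds because $p_j^*, w_j^*$ are $\mathcal{F}_j$-measurable, and that it is $\Q_i$-integrable. I would assume integrability, for example by taking $b_{-i}$ bounded, as with the Brier score. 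Because the argument holds for every $\Q_i$ and every fixed opponent profile, the equilibrium conclusion follows immediately.
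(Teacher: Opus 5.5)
Your proposal is correct and follows the paper's proof: fix the opponents' profile, use strict properness so that $p_i^*$ maximizes $A_i$ (as in Theorem~\ref{thm:dsic}), then apply the wager best response of Theorem~\ref{thm:swa}. Your monotone value function $(a^+)^2/(4c_3)$ makes explicit the paper's one-line claim of joint maximization. The chain $V(\tilde p_i,\tilde w_i)\le V(p_i^*,\tilde w_i)\le V(p_i^*,w_i^*(p_i^*))$ also works directly, so the ``obstacle'' you describe is not a real one. The paper additionally remarks that such a mutually consistent wager profile exists (via Brouwer for bounded $s$), which your argument, like the statement itself, takes as given.
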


  \begin{proof} See Appendix~\ref{proof:NE}.
  \end{proof}

\subsubsection{Two Variants of Leave-One-Out Score Baseline}
\label{sec:variants}
 
Theorems~\ref{thm:dsic} and~\ref{thm:swa} hold for any leave-one-out baseline $b_{-i}$ that does not depend on $(p_i, w_i)$.
We first instantiate two concrete choices that make different design tradeoffs, and then show that both variants admit a worst-case deficit bound independent of the number of participants and the amount wagered.
 
\begin{definition}[Weighted-Score Baseline of \proj I]\label{def:variant1}
The baseline is the wager-weighted average of other agents' scores:
\begin{equation}\label{eq:variant1}
b_{-i}^{\mathrm{I}}(\mathbf{p}_{-i}, \mathbf{w}_{-i}, y) := \frac{\sum_{j \neq i} w_j\, s(p_j, y)}{\sum_{j \neq i} w_j}.
\end{equation}
\end{definition}
 
\begin{definition}[Weighted-Prediction Baseline of \proj II]\label{def:variant2}
The baseline evaluates the scoring rule at the wager-weighted pool of other agents' predictions:
\begin{equation}\label{eq:variant2}
b_{-i}^{\mathrm{II}}(\mathbf{p}_{-i}, \mathbf{w}_{-i}, y) := s(q_{-i}, y), \qquad q_{-i} := \frac{\sum_{j \neq i} w_j\, p_j}{\sum_{j \neq i} w_j} \in \Delta(\mathcal{Y}).
\end{equation}
\end{definition}
 
In short, \proj~I averages scores, whereas \proj~II scores an average prediction. 
The two variants trade off normality (Def.~\ref{def:normality}) and no-arbitrage (Def.~\ref{def:noarb}).
 %
These two properties are in tension under known wagering mechanism designs. 
~\citet{LAMBERT2015axiom} showed that WSWM satisfies normality, while \citet{Chen2014Removing} showed it admits arbitrage and proposed the no-arbitrage wagering mechanism (NAWM), which eliminates arbitrage but sacrifices normality. 
Our two variants exhibit the same tradeoff. We first adapt the definition of normality (Def.~\ref{def:normal_old}) for our information model.

\begin{definition}[Normality]
\label{def:normality}
A mechanism satisfies normality if, for every $i \neq j \in \mathcal{M}$, any belief $\Q_i$ over the measurable space $(\Omega, \mathcal{F})$, $\mathcal{F}_i \in \mathcal{F}$, $\mathbf{w} \in \mathbb{R}_{\geq 0}^M$ and $\mathbf{p} \in \Delta(\mathcal{Y})^M$, defining $\mathbf{p}'$ by $p_j' = p^*_i := \Q_i(Y = \cdot \mid \mathcal{F}_i)$ and $p_k' = p_k$ for $k \neq j$,
\[
\mathbb{E}_{\Q_i}[\pi_i(\mathbf{p}, \mathbf{w}, Y) \mid \mathcal{F}_i] \;\geq\; \mathbb{E}_{\Q_i}[\pi_i(\mathbf{p}', \mathbf{w}, Y) \mid \mathcal{F}_i],
\]
with strict inequality whenever $p_j \neq \Q_i(Y = \cdot \mid \mathcal{F}_i), w_i > 0, \text{ and } w_j > 0$. That is, if from agent $i$'s perspective (i.e., $\Q_i$), the prediction of another agent $j$ improves, then agent $i$'s expected net payoff does not increase.
\end{definition}

We defer proofs of this section to Appendix~\ref{proof:variants}.

\begin{restatable}[Normality of \proj I]{proposition}{normality}
\label{thm:normality}
\proj~I satisfies normality. 
\end{restatable}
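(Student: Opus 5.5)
The plan is to compute directly how agent $i$'s expected net payout depends on $p_j$ under \proj~I, and then use strict properness of $s$ with respect to agent $i$'s belief $p^*_i = \Q_i(Y=\cdot\mid\mathcal{F}_i)$.

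Fix $i \neq j$, a belief $\Q_i$, a wager profile $\mathbf{w}$, and a prediction profile $\mathbf{p}$. Form $\mathbf{p}'$ by replacing $p_j$ with $p^*_i$. In the payout~\eqref{eq:ourpayout} with baseline~\eqref{eq:variant1}, only the baseline term depends on $p_j$. The terms $w_i s(p_i,y)$ and $c_3 w_i^2$ are identical under $\mathbf{p}$ and $\mathbf{p}'$. Since all predictions and wagers are treated as fixed elements of $\Delta(\mathcal{Y})$ and $\mathbb{R}_{\ge0}$, they factor out of $\mathbb{E}_{\Q_i}[\cdot\mid\mathcal{F}_i]$. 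This gives
\[
\mathbb{E}_{\Q_i}[\pi_i(\mathbf{p},\mathbf{w},Y)\mid\mathcal{F}_i] - \mathbb{E}_{\Q_i}[\pi_i(\mathbf{p}',\mathbf{w},Y)\mid\mathcal{F}_i]
= \frac{w_i\, w_j}{\sum_{k\neq i} w_k}\Big(\mathbb{E}_{\Q_i}[s(p^*_i,Y)\mid\mathcal{F}_i] - \mathbb{E}_{\Q_i}[s(p_j,Y)\mid\mathcal{F}_i]\Big).
\]
The sum over $k\ne j,i$ in the baseline numerator cancels, and the denominator is unchanged because wagers are not altered.

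The key step is the sign of the bracket. The conditional law of $Y$ under $\Q_i$ given $\mathcal{F}_i$ is exactly $p^*_i$, so the bracket equals $\mathbb{E}_{Y\sim p^*_i}[s(p^*_i,Y)] - \mathbb{E}_{Y\sim p^*_i}[s(p_j,Y)]$. By strict properness this is $\ge 0$, and it is strictly positive when $p_j \neq p^*_i$. The prefactor $w_iw_j/\sum_{k\ne i}w_k$ is nonnegative, and strictly positive when $w_i>0$ and $w_j>0$, since then the denominator is at least $w_j>0$. Together these give the weak inequality in general and the strict inequality under the stated conditions.

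The main obstacle is the degenerate case $\sum_{k\neq i} w_k = 0$, where the baseline~\eqref{eq:variant1} is undefined. I would handle it by adopting a convention that fixes $b^{\mathrm{I}}_{-i}$ to a value independent of $p_j$, for example $0$ or a fixed constant. Under any such convention both expectations coincide, so the weak inequality holds with equality. The strict case cannot arise here, because it requires $w_j>0$. Beyond this, I only need to justify pulling the wagers and the fixed predictions out of the conditional expectation, which is immediate.
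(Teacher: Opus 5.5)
Your proposal is correct and follows essentially the same argument as the paper. Only the baseline $b^{\mathrm{I}}_{-i}$ depends on $p_j$, and by strict properness under $\Q_i$, replacing $p_j$ with $p_i^*$ weakly raises agent $j$'s expected score, strictly when $p_j\neq p_i^*$, and hence raises the baseline and lowers agent $i$'s expected net payout. Your explicit difference $\frac{w_i w_j}{\sum_{k\neq i} w_k}(\mathbb{E}_{\Q_i}[s(p_i^*,Y)\mid\mathcal{F}_i]-\mathbb{E}_{\Q_i}[s(p_j,Y)\mid\mathcal{F}_i])$ and your convention for $\sum_{k\neq i}w_k=0$ only make precise what the paper's brief proof leaves implicit; the paper handles $w_i=0$ or $w_j=0$ separately.
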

 

\begin{restatable}[Arbitrage in \proj I]{proposition}{arbone}
\label{prop:arb1}
\proj~I does not satisfy no-arbitrage. 
When predictions from other agents are not all identical, there exists a report $\hat{p}_i$ and a sufficiently small wager $w_i$ such that $\pi_i > 0$ for all outcomes $y \in \mathcal{Y}$. 
\end{restatable}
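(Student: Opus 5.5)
Our approach is to exhibit the arbitrage directly. Fix agent $i$ and the reports $\mathbf{p}_{-i},\mathbf{w}_{-i}$ of the others, with $W:=\sum_{j\neq i} w_j>0$. The baseline $b^{\mathrm{I}}_{-i}(y)$ is then a fixed function of $y$, namely the wager-weighted average of the scores $s(p_j,y)$. Since $\pi_i = w_i\big(s(\hat p_i,y)-b^{\mathrm{I}}_{-i}(y)-c_3 w_i\big)$, for $w_i>0$ the payout is positive at every $y$ exactly when
\[
s(\hat p_i,y)-b^{\mathrm{I}}_{-i}(y) > c_3 w_i \quad \text{for all } y\in\mathcal{Y}.
\]
Because $\mathcal{Y}$ is finite, it suffices to find $\hat p_i$ with $\delta:=\min_y\big(s(\hat p_i,y)-b^{\mathrm{I}}_{-i}(y)\big)>0$, and then take any $w_i\in(0,\delta/c_3)$.

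The natural choice is $\hat p_i := q_{-i}=\sum_{j\ne i} w_j p_j/W$, the wager-weighted pool of the others' predictions (the construction of \citet{Chen2014Removing}). We then need the pointwise inequality
\[
s(q_{-i},y) > \sum_{j\neq i}\frac{w_j}{W}\,s(p_j,y) \quad \text{for every } y.
\]
For the Brier score this holds because $p\mapsto s(p,y)=c_1-c_2\|p-\delta_y\|_2^2$ is strictly concave in $p$ for each fixed $y$. Jensen's inequality then gives $\ge$, with equality only if all $p_j$ carrying positive weight coincide. The gap is explicit:
\[
\sum_j \frac{w_j}{W}\|p_j-\delta_y\|^2-\|q_{-i}-\delta_y\|^2=\sum_j\frac{w_j}{W}\|p_j-q_{-i}\|^2,
\]
which does not depend on $y$ and is strictly positive whenever the positively weighted predictions are not all identical.

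The main obstacle is pinning down the hypothesis and the generality. The statement says ``predictions from other agents are not all identical.'' This must be read as referring to agents with $w_j>0$; agents with zero wager do not enter $b^{\mathrm{I}}_{-i}$ and would only be handled by a degenerate convention. We also need a strictly proper $s$ that is pointwise strictly concave in $p$. This holds for the Brier score, which the paper uses, and for the log score. For a general strictly proper rule, pointwise concavity can fail, so we would state the result for the Brier score, or for rules whose score is strictly concave in $p$ for each $y$. Finally, to show that no-arbitrage fails it suffices to exhibit one such configuration, for instance two agents with distinct predictions and positive wagers. This yields the proposition.
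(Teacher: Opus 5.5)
Your proof is correct for the Brier score and follows the paper's skeleton. Both find a report whose score strictly exceeds $b^{\mathrm{I}}_{-i}(y)$ at every outcome, then take $w_i$ below $\delta_{\min}/c_3$ so the $-c_3 w_i$ term cannot consume the margin. The difference is the existence step. The paper imports it from Theorem~3.3 of \citet{Chen2014Removing} for a general strictly proper $s$. You instead construct $\hat p_i=q_{-i}$ and verify it through the identity $\sum_{j\neq i}\frac{w_j}{W_{-i}}\|p_j-\delta_y\|_2^2-\|q_{-i}-\delta_y\|_2^2=\sum_{j\neq i}\frac{w_j}{W_{-i}}\|p_j-q_{-i}\|_2^2$, with $W_{-i}=\sum_{j\neq i}w_j$.

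Your route buys a self-contained argument with an explicit, outcome-independent margin $c_2\sum_{j\neq i}\frac{w_j}{W_{-i}}\|p_j-q_{-i}\|_2^2$, and hence an explicit wager threshold. It also correctly sharpens the hypothesis to agents with $w_j>0$, which the paper's ``$p_j\neq p_k$'' ignores. The price is generality, but your restriction on $s$ is more than caution: the claim ``for every non-identical configuration'' fails for some strictly proper rules. Take binary outcomes and the Savage score $s(p,y)=G(p)+g(p)(y-p)$ of the strictly convex $G(q)=|q-\tfrac12|+(q-\tfrac12)^2$, with $g=G'$ off the kink and $g(\tfrac12)=0.9$. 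If the opponents report $0.3$ and $0.7$ with equal wagers, the baseline is $-0.04$ at both outcomes, yet every report scores at most $-0.45$ at one of them. So the cited step needs regularity that the paper leaves implicit, and your explicit Brier construction sidesteps this.

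One slip in your side remark: the log score is not strictly concave in $p$, since $\log p(y)$ depends only on the coordinate $p(y)$. The linear pool therefore ties the baseline at any outcome where all positively-wagered $p_j(y)$ agree, and there $\pi_i<0$. For example, $(0.4,0.4,0.2)$ and $(0.4,0.2,0.4)$ tie at the first outcome. For the log score the right report is the logarithmic pool $\hat p\propto\prod_{j\neq i}p_j^{w_j/W_{-i}}$. For interior predictions it gives the outcome-independent margin $-\log\sum_{y}\prod_{j\neq i}p_j(y)^{w_j/W_{-i}}>0$ by weighted AM--GM, the exact analogue of your Brier identity.
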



 

\begin{restatable}[No-Arbitrage of \proj II]{proposition}{noarb}
\label{thm:noarb}
\proj~II satisfies no-arbitrage.
\end{restatable}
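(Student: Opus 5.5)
We need to show that no agent $i$, prediction profile $\mathbf{p}$, and wager profile $\mathbf{w}$ can make $\pi^{\mathrm{II}}_i(\mathbf{p},\mathbf{w},y) > 0$ for every $y \in \mathcal{Y}$. We argue by contradiction.

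\emph{Trivial cases.} If $w_i = 0$, then $\pi_i \equiv 0$, so there is no arbitrage. Now take $w_i > 0$ and suppose $\pi_i > 0$ for all $y$. Since
\[
\pi_i = w_i\bigl(s(p_i,y) - s(q_{-i},y) - c_3 w_i\bigr),
\]
this is equivalent to
\[
s(p_i,y) - s(q_{-i},y) > c_3 w_i > 0 \qquad \text{for all } y .
\]
(If $\sum_{j\ne i} w_j = 0$, the baseline $q_{-i}$ is undefined. I would follow whatever convention the paper fixes, presumably some fixed default prediction in $\Delta(\mathcal{Y})$ or zero payout. Any fixed element of $\Delta(\mathcal{Y})$ works with the argument below.)

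\emph{Key step.} The pooled prediction $q_{-i}$ is itself a probability distribution in $\Delta(\mathcal{Y})$, because it is a convex combination of the $p_j$. Take expectations of the displayed strict inequality under $Y \sim q_{-i}$. Since the inequality holds pointwise for every outcome, including every $y$ in the support of $q_{-i}$, we get
\[
\mathbb{E}_{Y\sim q_{-i}}[s(p_i,Y)] - \mathbb{E}_{Y\sim q_{-i}}[s(q_{-i},Y)] \;\ge\; c_3 w_i \;>\; 0 .
\]
Properness of $s$ gives $\mathbb{E}_{Y\sim q_{-i}}[s(q_{-i},Y)] \ge \mathbb{E}_{Y\sim q_{-i}}[s(p_i,Y)]$, so the left-hand side is $\le 0$. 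This is a contradiction. Weak properness already suffices, and the regularizer $c_3 w_i$ only makes the required gap larger.

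\emph{Main obstacle.} The delicate points are measure-theoretic rather than conceptual. For finite $\mathcal{Y}$ the expectation is just a finite convex combination of the pointwise inequalities, so the argument is immediate. For general $\mathcal{Y}$, I need $s(\cdot,Y)$ to be integrable under $q_{-i}$. This holds for bounded scores such as Brier; otherwise I would restrict to scoring rules with finite expected scores, as strict properness implicitly assumes. I would also remark why the argument fails for \proj~I: there the baseline is an average of \emph{scores}, not the score of a distribution, so there is no single distribution against which properness can be invoked. This matches Proposition~\ref{prop:arb1}.
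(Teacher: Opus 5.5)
Your argument is correct, but it takes a different route from the paper.

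\textbf{The paper's route.} The proof is a two-step reduction. It invokes Theorem~4.1 of \citet{Chen2014Removing} to assert that the unregularized payout $\tilde{\pi}_i = w_i(s(p_i,y) - s(q_{-i},y))$ admits some $y^*$ with $\tilde{\pi}_i(y^*) \le 0$. It then observes that $\pi_i = \tilde{\pi}_i - c_3 w_i^2 \le \tilde{\pi}_i$.

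\textbf{Your route.} You prove the no-arbitrage property from first principles. You average the hypothesized pointwise inequality under $Y \sim q_{-i}$ and contradict (weak) properness, absorbing the regularizer into the margin $c_3 w_i$.

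\textbf{What your version buys.} It is self-contained and shows exactly what is needed: the baseline is the score of a single distribution $q_{-i}$. As a consequence:
\begin{itemize}
\item weak properness suffices;
\item the losing outcome can be taken in the support of $q_{-i}$, where it in fact yields payout at most $-c_3 w_i^2 < 0$ when $w_i > 0$;
\item you need not check that the cited theorem's mechanism matches this exact leave-one-out, wager-weighted form.
\end{itemize}
It also explains why \proj~I escapes the argument: its baseline is an average of scores rather than the score of an average.

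\textbf{What the paper's version buys.} It is brief, and it isolates a modular fact: subtracting any nonnegative penalty from an arbitrage-free payout keeps it arbitrage-free. This places \proj~II directly alongside the no-arbitrage wagering mechanisms of that literature.

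Your treatment of the degenerate case $\sum_{j \ne i} w_j = 0$, via an arbitrary fixed default in $\Delta(\mathcal{Y})$, covers a point the paper does not address.
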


 

\begin{restatable}[Normality Failure of \proj II]{proposition}{normtwo}
\label{prop:norm2}
\proj~II does not satisfy normality: by adopting $p_i^*$, agent $j$ can change the pooled baseline $q_{-i}$ in a direction that increases player $i$'s expected net payoff. 
\end{restatable}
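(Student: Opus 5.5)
The plan is to disprove normality (Def.~\ref{def:normality}) with an explicit counterexample, using the Brier score (Example~\ref{def:brier}) and a binary outcome space. The mechanism must fail normality for \emph{some} instance, so one suffices. First I would use the decomposition~\eqref{eq:expected-payout} to reduce agent $i$'s expected payoff to a function of the pooled prediction. Agent $i$'s own term $w_i\,\mathbb{E}_{\Q_i}[s(p_i,Y)\mid\mathcal{F}_i] - c_3 w_i^2$ is unaffected when agent $j$ changes her report. Hence the comparison in Def.~\ref{def:normality} depends only on $-w_i\,\mathbb{E}_{\Q_i}[s(q_{-i},Y)\mid\mathcal{F}_i]$, with $q_{-i}$ as in~\eqref{eq:variant2}.

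For the Brier score, the standard bias--variance identity gives
\[
\mathbb{E}_{\Q_i}[s(r,Y)\mid\mathcal{F}_i] = c_1 - c_2\bigl(\mathrm{Var}_{\Q_i}(\delta_Y\mid\mathcal{F}_i) + \|r - p_i^*\|_2^2\bigr)
\]
for any $r\in\Delta(\mathcal{Y})$. So agent $i$'s expected payoff equals a term independent of $\mathbf{p}_{-i}$ plus $w_i c_2 \|q_{-i} - p_i^*\|_2^2$. Normality therefore requires that replacing $p_j$ by $p_i^*$ never moves the pool $q_{-i}$ closer to $p_i^*$. The task reduces to finding a configuration where it does move closer, or more precisely where the pool starts exactly at $p_i^*$ and is pushed away.

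The counterexample uses $M=3$ agents $i,j,k$ and $\mathcal{Y}=\{0,1\}$. Take agent $i$'s belief to be $p_i^*=(1/2,1/2)$, and set $p_j=(1,0)$, $p_k=(0,1)$ with $w_j=w_k=1$ and $w_i>0$. Then $q_{-i}=(1/2,1/2)=p_i^*$, so the baseline is as good as possible from $i$'s perspective. After replacing $p_j$ with $p_i^*$, we get $q_{-i}'=(1/4,3/4)$, and $\|q_{-i}'-p_i^*\|_2^2 = 1/8>0$. Agent $i$'s expected payoff thus strictly increases by $w_i c_2/8$, even though $p_j\neq p_i^*$ and all wagers are positive. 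This is exactly the failure described in the proposition: $j$'s improvement pushes the pooled baseline away from $i$'s belief. The same construction extends to any $M\ge 3$ by giving the remaining agents zero wager, or by placing them at $p_i^*$.

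The main subtlety is recognizing that the counterexample needs at least two other agents. When $M=2$, we have $q_{-i}=p_j$, and moving $p_j$ to $p_i^*$ does weakly decrease $i$'s payoff. The failure therefore arises from cancellation inside the pool, which is why the construction uses two opposing extreme predictions that average to $i$'s belief. Verifying the bias--variance identity for Brier is routine.
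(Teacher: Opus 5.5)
Your proof is correct and follows essentially the same route as the paper. Both use the Brier-score bias--variance decomposition to reduce agent $i$'s payoff to $w_i c_2\|q_{-i}-p_i^*\|_2^2$ plus terms unaffected by $p_j$, then give an explicit three-agent counterexample with two opposing point-mass predictions, where replacing one of them by $p_i^*$ pushes the pool away from $p_i^*$. The differences are only cosmetic: you use a binary outcome space with equal wagers so the pool starts exactly at $p_i^*$, whereas the paper uses a ternary example with wagers $1$ and $2$; your remark that normality actually holds when $M=2$ is a nice addition.
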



Both variants satisfy DSIC under any belief structure (Theorem~\ref{thm:dsic}), advantage--wager alignment at best response (Theorem~\ref{thm:swa}), and prediction-agnostic wager optimization, 
and the choice between variants is a design decision, depending on the presence of strategic behavior. However, both compromise the budget balance.
We show below that the worst-case deficit of our mechanism is bounded and is independent of the number of participants $M$. For our net payout function using the Brier score (Def.~\ref{def:brier}), the worst-case deficit is $c_2^2/c_3$. 


\begin{restatable}[Bounded Worst-Case Deficit]{proposition}{budget}
\label{thm:budget}
Under the net payout function~\eqref{eq:ourpayout} with a bounded scoring rule ($s \in [\underline{s}, \overline{s}]$) for \proj~I, or a bounded and concave scoring rule for \proj~II (e.g., the Brier score), the total net payout satisfies
\begin{equation}\label{eq:budget-bound}
\sum_{i=1}^M \pi_i \;\leq\; \left(\frac{\overline{s} - \underline{s}}{W} - c_3\right) \sum_{i=1}^M w_i^2 \;\leq\; \frac{(\overline{s} - \underline{s})^2}{4c_3}
\end{equation}
where $W= \sum_{i} w_i$, for any prediction profile $\mathbf{p}\in \Delta(\mathcal{Y})^M$, any wager profile $\mathbf{w}\in \mathbb{R}_{\geq 0}^M$, and realized outcome $y \in \mathcal{Y}$.

\end{restatable}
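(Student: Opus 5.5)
The plan is to reduce both variants to a single algebraic identity for \proj~I, then handle \proj~II by comparing its baseline to that of \proj~I, and finally optimize over the wagers.

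\textbf{Step 1 (\proj~I identity).} Write $s_j := s(p_j,y)$, $W_{-i} := W - w_i$, and let $\bar b := \sum_j w_j s_j / W$ be the full wager-weighted average score, as in WSWM~\eqref{eq:wswm_payout}. Then $\sum_i w_i (s_i - \bar b) = 0$ identically. Since $b^{\mathrm I}_{-i} = (W\bar b - w_i s_i)/W_{-i}$, a one-line rearrangement gives
\[
\bar b - b^{\mathrm I}_{-i} = \frac{w_i}{W}\bigl(s_i - b^{\mathrm I}_{-i}\bigr).
\]
Hence
\[
s_i - b^{\mathrm I}_{-i} = (s_i - \bar b) + \frac{w_i}{W}\bigl(s_i - b^{\mathrm I}_{-i}\bigr).
\]
Multiplying by $w_i$ and summing, the WSWM part vanishes, so
\[
\sum_i w_i\bigl(s_i - b^{\mathrm I}_{-i}\bigr) = \frac{1}{W}\sum_i w_i^2\bigl(s_i - b^{\mathrm I}_{-i}\bigr).
\]
Because $b^{\mathrm I}_{-i}$ is an average of scores in $[\underline s,\overline s]$, each difference $s_i - b^{\mathrm I}_{-i}$ is at most $\overline s - \underline s$. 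Subtracting $c_3\sum_i w_i^2$ then gives the first inequality in~\eqref{eq:budget-bound}.

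Degenerate profiles need a convention. When $W_{-i}=0$ the baseline is undefined, so I would fix $b_{-i}$ to any value in $[\underline s,\overline s]$. When $W=0$ all payouts are zero, and the bound is read as the trivial $0$.

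\textbf{Step 2 (\proj~II via concavity).} The pool $q_{-i}$ is a convex combination of $\{p_j\}_{j\neq i}$ with weights $w_j/W_{-i}$. By concavity of $s(\cdot,y)$ (Jensen's inequality),
\[
b^{\mathrm{II}}_{-i} = s(q_{-i},y) \ge \sum_{j\ne i}\frac{w_j}{W_{-i}}\, s_j = b^{\mathrm I}_{-i}.
\]
Since $w_i \ge 0$, this gives $\pi^{\mathrm{II}}_i \le \pi^{\mathrm I}_i$ term by term. Summing over $i$ and applying Step~1 yields the same first inequality for \proj~II. Boundedness of $s$ is used only through Step~1.

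\textbf{Step 3 (uniform bound).} Let $R := \overline s - \underline s$ and $S := \sum_i w_i^2$. Since the $w_i$ are nonnegative, $W^2 \ge S$, so $S/W \le \sqrt S$. Therefore
\[
\Bigl(\frac{R}{W} - c_3\Bigr) S \;\le\; R\sqrt S - c_3 S \;\le\; \max_{t\ge 0}\bigl(Rt - c_3 t^2\bigr) \;=\; \frac{R^2}{4c_3}.
\]
This bound depends on neither $M$ nor the wagers. For the Brier score one has $R = 2c_2$, which recovers the stated deficit $c_2^2/c_3$.

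\textbf{Main obstacle.} The delicate step is the identity in Step~1 relating the leave-one-out baseline to the full WSWM average. In particular, it must be written in the form $\bar b - b_{-i} = \frac{w_i}{W}(s_i - b_{-i})$ rather than as a factor $W/W_{-i}$, which blows up when one agent dominates the total wager. For \proj~II, the natural first attempt is to compare $s(q_{-i},y)$ with $s(q,y)$ along the segment through $p_i$, but concavity bounds that difference in the wrong direction. I would instead use the Jensen comparison with $b^{\mathrm I}_{-i}$ from Step~2, which sidesteps the issue.
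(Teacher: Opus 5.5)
Your proposal is correct and matches the paper's proof essentially step for step. It uses the same identity $\sum_i w_i(s_i-b^{\mathrm I}_{-i})=\frac{1}{W}\sum_i w_i^2(s_i-b^{\mathrm I}_{-i})$, obtained by cancelling the zero-sum WSWM term, the same Jensen comparison $b^{\mathrm{II}}_{-i}\ge b^{\mathrm I}_{-i}$ for WALLA~II, and the same final one-variable quadratic maximization.

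The differences are cosmetic. You never divide by $W_{-i}$, which also handles $W_{-i}=0$; the paper passes through a $W/W_{-i}$ factor but converts back exactly to $w_i^2(s_i-b^{\mathrm I}_{-i})/W$, so it does not blow up as your ``main obstacle'' remark suggests. And you substitute $t=\sqrt{\sum_i w_i^2}$ where the paper uses $t=w_{\max}$.
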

Compared to WSWM with a global average baseline~\citep{LAMBERT2015axiom}, which achieves exact budget balance, our leave-one-out baseline inflates each agent's deviation by $W/W_{-i}$, producing the leave-one-out inflation term in~\eqref{eq:exact-budget}.
The regularization $c_3 \sum_i w_i^2$ serves to offset this inflation (e.g., by taxing wagers). 
The regularization constant $c_3$ is the designer's lever: a larger $c_3$ reduces the worst-case deficit at the cost of suppressing wagers.

\subsubsection{Aggregation}\label{apx:aggregation}
After eliciting predictions $\{p_i\}_{i=1}^M$ and wagers $\{w_i\}_{i=1}^M$, the platform aggregates individual reports into a single probabilistic forecast $\hat{p} \in \Delta(\mathcal{Y})$ using wagers as weights.
 
\paragraph{Linear pooling.} The wager-weighted arithmetic mean
\begin{equation}\label{eq:linear-pool}
\hat{p}_{\mathrm{lin}} := \frac{\sum_{i=1}^M w_i\, p_i}{W} \in \Delta(\mathcal{Y})
\end{equation}
is the unique minimizer of the wager-weighted sum of squared Euclidean distances $\sum_i w_i \|q - p_i\|_2^2$ over $q \in \Delta(\mathcal{Y})$, or equivalently, the minimizer of the wager-weighted KL divergence $\sum_i w_i\, \mathrm{KL}(p_i \| q)$. Linear pooling is computationally simple and tends to preserve disagreement among forecasters.
 
\paragraph{Logarithmic pooling.} The wager-weighted geometric mean
\begin{equation}\label{eq:log-pool}
\hat{p}_{\mathrm{log}}[y] \propto \prod_{i=1}^M p_i[y]^{w_i/W}, \qquad y \in \mathcal{Y},
\end{equation}
minimizes the wager-weighted reverse KL divergence $\sum_i w_i\, \mathrm{KL}(q \| p_i)$. It is externally Bayesian and satisfies marginalization consistency \citep{Genest1984}, but emphasizes regions of consensus among forecasters, typically yielding a sharper aggregated distribution and exhibiting sensitivity to probabilities near zero.

\subsection{Learning a Wager Policy from Payouts}
\label{sec:learn_wager}
Consider the repeated, \textit{multiple-question} setting, where questions $x^{(t)}$ arrive sequentially and net payouts $\pi^{(t)}_{i}$ are realized after each round $t$ under the true data-generating probability measure $\PP$.
By observing these net payouts, an agent can learn her comparative advantage, continuously updating her wager policy to approximate the best response.

\paragraph{Learning procedure.}
We consider LLM agents, where each model $i$ has a forward pass $f_i$ that maps input $(X^{(t)}, S^{(t)}_{i})$ to a hidden state $h^{(t)}_{i} \coloneqq f_i(X^{(t)}, S^{(t)}_{i})$ where $h^{(t)}_{i} \in \mathcal{H}$.
At each round, agent $i$ produces (1)~a prediction $p^{(t)}_{i} = \Q_i(Y^{(t)} \mid X^{(t)}, S^{(t)}_{i}) = g_{\phi_i}(h^{(t)}_{i})$ from the model's output layer $g_{\phi_i} : \mathcal{H} \rightarrow \Delta(\mathcal{Y})$ and (2)~a wager $w^{(t)}_{i} = g_{\theta_i}(h^{(t)}_{i})$ from a learned wager network $g_{\theta_i}\colon \mathcal{H} \to \mathbb{R}_{\geq 0}$ that maps the hidden state $h^{(t)}_{i}$ to a non-negative scalar. 
Specifically, the wager network is trained to minimize the negative empirical net payout:
$\mathcal{L}(\theta_i) = -\frac{1}{T}\sum_{t=1}^T \left[g_{\theta_i}(h^{(t)}_{i})\!\left(s^{(t)}_{i} - b^{(t)}_{-i}\right) - c_3\, g_{\theta_i}(h^{(t)}_{i})^2\right]$,
where $s^{(t)}_{i} = s(p^{(t)}_{i}, y^{(t)})$ and $b^{(t)}_{-i}$ is the realized leave-one-out baseline at round $t$. As agents update their wager policies simultaneously, this is a non-stationary multi-agent learning environment.
The realized score differential $s^{(t)}_{i} - b^{(t)}_{-i}$ serves as a noisy sample of 
the comparative advantage, where randomness comes from the data-generating process and opponents' wager policies. 


\paragraph{Why learn wagers rather than improve predictions?}
Improving predictions requires approximating $\PP(Y \mid X, S_i)$, a task constrained by the model's architecture, training data, and capacity.
Learning wagers is comparatively lightweight: it requires only a small network operating on the hidden state to learn when and to what extent the model possesses a comparative advantage, rather than how to generate better predictions.
This is possible precisely because of prediction-agnostic wager optimization. 
By separating wager optimization from prediction, \proj~allows smaller models to focus on domains of comparative advantage, rather than competing with larger generalist models across all tasks.

\paragraph{Will agents learn to stop participating?}
In the extreme, if one model dominates or each model perfectly identifies its best
instances, the mechanism reduces to single-model prediction or per-question routing.
In practice, non-stationarity---through model updates, evolving query distributions, and
retrieval augmentation---prevents comparative advantages from being perfectly learned, helping sustain multi-model participation and meaningful aggregation.

\section{Empirical Evaluation: Eliciting and Aggregating LLM Predictions }
\label{sec:experiment}

In this section, we first introduce the experiment setup, including participating agents, datasets, evaluation metrics, and baselines. After detailing the decentralized learning procedures of LLM participants in \proj, we present results across three scenarios: homogeneous models with private contexts, heterogeneous models, and heterogeneous models with private contexts.
\subsection{Experiment Setup: Agents, Datasets, Evaluation Metrics, and Baselines
}\label{sec:setup}
\paragraph{Participating agents.} We draw participants from a pool of four LLMs spanning general-purpose and domain-specialized capabilities: Gemma-2-9B (Gemma2) \citep{gemmateam2024gemma2}, Llama3.1-8B (Llama3.1) \citep{grattafiori2024llama3herdmodels}, Llama3-Aloe-8B (Aloe) \citep{gururajan2024aloe}, a medical domain fine-tuned model, and BioMistral-7B (BioMistral) \citep{labrak-etal-2024-BioMistral}, pre-trained on biomedical corpora. Each model produces a probability distribution over the outcome space from its output token logits, following standard practice for multiple-choice and forecasting evaluations.

\paragraph{Datasets.} We evaluate on five datasets covering question answering and event forecasting: PubMedQA \citep{jin-etal-2019-PubMedQA}, MedMCQA \citep{MedMCQA}, MMLU \citep{hendrycks2021measuring}, ARC-Challenge \citep{clark2018think}, and BayesX, a synthetic Bayesian forecasting dataset we construct. 
%
Existing QA datasets primarily evaluate whether additional context or expert knowledge improves prediction accuracy.
Forecasting presents a different challenge: additional information may reduce epistemic uncertainty about the world while simultaneously revealing greater aleatoric uncertainty in the outcome-generating process.
BayesX is designed to capture this phenomenon. Its priors can be highly concentrated, but private context often shifts the posterior toward a more uniform distribution, reflecting a better-informed yet less confident prediction. As a result, BayesX tests whether an aggregation method can recognize the informed model even when additional information makes its forecast appear less certain.


PubMedQA and BayesX include question-related contexts that may be selectively revealed to a subset of participants, enabling experiments with heterogeneous information.
Details about datasets are provided in Appendix~\ref{apx:datasets}.
%

\paragraph{Metrics.}
Beyond standard predictive performance metrics---area under the ROC curve (AUC) and accuracy (ACC)---and expected calibration error (ECE), we evaluate whether the aggregation weights identify high-performing experts on each question. Since \proj uses the Brier score as its scoring rule, we additionally evaluate whether the aggregation weights align with per-question Brier-score performance using three metrics: 
\begin{itemize}
    \item Kendall's Tau (K-Tau): Rank correlation between the ordering of models by their wagers and by their per-question Brier scores; higher values indicate the wagers correctly order models by quality.
    
    \item Mean Reciprocal Rank (MRR): Average inverse rank of the best-performing model under the ordering induced by aggregation weights; MRR $= 1$ means the best model always receives the highest weight.
    
    \item Dynamic Regret (D-Regret): Per-question Brier-score gap between the aggregated prediction and the best individual model; lower values indicate the aggregation is closer to oracle per-question model selection.
\end{itemize}
Formal definitions are in Appendix~\ref{apx:metrics}.
All reported metrics in tables are scaled by $\times 100$, and all reported intervals ($\pm$) are $95\%$ confidence intervals over five independent runs.

\begin{table}[ht]
\centering
\caption{Categorization of baselines. See Sec.~\ref{sec:setup} for discussion.}
\label{tab:baseline}
 \resizebox{1.0\linewidth}{!}{%
\begin{tabular}{l
                    *{6}{c}}
\toprule
\textbf{Methods}  & Uncertainty-Aware & Decentralized &  Learning  & IC / Manipulation-Resistant  \\
\midrule
Pre-inference Router~\citep{lu2023routing, chen2024routerdc, ong2025routellm, song2025irt}  & \xmark  & \xmark & \cmark & --\\
Stacked Generalization~\citep{WOLPERT1992241}  & \cmark  & \xmark & \cmark& --\\
Uniform Averaging~\citep{wang2022selfconsistency, li2024more, yu-etal-2024-breaking} & \xmark  & \cmark & \xmark & \xmark\\
Weight by Model Confidence~\citep{kang2025scalable}  & \cmark  & \cmark & \xmark & \xmark\\
Weight by Context Perplexity~\citep{mavromatis2024pack}  & \cmark  & \cmark & \xmark& \xmark\\
\proj (Ours)  & \cmark  & \cmark & \cmark& \cmark\\ 
\bottomrule

\end{tabular}%
}
\end{table}

\paragraph{Baselines.}
We consider seven baselines spanning pre-inference routers~\citep{chen2024routerdc, song2025irt, ong2025routellm}, learning-free ensembling~\citep{mavromatis2024pack, kang2025scalable, wang2022selfconsistency}, and learning-based ensembling with LLM hidden states~\citep{WOLPERT1992241}. Table~\ref{tab:baseline} categorizes all methods along four axes: uncertainty-awareness, decentralized aggregation, learned weights, and incentive compatibility.

Among \textit{learning-free methods}, UniformAvg performs equally weighted aggregation. 
SelfCertainty~\citep{kang2025scalable} uses the predicted distribution to estimate confidence and treats it as the aggregation weight. 
PackLLM~\citep{mavromatis2024pack} uses the inverse perplexity on input tokens as weights, requiring access to model parameters. 
These methods can be considered decentralized, and some are uncertainty-aware, but they do not learn from data, may be sensitive to miscalibrated confidence estimates, and are not incentive-compatible---participants can misreport confidence or perplexity to gain disproportionate weight.

\textit{Learning-based methods}, including pre-inference routers~\citep{chen2024routerdc, ong2025routellm, song2025irt} and StackedGen~\citep{WOLPERT1992241}, require centralized control to train routing or aggregation networks. 
Pre-inference routers learn weights conditional on input features but do not capture each model's internal uncertainty on the specific question. 
StackedGen trains on concatenated hidden states and is therefore uncertainty-aware, but requires centralized access to all models' internals.
Since StackedGen has access to all models' hidden states, it serves a strong centralized reference point, with more information than any decentralized method. 

\proj simultaneously achieves all four properties (Table~\ref{tab:baseline}). Detailed descriptions of baselines are in Appendix~\ref{apx:baselines}.
Other experiment setup details are provided in Appendix~\ref{apx:exp_setup}.

\subsection{Decentralized Learning of Wager Policies for LLMs}
\label{sec:learn_wager_policy}

In \proj, LLM participants produce predictions via their forward passes and learn wager policies as described in Section~\ref{sec:learn_wager}. 
The base model parameters are frozen while only the wager network is trained. Specifically, we extract the last-layer hidden state after processing input tokens (i.e., question $X$ and context $S$) and pass it to a two-layer perceptron (MLP) to output a non-negative scalar wager. 
The goal is for each agent to learn her comparative advantage: \textit{on which or what type of questions it is likely to outperform the pool, and by how much}. 
We allow all participants' wager networks to be trained simultaneously without assuming that opponents' strategies are fixed. This procedure simulates how, in a multi-agent system, each LLM participant would independently learn to wager in a deployed wagering mechanism, using only her own model's hidden states and observed payouts.
A key property of the payout function~\eqref{eq:ourpayout} is that each agent can recover the hindsight optimal wager from her own payout alone: since $\pi_i = w_i(s_i - b_{-i} - c_3 w_i)$, the realized comparative advantage $s_i - b_{-i} = \pi_i / w_i + c_3 w_i$ and thus the hindsight optimal wager is computable from the agent's observed payout and wager, without access to other agents' predictions, wagers, or scores. The wager network is trained to approximate the resulting target $w^* = ((s_i - b_{-i}) / 2c_3)^+$, so that the learned wager approximates 
the best-response wager to the data-generating distribution~$\PP$ and opponents' wager policies
conditioned on the model's hidden state.

To preserve the feasible range of $w^*$ while improving numerical stability, we apply a sigmoid transformation to the output of the wager head.  Training minimizes the mean squared error (MSE) between the predicted wager $w$ and the hindsight target $w^*$.\footnote{This objective is equivalent to minimizing the hindsight regret of failing to play $w^*$ in each round when $w^*>0$. When $w^*=0$, the MSE objective yields a less aggressive update toward zero than direct regret minimization, as the penalty is quadratic rather than linear, thereby encouraging continued participation.}
The mechanism parameters and learning hyperparameters are reported in Table~\ref{tab:params}.



\begin{table*}[ht]
\centering
\begin{minipage}{\textwidth}
\caption{Homogeneous models (Aloe) on PubMedQA with private contexts. For each question, one of $M \in \{4, 8, 12\}$ models is randomly selected to receive a relevant abstract; the rest receive none.\protect\footnotemark 
}
\label{tab:homo}
 \resizebox{1.0\linewidth}{!}{%
\begin{tabular}{l
                    *{5}{c} 
                   *{5}{c} *{5}{c} }
\toprule
& \multicolumn{5}{c}{\textbf{4}} & \multicolumn{5}{c}{\textbf{8} } & \multicolumn{5}{c}{\textbf{12}} \\
\cmidrule(lr){2-6} \cmidrule(lr){7-11} \cmidrule(lr){12-16}
\textbf{Method} &AUC $\uparrow$ & ACC $\uparrow$ & ECE $\downarrow$  & MRR $\uparrow$ & DRegret $\downarrow$ &AUC $\uparrow$ & ACC $\uparrow$ & ECE $\downarrow$  & MRR $\uparrow$ & DRegret $\downarrow$ &AUC $\uparrow$ & ACC $\uparrow$ & ECE $\downarrow$  & MRR $\uparrow$ & DRegret $\downarrow$\\
\midrule
 UniformAvg &  70.29 &  71.66 &  2.27  &-  &  20.80 &  64.85  & 67.20 & 3.12 & - &  25.35 &  63.10 &  66.17  & 4.49 &  - & 27.30 \\
SelfCertainty & 71.78 & 79.23 & 2.67 & 86.76  & 13.62 &  69.31 & 76.92&3.33&84.73& 16.50 &  68.05  & 75.66 & 3.65 & 84.07  & 18.12\\
PackLLM & 78.23 &  85.22  &  10.23  & 88.64  & 8.01 &  75.65 &  81.71  &9.38 &  87.85 &11.99& 73.65  &  78.97  & 7.44 & 87.67 & 14.64\\

\midrule
RouterDC w/ context &  69.36\tiny{$\pm$13.24} & 72.12\tiny{$\pm$25.21}  &2.58\tiny{$\pm$20.64} & 61.24\tiny{$\pm$351.39}  & 20.82\tiny{$\pm$20.95}  & 65.17\tiny{$\pm$3.02}  &67.51\tiny{$\pm$1.47} & 3.10\tiny{$\pm$1.97}&  47.36\tiny{$\pm$125.47}& 25.46\tiny{$\pm$2.19} & 63.35\tiny{$\pm$4.16}  &66.17\tiny{$\pm$0.42}& 4.75\tiny{$\pm$2.33}  &41.58\tiny{$\pm$174.74}&  27.20\tiny{$\pm$1.31}   \\
NIRTRouter w/ context &  70.19\tiny{$\pm$1.32} & 71.71\tiny{$\pm$0.63} &2.15\tiny{$\pm$1.17} & 52.26\tiny{$\pm$14.97}& 20.80\tiny{$\pm$0.01}&  64.85\tiny{$\pm$0.00} &67.53\tiny{$\pm$0.00}&  3.18\tiny{$\pm$0.00}&  33.38\tiny{$\pm$24.19} &  25.57\tiny{$\pm$0.00} & 63.10\tiny{$\pm$0.01}&  66.17\tiny{$\pm$0.00}&  4.91\tiny{$\pm$0.19}&  24.29\tiny{$\pm$1.11}  & 27.30\tiny{$\pm$0.00}   \\
RouteLLM w/ context & 82.32\tiny{$\pm$0.07} &  87.48\tiny{$\pm$0.06} &  7.24\tiny{$\pm$0.07} & 87.81\tiny{$\pm$0.83} &   3.51\tiny{$\pm$0.08}  & 82.22\tiny{$\pm$0.04} &  87.50\tiny{$\pm$0.03} & 7.21\tiny{$\pm$0.02}  & 85.77\tiny{$\pm$1.05}  &  3.54\tiny{$\pm$0.02}  & 82.52\tiny{$\pm$0.05} &  87.39\tiny{$\pm$0.03} & 7.17\tiny{$\pm$0.02}  & 84.15\tiny{$\pm$0.99}  & 3.55\tiny{$\pm$0.02} \\
RouteLLM w/o context & 69.49\tiny{$\pm$0.39}  &  72.01\tiny{$\pm$0.19} &  2.43\tiny{$\pm$0.70}  & 52.07\tiny{$\pm$0.11} &  20.64\tiny{$\pm$0.00}  & 65.08\tiny{$\pm$0.38} &  67.44\tiny{$\pm$0.19} & 3.24\tiny{$\pm$0.49}  & 33.29\tiny{$\pm$1.65}  & 25.63\tiny{$\pm$0.01}  & 63.85\tiny{$\pm$0.35}  &  65.80\tiny{$\pm$0.17} &4.72\tiny{$\pm$0.14}   & 28.31\tiny{$\pm$2.59}    & 27.34\tiny{$\pm$0.02}  \\
StackedGen & 81.99\tiny{$\pm$0.28} &  87.76\tiny{$\pm$0.11} & 7.53\tiny{$\pm$0.11} & 88.01\tiny{$\pm$0.25}  & 3.29\tiny{$\pm$0.13}  &   82.65\tiny{$\pm$0.74} &87.44\tiny{$\pm$0.18} &7.48\tiny{$\pm$0.38} &  85.25\tiny{$\pm$1.20}   &  3.49\tiny{$\pm$0.25}  &  82.26\tiny{$\pm$0.32}  & 87.36\tiny{$\pm$0.29}   &7.29\tiny{$\pm$0.09} &  83.76\tiny{$\pm$1.05}    &  3.66\tiny{$\pm$0.31}  \\
\midrule
\proj~I & 81.07\tiny{$\pm$0.58}  & 87.00\tiny{$\pm$0.47} & 9.76\tiny{$\pm$0.33}  & 86.97\tiny{$\pm$1.89}  & 5.19\tiny{$\pm$0.64} &  84.47\tiny{$\pm$0.28}  & 87.09\tiny{$\pm$0.14}  & 8.35\tiny{$\pm$0.48} &  81.90\tiny{$\pm$0.86} &  9.32\tiny{$\pm$0.22}  & 82.10\tiny{$\pm$0.35}   & 86.66\tiny{$\pm$0.32}  & 10.89\tiny{$\pm$0.80} & 77.14\tiny{$\pm$1.94} & 12.06\tiny{$\pm$0.62} \\
\proj~II & 81.07\tiny{$\pm$0.19}  &86.68\tiny{$\pm$0.11} & 9.83\tiny{$\pm$0.14}  & 88.46\tiny{$\pm$0.22} &  5.49\tiny{$\pm$0.05}  & 79.85\tiny{$\pm$0.34}  & 85.95\tiny{$\pm$0.39}  & 10.45\tiny{$\pm$0.29}  & 85.07\tiny{$\pm$1.39}  & 7.15\tiny{$\pm$0.33} &   78.65\tiny{$\pm$0.60}    & 85.17\tiny{$\pm$0.45}  & 10.61\tiny{$\pm$0.37} & 83.84\tiny{$\pm$0.57} & 8.46\tiny{$\pm$0.26}
\\
\bottomrule

\end{tabular}%
}
\end{minipage}
\end{table*}

\footnotetext{Aloe achieves AUC=82.39, ACC=87.57, ECE=7.38 with abstract and AUC=60.65, ACC=63.10, ECE=7.83 without.}

\begin{table*}[ht]
\centering
\begin{minipage}{\textwidth}
\caption{Homogeneous models (Gemma2) on BayesX with private contexts.  For each question, one of $M \in \{4,8,12\}$ models is randomly selected to receive evidence.\protect\footnotemark}\label{tab:bayex}
 \resizebox{1.0\linewidth}{!}{%
\begin{tabular}{l
                    *{4}{c} 
                   *{4}{c} *{4}{c} }
\toprule
& \multicolumn{4}{c}{\textbf{4}} & \multicolumn{4}{c}{\textbf{8}} & \multicolumn{4}{c}{\textbf{12}} \\
\cmidrule(lr){2-5} \cmidrule(lr){6-9} \cmidrule(lr){10-13} 
\textbf{Method} & KLD $\downarrow$ &  TVD $\downarrow$ & MRR $\uparrow$ & DRegret $\downarrow$ & KLD $\downarrow$ &  TVD $\downarrow$  & MRR $\uparrow$ &  DRegret $\downarrow$ & KLD $\downarrow$ &  TVD $\downarrow$   & MRR $\uparrow$ &  DRegret $\downarrow$ \\
\midrule
UniformAvg & 25.64 & 32.96  & - &    19.65  & 41.09  & 38.93  & - & 28.79   & 48.67 & 40.76 &  -  & 32.02  \\
SelfCertainty &   61.44  & 43.99  &25.00   & 37.37 &  68.46 &  44.58 &  12.50  &  38.74  &  70.80 &  44.59 &  8.33  & 38.93  \\
PackLLM & 21.58 & 30.70  & 100.00 &  16.66 & 37.36 & 37.75 & 100.00 &  26.75 & 45.32 & 39.93&  100.00  & 30.60  \\
\midrule
RouterDC w/ context & 22.88\tiny{$\pm$0.26}  & 30.00\tiny{$\pm$0.38}  & 65.55\tiny{$\pm$12.27}  &  16.77\tiny{$\pm$0.18}  & 39.96\tiny{$\pm$14.38} & 38.38\tiny{$\pm$6.81} & 39.01\tiny{$\pm$64.03} & 28.02\tiny{$\pm$9.66} &  48.96\tiny{$\pm$0.60} &  40.73\tiny{$\pm$0.12} & 26.15\tiny{$\pm$3.64} &   32.00\tiny{$\pm$0.22}  \\
NIRTRouter w/ context & 22.49\tiny{$\pm$17.12} & 30.72\tiny{$\pm$16.17} & 52.11\tiny{$\pm$0.32}   & 17.11\tiny{$\pm$15.72} & 40.50\tiny{$\pm$0.49}  & 38.76\tiny{$\pm$0.03}  & 33.97\tiny{$\pm$0.00}  & -0.35\tiny{$\pm$1.46}& 45.18\tiny{$\pm$0.11} & 38.97\tiny{$\pm$0.02} & 25.91\tiny{$\pm$0.11} &  29.74\tiny{$\pm$0.11} \\
RouteLLM w/ context & 0.64\tiny{$\pm$0.07} & 3.75\tiny{$\pm$0.92} & 100.00\tiny{$\pm$0.00}  & -2.31\tiny{$\pm$0.07}  & 0.83\tiny{$\pm$2.32} & 4.67\tiny{$\pm$13.61}  &100.00\tiny{$\pm$0.00}  &  -2.12\tiny{$\pm$2.25} &  0.93\tiny{$\pm$2.46}  & 5.27\tiny{$\pm$10.68} & 100.00\tiny{$\pm$0.00}  &  -2.02\tiny{$\pm$2.40}  \\
RouteLLM w/o context &   25.67\tiny{$\pm$0.22}  & 
32.96\tiny{$\pm$0.01} & 52.08\tiny{$\pm$0.00}   & 19.66\tiny{$\pm$0.08} &  41.11\tiny{$\pm$0.20} &  38.92\tiny{$\pm$0.00}&  33.97\tiny{$\pm$0.00}&     28.78\tiny{$\pm$0.01}&   48.68\tiny{$\pm$0.15}&  40.76\tiny{$\pm$0.06} &  25.91\tiny{$\pm$0.05}  & 32.01\tiny{$\pm$0.10}   \\
StackedGen & 1.27\tiny{$\pm$0.04}  & 6.45\tiny{$\pm$0.11} &  100.00\tiny{$\pm$0.00} & -1.69\tiny{$\pm$0.04} & 1.51\tiny{$\pm$0.03} &  7.06\tiny{$\pm$0.08} &  100.00\tiny{$\pm$0.00} &  -1.46\tiny{$\pm$0.03} &   1.82\tiny{$\pm$0.04}  & 7.93\tiny{$\pm$0.10} & 100.00\tiny{$\pm$0.00} &    -1.13\tiny{$\pm$0.04} \\
\midrule
 \proj~I &  2.22\tiny{$\pm$0.11} & 8.69\tiny{$\pm$0.22}  & 100.00\tiny{$\pm$0.00} &  -0.74\tiny{$\pm$0.11} &  5.87\tiny{$\pm$0.50} & 15.97\tiny{$\pm$0.82} &  100.00\tiny{$\pm$0.00} &  2.86\tiny{$\pm$0.49} &  12.59\tiny{$\pm$0.98} &   24.01\tiny{$\pm$0.90} &   100.00\tiny{$\pm$0.00} &     9.19\tiny{$\pm$0.88} \\
\proj~II & 2.22\tiny{$\pm$0.07}   & 8.68\tiny{$\pm$0.14}  & 100.00\tiny{$\pm$0.00} & -0.74\tiny{$\pm$0.07} &  5.91\tiny{$\pm$0.08} & 16.05\tiny{$\pm$0.12} & 100.00\tiny{$\pm$0.00}  & 2.90\tiny{$\pm$0.08} & 13.20\tiny{$\pm$1.26} & 24.57\tiny{$\pm$1.13} & 100.00\tiny{$\pm$0.00}  &  9.74\tiny{$\pm$1.12}  \\
\bottomrule

\end{tabular}%
}
\end{minipage}
\end{table*}
\footnotetext{Gemma2 achieves KLD=3.24 and TVD=10.27 with evidence, and KLD=78.30 and TVD=45.46 without.}

\begin{table*}[ht]
\centering
\begin{minipage}{\textwidth}
\caption{Heterogeneous models without context on three datasets. MedMCQA and MMLU are in-distribution; ARC-Challenge is out-of-distribution, with MMLU as the training set.
}\label{tab:4models}
 \resizebox{1.0\linewidth}{!}{%
\begin{tabular}{l
                    *{5}{c} 
                   *{5}{c} *{5}{c}}
\toprule
& \multicolumn{5}{c}{\textbf{MedMCQA (i.d.)}} & \multicolumn{5}{c}{\textbf{MMLU (i.d.)}} & \multicolumn{5}{c}{\textbf{ARC-Challenge (o.o.d.)}} \\
\cmidrule(lr){2-6} \cmidrule(lr){7-11} \cmidrule(lr){12-16}
\textbf{Method} & ACC $\uparrow$ & ECE $\downarrow$  & MRR $\uparrow$ & K-Tau $\uparrow$ & DRegret $\downarrow$ & ACC $\uparrow$ & ECE $\downarrow$  & MRR $\uparrow$ & K-Tau $\uparrow$ & DRegret $\downarrow$ & ACC $\uparrow$ & ECE $\downarrow$  & MRR $\uparrow$ & K-Tau $\uparrow$ & DRegret $\downarrow$ \\
\midrule
Gemma2 & 62.90  &25.28 &  64.99 &  5.93  & 47.28  & 87.18  & 8.72  & 82.80  & 32.61  & 12.94   &90.43  & 7.55  & 89.18  & 38.43  & 10.67  \\
Aloe &  79.77  &  3.18   & 59.49  & 14.42  &  15.77 &   80.23 &    8.77 &   53.42   & 3.68  &  21.34  &  80.35  &     8.58  &  51.17  &  -0.43  &  23.43  \\
Llama3.1 &   78.97   & 2.64  &  51.56  & 4.47   & 15.97  &  81.99 &   2.52   & 40.88  &  -8.01  &  17.61  &  81.30   & 3.06  &  37.43  &  -12.20 &   21.01    \\
BioMistral & 47.47  &   17.10  &   37.71  &   -24.82  &   56.24   &  66.05  &   10.92  &   32.59   &  -28.27  &   39.95 &    66.61   &  10.95  &   30.68  &   -25.80   &  40.65    \\ 

\midrule

UniformAvg & 76.45 & 8.10  & - & 0.00 & 21.05 & 85.15 & 5.10 & - &0.00 &13.87& 87.39 &7.40 &-& 0.00 &14.23  \\

SelfCertainty & 74.42 &  4.58 &  79.78 &  43.61  &  22.08 &    86.58 &   1.36  &  88.75 &   63.21  &   11.86  &  89.22  &  2.70   & 91.59  & 66.96 &  10.71 \\

PackLLM & 79.00 &  9.93  &  50.77  &  8.85  & 18.93  &   84.94   & 5.13   & 37.79  &  -27.81  &  14.28  &  86.26  &  6.87  &  36.38 &   -21.80  &  14.93   \\
\midrule
RouterDC & 78.95\tiny{$\pm$5.46}  &  8.77\tiny{$\pm$0.36}  & 55.65\tiny{$\pm$1.86}  &  27.65\tiny{$\pm$1.30}   & 18.35\tiny{$\pm$8.02}  & 85.28\tiny{$\pm$0.22}   & 4.75\tiny{$\pm$0.24} & 61.71\tiny{$\pm$22.90} &  30.16\tiny{$\pm$20.00}  &  13.61\tiny{$\pm$0.15} &  87.48\tiny{$\pm$0.19} &  7.22\tiny{$\pm$0.22}  & 63.21\tiny{$\pm$30.19}  & 27.50\tiny{$\pm$25.39}  & 13.98\tiny{$\pm$0.18}  \\
NIRTRouter &   81.15\tiny{$\pm$0.83}  &  6.99\tiny{$\pm$0.22}  &  55.64\tiny{$\pm$1.77} &  27.67\tiny{$\pm$1.17}  & 14.56\tiny{$\pm$0.14}  &  86.91\tiny{$\pm$5.79}  &   2.15\tiny{$\pm$0.22} &  82.72\tiny{$\pm$0.93} &   43.72\tiny{$\pm$4.81}  &  11.51\tiny{$\pm$5.78}  &  90.17\tiny{$\pm$2.21}  &  5.12\tiny{$\pm$1.75}  &  89.18\tiny{$\pm$0.00}   & 46.38\tiny{$\pm$0.00}  &  10.65\tiny{$\pm$3.17} \\
RouteLLM &  81.70\tiny{$\pm$1.56}  &  3.62\tiny{$\pm$3.15}  &  55.64\tiny{$\pm$1.77}  & 27.67\tiny{$\pm$1.17}  &  12.82\tiny{$\pm$1.79}  &  86.85\tiny{$\pm$6.55}   & 1.96\tiny{$\pm$2.21}  &  82.72\tiny{$\pm$0.93}  &  43.72\tiny{$\pm$4.81}   & 11.03\tiny{$\pm$5.91}  &  90.61\tiny{$\pm$0.00}  &  2.37\tiny{$\pm$0.52}  &  89.18\tiny{$\pm$0.00}  &  46.38\tiny{$\pm$0.00}  &  9.35\tiny{$\pm$1.86}  \\

StackedGen &  82.66\tiny{$\pm$0.25}  &  2.21\tiny{$\pm$0.49}  &  59.06\tiny{$\pm$0.38}  & 30.91\tiny{$\pm$0.50}   & 11.56\tiny{$\pm$0.18} &  87.09\tiny{$\pm$0.69} &   1.99\tiny{$\pm$0.54}  &  82.24\tiny{$\pm$0.38}  &  42.17\tiny{$\pm$1.58}   & 10.60\tiny{$\pm$0.79}  &  90.35\tiny{$\pm$0.11}  &  2.19\tiny{$\pm$0.87}  &  89.04\tiny{$\pm$0.18}  &  46.05\tiny{$\pm$0.26}  &  8.85\tiny{$\pm$0.05}  \\
\midrule
 \proj~I & 81.17\tiny{$\pm$0.29} &  7.59\tiny{$\pm$0.36}  &  55.26\tiny{$\pm$0.55}  & 24.17\tiny{$\pm$1.16} &  14.86\tiny{$\pm$0.47}  &  86.78\tiny{$\pm$0.49}  &  3.34\tiny{$\pm$0.35} &  82.60\tiny{$\pm$0.62} &  45.11\tiny{$\pm$4.05}   & 11.89\tiny{$\pm$0.43}  &  89.89\tiny{$\pm$0.31} &   6.24\tiny{$\pm$0.39}  &  86.76\tiny{$\pm$0.97}  &  44.90\tiny{$\pm$1.38}   & 11.06\tiny{$\pm$0.14}   \\
\proj~II &    81.00\tiny{$\pm$0.34}  &   7.85\tiny{$\pm$0.61}  & 55.30\tiny{$\pm$0.75}   & 21.95\tiny{$\pm$1.94}  &  15.36\tiny{$\pm$0.51}  &  86.98\tiny{$\pm$0.54}  &  3.01\tiny{$\pm$0.38}   & 82.52\tiny{$\pm$0.52}  &  40.42\tiny{$\pm$7.92}  &  11.70\tiny{$\pm$0.49}   & 90.28\tiny{$\pm$0.24} &   6.17\tiny{$\pm$0.81}  &  88.07\tiny{$\pm$0.75}  &  42.23\tiny{$\pm$7.02}  &  10.59\tiny{$\pm$0.11}  \\

\proj~I (log-pool) &   80.32\tiny{$\pm$0.10}  & 2.07\tiny{$\pm$0.10}  & 55.01\tiny{$\pm$0.22} & 23.15\tiny{$\pm$0.24} & 14.58\tiny{$\pm$0.11} & 87.36\tiny{$\pm$0.05} & 2.88\tiny{$\pm$0.33}  & 72.32\tiny{$\pm$1.99} &  25.55\tiny{$\pm$4.73} &  10.48\tiny{$\pm$0.01}&  90.65\tiny{$\pm$0.08} &  2.96\tiny{$\pm$0.19} & 79.94\tiny{$\pm$1.26} & 30.41\tiny{$\pm$3.84} &  9.16\tiny{$\pm$0.03}  \\
\proj~I (calibrated) &  82.53\tiny{$\pm$0.05} &  10.56\tiny{$\pm$0.13} &  65.98\tiny{$\pm$0.78} &  38.92\tiny{$\pm$0.94}&  11.69\tiny{$\pm$0.09}  &  87.68\tiny{$\pm$0.08}  & 8.21\tiny{$\pm$0.12}  &70.71\tiny{$\pm$0.18}  &41.02\tiny{$\pm$0.60}   &9.99\tiny{$\pm$0.07}  & 89.59\tiny{$\pm$0.07}   &8.64\tiny{$\pm$0.37}  &  78.77\tiny{$\pm$0.20}  & 50.24\tiny{$\pm$0.32} &  9.84\tiny{$\pm$0.04}   \\

\bottomrule

\end{tabular}%
}
\end{minipage}
\end{table*}

\subsection{Scenario~I: Homogeneous Models with Private Contexts}
In this setting, all participants are copies of the same LLM; heterogeneous predictions arise from private information (mirroring Example~\ref{ex:homo}). 
For each question, one randomly selected model receives relevant context; the rest receive none. 
We use Aloe on PubMedQA and Gemma2 on BayesX, scaling participants from 4 to 12.

On PubMedQA (Table~\ref{tab:homo} and~\ref{tab:homo_subset}), \proj achieves AUC and ACC comparable to StackedGen and RouteLLM (with context)---both of which require centralized access to models' hidden states or private inputs---while outperforming all other baselines. The wager network learns from payout signals and hidden states to identify when the model has received relevant context, wagering high when it holds a comparative advantage and near zero otherwise (Fig.~\ref{fig:alignment} and~\ref{fig:perf_overtime} (appendix)). As the pool grows from 4 to 12, learning-free methods degrade substantially (e.g., UniformAvg AUC drops from 70.3 to 63.1), while \proj remains robust.
On BayesX (Table~\ref{tab:bayex}), \proj outperforms all baselines except RouteLLM (with context) and StackedGen, which easily recognize context in inputs. Without access to contexts, RouteLLM drops to uniform averaging. SelfCertainty performs poorly because observing the signal makes the posterior more faithful but closer to uniform, inverting its confidence estimate.

\subsection{Scenario~II: Heterogeneous Models without Context}
\label{sec:hetero_no_context}
In this setting, we use four heterogeneous LLMs without additional context, so that predictions differ due to distinct model weights (mirroring Example~\ref{ex:hetero}). 
We train a wager network for each LLM on the MedMCQA and MMLU datasets, and evaluate in-distribution on MedMCQA and MMLU, and out-of-distribution on ARC-Challenge. 
Results with four participants are in Table~\ref{tab:4models} and~\ref{tab:4models_subset}; two-participant results are in Tables~\ref{tab:2models}--\ref{tab:2models_subset} in the appendix.

Table~\ref{tab:4models} shows that \proj performs comparably to the centralized StackedGen across all four datasets. 
Aggregation can improve over individual models---especially on MedMCQA---and on MMLU and ARC-Challenge, the aggregated prediction matches the performance of the strongest individual model (Gemma2). 
The wager network here learns a different mapping than in Scenario~I: rather than learning context assignment, it must learn each model's domain-dependent comparative advantage from the hidden state and payout signals.

Learning-free methods are notably less robust to pool composition. 
On MedMCQA, weaker models (Gemma2, BioMistral) cause performance drops for UniformAvg, PackLLM, and SelfCertainty, because miscalibrated confidence in weaker models can inflate their aggregation weight. 
This illustrates another limitation of confidence-based aggregation that holds even without strategic manipulation.


\begin{table*}[ht]
\centering
\begin{minipage}{\textwidth}
\caption{Four heterogeneous models with private contexts on the PubMedQA dataset. For each question, we randomly assign one model with a relevant context (left columns) or two models, each with a relevant and an irrelevant context (right columns).}\label{tab:hetero}
 \resizebox{1.0\linewidth}{!}{%
\begin{tabular}{l
                    *{6}{c} 
                   *{6}{c}}
\toprule
& \multicolumn{6}{c}{\textbf{One Relevant Context}} & \multicolumn{6}{c}{\textbf{One Relevant Context \& One Irrelevant Context}} \\
\cmidrule(lr){2-7} \cmidrule(lr){8-13} 
\textbf{Method} & AUC $\uparrow$ & ACC $\uparrow$ & ECE $\downarrow$  & MRR $\uparrow$ & K-Tau $\uparrow$ & DRegret $\downarrow$ & AUC $\uparrow$ & ACC $\uparrow$ & ECE $\downarrow$  & MRR $\uparrow$ & K-Tau $\uparrow$ & DRegret $\downarrow$ \\
\midrule
UniformAvg &  73.43 &  76.46 &  11.51  &  -    & 0.00 &   24.28   &  72.76  &  74.60  &  11.32  &  -  &  0.00  &  27.66 \\
SelfCertainty &  69.84  &  77.25  &  3.13  &  86.11  &  46.30  &  21.60  &  66.58  &  71.40  &  10.89  &  83.61  &  39.68  &  31.20 \\
 
PackLLM & 77.85&83.60&15.38 & 66.42 & 12.32  &  18.05  & 71.95  & 77.02  & 12.86 &  58.23  & 5.39  & 26.41\\
\midrule
RouterDC w/ context & 73.33\tiny{$\pm$0.63} & 76.55\tiny{$\pm$1.26} & 11.58\tiny{$\pm$1.91} & 54.66\tiny{$\pm$37.92} & 10.64\tiny{$\pm$64.71} &  24.23\tiny{$\pm$0.34} &  72.53\tiny{$\pm$1.44}  & 74.90\tiny{$\pm$2.94}  &  11.74\tiny{$\pm$3.55} & 48.77\tiny{$\pm$6.51}   & -2.19\tiny{$\pm$11.98}  &  27.65\tiny{$\pm$0.52} \\
NIRTRouter w/ context &  73.20\tiny{$\pm$0.58}  &  76.92\tiny{$\pm$0.84}   & 12.04\tiny{$\pm$7.80}   & 43.90\tiny{$\pm$50.11}  &  -7.29\tiny{$\pm$152.52}   & 24.15\tiny{$\pm$3.73}  &  72.53\tiny{$\pm$0.34}  & 75.40\tiny{$\pm$0.00}  & 12.27\tiny{$\pm$4.23}  & 43.15\tiny{$\pm$69.64}  & -7.18\tiny{$\pm$130.40}  & 27.48\tiny{$\pm$2.74} \\

RouteLLM w/ context & 74.46\tiny{$\pm$0.57} &  79.11\tiny{$\pm$0.59} &  5.66\tiny{$\pm$1.40}  & 60.84\tiny{$\pm$2.83} & 14.24\tiny{$\pm$2.72} & 19.18\tiny{$\pm$0.32} & 73.19\tiny{$\pm$0.72} & 
77.89\tiny{$\pm$0.64}& 
5.97\tiny{$\pm$1.35}& 
62.18\tiny{$\pm$2.94}& 
15.00\tiny{$\pm$2.44}& 
22.28\tiny{$\pm$0.32}
\\
RouteLLM w/o context &  72.15\tiny{$\pm$5.90}  & 76.27\tiny{$\pm$0.21} &  8.26\tiny{$\pm$1.51}  & 55.46\tiny{$\pm$56.78}&  11.11\tiny{$\pm$36.00}  & 23.09\tiny{$\pm$3.05}&    68.97\tiny{$\pm$12.13}  & 75.81\tiny{$\pm$3.57} &   5.91\tiny{$\pm$3.73}   & 62.59\tiny{$\pm$7.84}  &  15.84\tiny{$\pm$1.12} &  25.20\tiny{$\pm$5.89}  \\
StackedGen &  81.17\tiny{$\pm$0.30}  &  84.82\tiny{$\pm$0.18}  &  7.55\tiny{$\pm$0.43} &  71.74\tiny{$\pm$0.14}  &  26.28\tiny{$\pm$0.84}  &  12.34\tiny{$\pm$0.06}  &    80.30\tiny{$\pm$0.56}  &   84.38\tiny{$\pm$1.72}   &  6.57\tiny{$\pm$1.81}   &  71.52\tiny{$\pm$0.81}   &  24.06\tiny{$\pm$0.39}  & 14.07\tiny{$\pm$0.93} \\
\midrule
\proj~I &   79.76\tiny{$\pm$0.65}  &   82.11\tiny{$\pm$0.18}  &  11.58\tiny{$\pm$0.52}    &  72.59\tiny{$\pm$0.75}   &   28.06\tiny{$\pm$0.59}  &    16.95\tiny{$\pm$0.05}   &  77.98\tiny{$\pm$0.37}    &  81.15\tiny{$\pm$0.18}   &  11.88\tiny{$\pm$0.35}   &  74.98\tiny{$\pm$0.81}   &   28.38\tiny{$\pm$0.32}  &  20.09\tiny{$\pm$0.09}   \\
\proj~II &      79.73\tiny{$\pm$0.23}  & 82.19\tiny{$\pm$0.31}  &    11.30\tiny{$\pm$0.58}  &    72.31\tiny{$\pm$0.47}  &   27.02\tiny{$\pm$0.59}   &   16.72\tiny{$\pm$0.09}  &   77.60\tiny{$\pm$0.05}   &   80.84\tiny{$\pm$0.24}  &   10.81\tiny{$\pm$0.43}  &   75.36\tiny{$\pm$0.74}  &   26.29\tiny{$\pm$0.78}   &  19.93\tiny{$\pm$0.05} \\
\proj~I (log-pooling) &  78.28\tiny{$\pm$0.65}  &  81.60\tiny{$\pm$0.16} & 
 7.51\tiny{$\pm$0.64} & 72.51\tiny{$\pm$0.60} &  28.10\tiny{$\pm$0.33} &  16.07\tiny{$\pm$0.20} & 75.02\tiny{$\pm$0.35} &  78.83\tiny{$\pm$0.18} &  4.07\tiny{$\pm$0.21} & 75.18\tiny{$\pm$0.62} &  28.43\tiny{$\pm$0.72} &  20.26\tiny{$\pm$0.22} \\
\proj~I (calibrated) & 87.17\tiny{$\pm$0.25}&   86.18\tiny{$\pm$0.14} & 11.06\tiny{$\pm$0.18}  & 78.31\tiny{$\pm$0.09} &  35.19\tiny{$\pm$0.41}  & 12.11\tiny{$\pm$0.06} &   85.93\tiny{$\pm$0.23} & 87.19\tiny{$\pm$0.05} &  11.86\tiny{$\pm$0.13}  & 81.41\tiny{$\pm$0.17} &  37.21\tiny{$\pm$0.44}  & 11.82\tiny{$\pm$0.09} \\
\bottomrule

\end{tabular}%
}
\end{minipage}
\end{table*}

\subsection{Scenario~III: Heterogeneous Models with Private Contexts}
This setting combines Scenario~I and Scenario~II. 
We use all four LLMs for evaluation on PubMedQA. 
We consider two settings: (1) one randomly selected model receives the relevant context, as in Scenario~I; and (2) one randomly selected model receives the relevant context while another randomly selected model receives an irrelevant context (an abstract randomly drawn from the test set). 
The results are provided in Table~\ref{tab:hetero} and~\ref{tab:hetero_subset}.

StackedGen and \proj stand out as the best-performing methods. RouteLLM (with context) and PackLLM are competitive when only one relevant context is present---context availability is easy to detect from input tokens---but their advantage diminishes when an irrelevant context is also introduced, as the method must now distinguish relevant from misleading context. 


The wager network in this scenario faces the most demanding learning among the three: it must jointly assess the model's intrinsic competence, the relevance of any received context, and how much that context improves the model's comparative advantage over the other participants in the pool.
\subsection{Robustness to Variants, Advantage--Wager Alignment, and Calibration}\label{sec:additional_exp}
\paragraph{Robustness to mechanism variants.}
We evaluate \proj~II and log pooling as alternative variants. 
Both lead to results similar to \proj~I with linear pooling. 
This indicates that the performance of \proj is not driven by a particular choice of baseline or aggregation rule. 
The similarity between \proj~I and \proj~II is also consistent with theory: both variants preserve advantage--wager alignment, differing mainly in the normality/no-arbitrage tradeoff (Section~\ref{sec:variants}). Likewise, the similarity between linear pooling and log pooling suggests that the learned wagers, rather than the aggregation rule, are the primary driver of the aggregation quality. 

\begin{figure}[ht]
    \centering
    \begin{subfigure}[b]{0.49\textwidth}
        \centering
        \includegraphics[width=\textwidth]{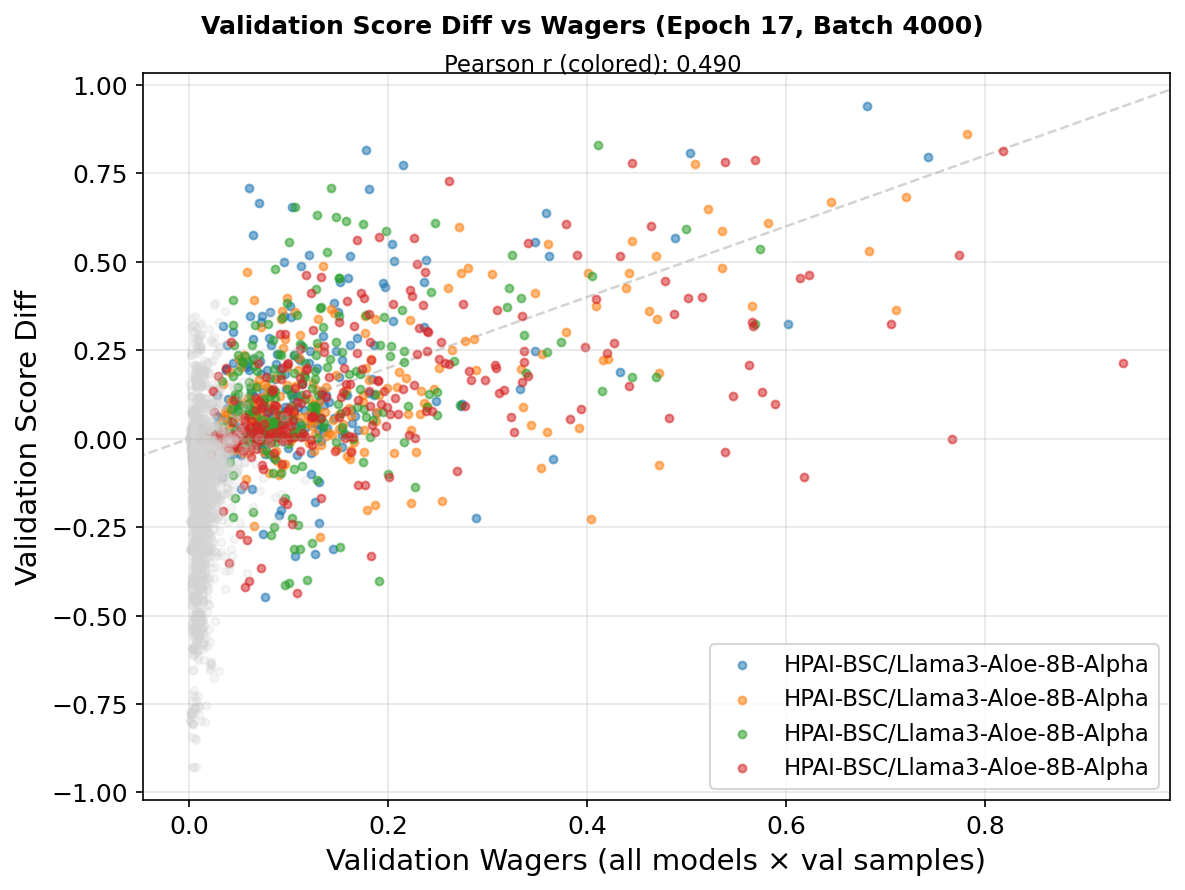}
    \end{subfigure}
    \hfill 
    \begin{subfigure}[b]{0.49\textwidth}
        \centering
        \includegraphics[width=\textwidth]{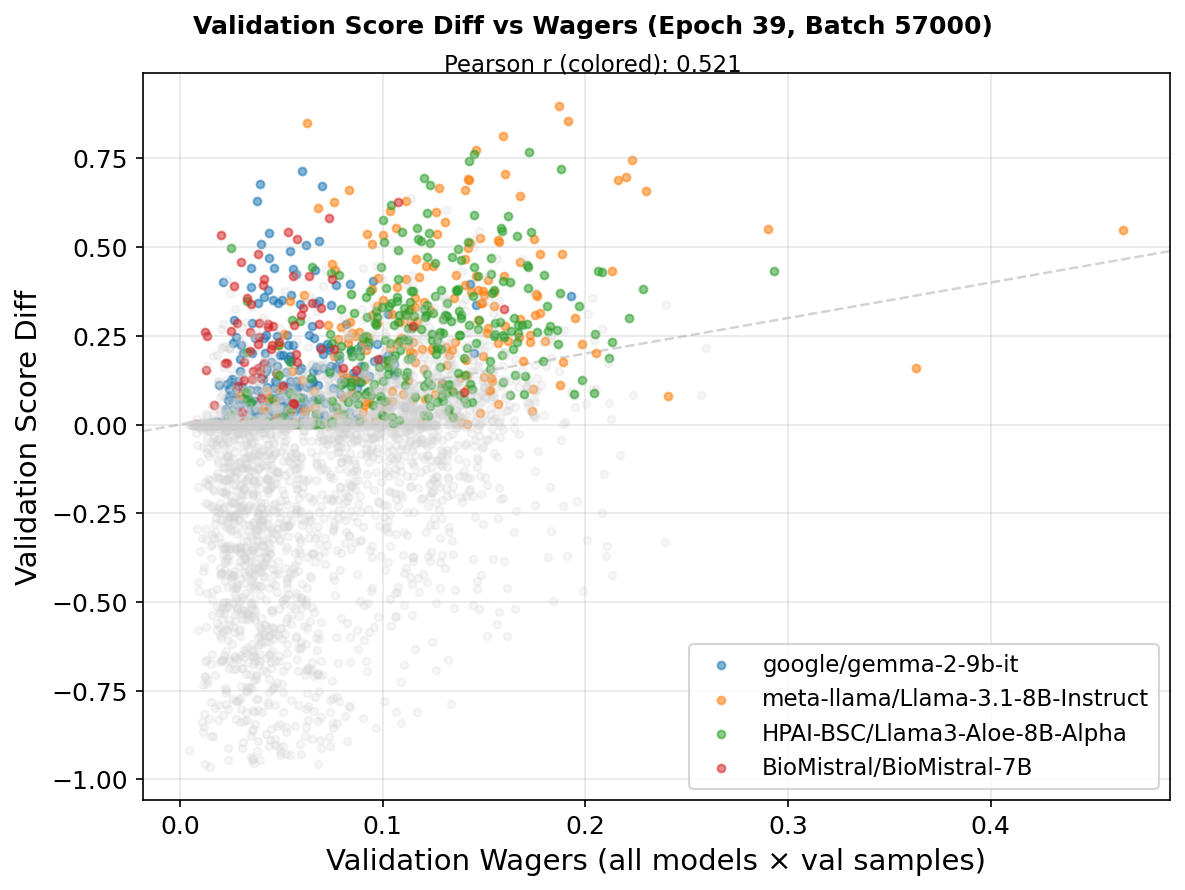}
    \end{subfigure}
    \hfill
    \begin{subfigure}[b]{0.49\textwidth}
        \centering
        \includegraphics[width=\textwidth]{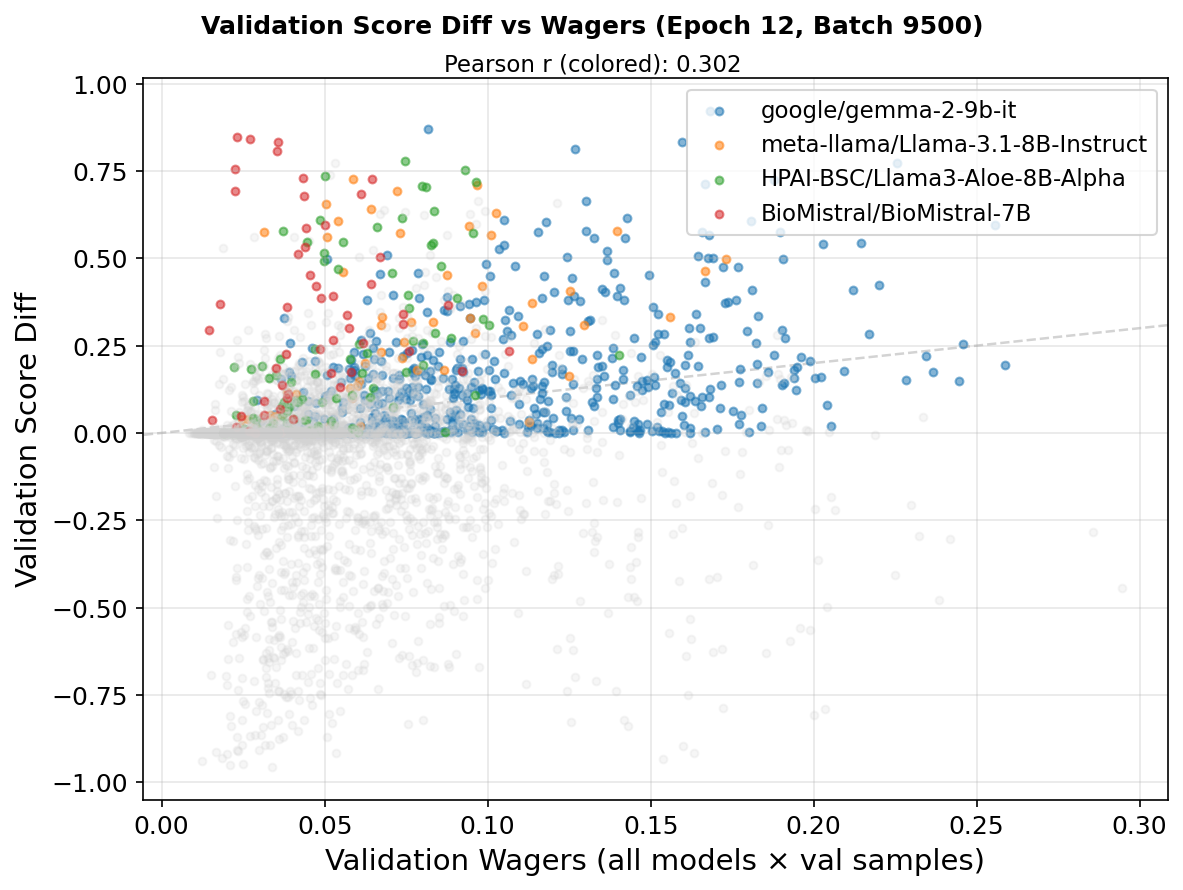}
    \end{subfigure}
    \hfill
    \begin{subfigure}[b]{0.49\textwidth}
        \centering
        \includegraphics[width=\textwidth]{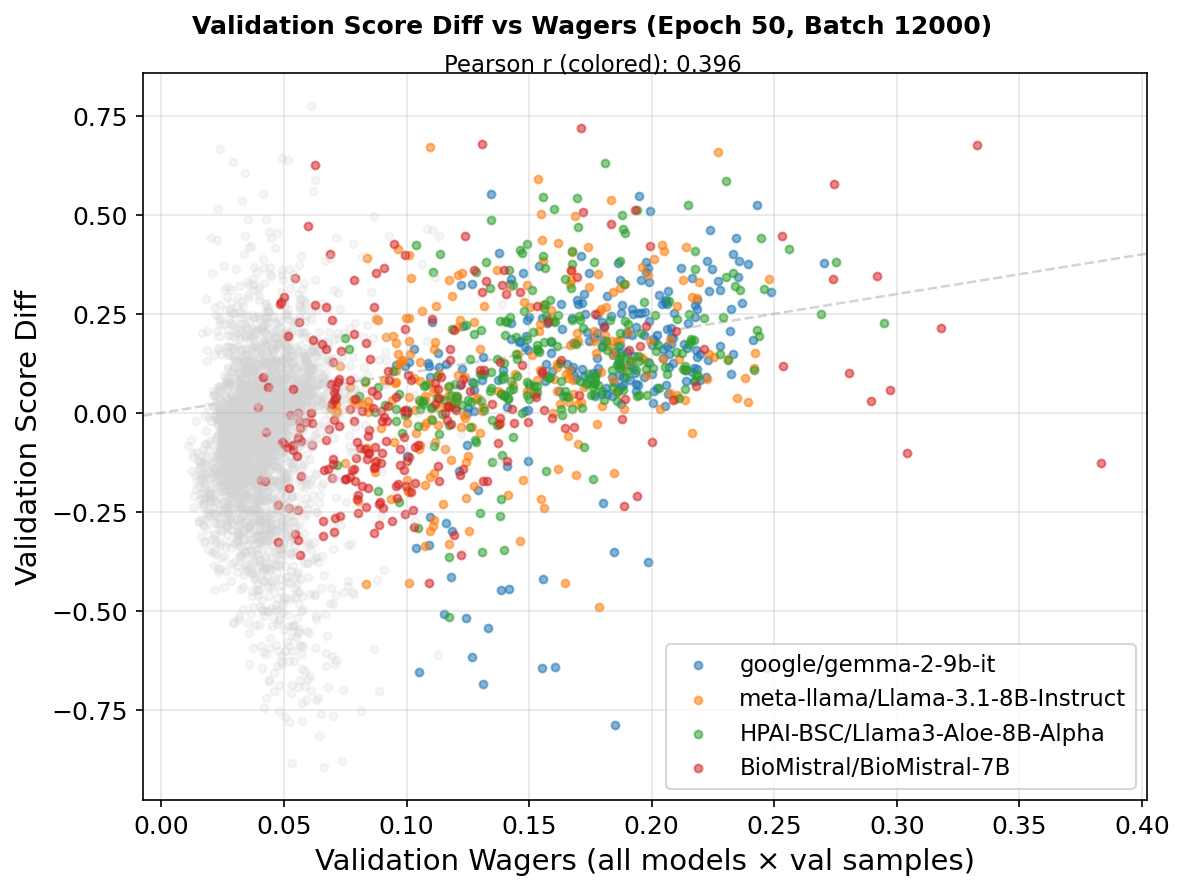}
    \end{subfigure}
    \caption{Advantage--wager alignment for \proj~I. 
    Learned wager ($x$-axis) vs.~realized score differentials $s(p_i,y) - b_{-i}$ ($y$-axis), which equal the hindsight optimal wagers under our parameter choice ($c_1=1, c_2=1/2$, $c_3=1/2$).
    For Figures~\ref{fig:alignment}--\ref{fig:estimated_alignment}, each panel corresponds to one scenario: homogeneous models on PubMedQA (top-left), heterogeneous models on MedMCQA (top-right), MMLU (bottom-left), and PubMedQA (bottom-right).
    Colored dots highlight models with context (PubMedQA) or the best-performing model (other datasets).
    Pearson correlations are reported at the top of each panel. The positive correlations confirm that LLMs learn to wager in proportion to their realized advantage. On PubMedQA(top-left and bottom-right), models successfully differentiate whether they received relevant context, wagering near zero when they did not.}
    \label{fig:alignment}
\end{figure}


\paragraph{Advantage--wager alignment.}
Fig.~\ref{fig:alignment} shows that learned wagers are positively correlated with realized comparative advantages, providing evidence that models learn to wager in proportion to their expected score advantage. 
We conduct further experiments to verify that the hidden states contain sufficient information to separately predict both the model's own score and the baseline score; we find that the difference between these predictions aligns closely with the learned wager (see Figures~\ref{fig:own_score}--\ref{fig:estimated_alignment} in the appendix). 

\begin{table*}[ht]
\centering
\caption{Average wagers and net payouts in Scenario~II when no model is calibrated v.s.~when Gemma2 is calibrated. When Gemma2 is calibrated, it increases net payout on both MedMCQA and MMLU. On MedMCQA, as it does not perform as well as Llama3.1 and Aloe, the learned wagers are reduced; on MMLU, it increases the wagers, as it performs best. Both wagers and net payouts $\times 100$.}\label{tab:single_calibrated}
 \resizebox{0.8\linewidth}{!}{%
\begin{tabular}{l
                    *{4}{c} 
                   *{4}{c}}
\toprule
& \multicolumn{4}{c}{\textbf{MedMCQA}} & \multicolumn{4}{c}{\textbf{MMLU}} \\
\cmidrule(lr){2-5} \cmidrule(lr){6-9}
& \multicolumn{2}{c}{\textbf{All Uncalibrated}} & \multicolumn{2}{c}{\textbf{Gemma2 Calibrated}} & \multicolumn{2}{c}{\textbf{All Uncalibrated}} & \multicolumn{2}{c}{\textbf{Gemma2 Calibrated}} \\
\cmidrule(lr){2-3} \cmidrule(lr){4-5} \cmidrule(lr){6-7} \cmidrule(lr){8-9}
\textbf{Method} & Wagers & Net Payouts & Wagers & Net Payouts & Wagers & Net Payouts & Wagers & Net Payouts\\
\midrule
\textbf{Gemma2} & 5.87\tiny{$\pm$0.29} & -0.93\tiny{$\pm$0.05} &  5.13\tiny{$\pm$0.42} &  -0.45\tiny{$\pm$0.05} & 16.30\tiny{$\pm$8.74} & -0.52\tiny{$\pm$1.19} & 16.62\tiny{$\pm$5.81} & -0.28\tiny{$\pm$0.67} \\
Llama3.1 & 5.87\tiny{$\pm$0.29} & 0.23\tiny{$\pm$0.02} & 8.45\tiny{$\pm$0.26} & 0.09\tiny{$\pm$0.01} & 8.59\tiny{$\pm$3.21}  & -0.30\tiny{$\pm$0.31} & 8.29\tiny{$\pm$4.03} &  -0.39\tiny{$\pm$0.38}\\
Aloe & 10.95\tiny{$\pm$0.07} & 0.26\tiny{$\pm$0.02} & 9.38\tiny{$\pm$0.35} & 0.10\tiny{$\pm$0.02} & 7.96\tiny{$\pm$2.96} & -0.45\tiny{$\pm$0.29} & 7.80\tiny{$\pm$2.98}& -0.52\tiny{$\pm$0.33}  \\
BioMistral & 2.96\tiny{$\pm$0.17} & -0.54\tiny{$\pm$0.03} & 2.36\tiny{$\pm$0.11} & -0.46\tiny{$\pm$0.02} & 4.76\tiny{$\pm$1.41} & -0.69\tiny{$\pm$0.25} & 4.14\tiny{$\pm$1.40} & -0.62\tiny{$\pm$0.24}  \\
\bottomrule

\end{tabular}%
}
\end{table*}

\paragraph{Calibration.}
The mechanism also incentivizes participants to calibrate their predictions: even when a model's belief $\Q_i \neq \mathcal{P}$, calibration aligns expected scores better with realized scores, so that the learned wager better reflects \textit{actual} comparative advantage. 
We conduct calibration via temperature scaling~\citep{Xie2024Calibrating}, which predicts an appropriate temperature conditional on the last-layer hidden state.
In Table~\ref{tab:single_calibrated}, we show that when one model is calibrated (Gemma2) while the others remain unchanged, the calibrated model receives improved net payoffs because its expected score advantage more accurately reflects its realized comparative advantage. 
This suggests that rational participants have an incentive to calibrate---discouraging both overconfidence and underconfidence. 
We also report \proj~I results when all participants are calibrated for Scenarios~II and~III. 
The results show that calibrated aggregation can substantially improve accuracy on in-distribution tasks while maintaining similar performance on out-of-distribution tasks.

\if 1
\newpage
In this section, we first introduce the optimization procedure that participants use to learn their wagering strategies. We then conduct multi-agent simulations and compare the performance of the aggregated predictions with baselines under three scenarios: (1) homogeneous models with private contexts, (2) heterogeneous models without additional contexts, and (3) heterogeneous models with private contexts.
Finally, we conduct additional analyses on alternative aggregation variants, on advantage--wager alignment at convergence, and on calibrating predictions as a way to help align beliefs with ground truth, which can lead to better payout.

Throughout the experiment, we set the constants $c_1,c_2$ and, $c_3=1$ such that $w^*=((s_i-b_{-i})/2)_{+}$. We focus on the Brier score (Def.~\ref{def:brier}). 

\xtw{TODO: table in appendix on all mechanism parameters and learning parameters.}


\subsection{An Optimization Procedure for Participants of \proj}

%
We assume that the predictions and wagers of all participants are revealed along with the outcomes. Hence, the optimal wager in hindsight, $w^*$, can be derived.
%
%
As discussed in Sec.~\ref{sec:learn_wager}, it is more reasonable to use feedback from \proj to learn wagers rather than to improve predictions. Thus, each participant fixes the LLM parameters while updating the wager network.
Each participant extracts the last-layer hidden state after processing all input tokens and passes it to a two-layer perceptron (MLP) to output a scalar wager. Since $w^*$ is bounded in $[0,1]$, we apply a sigmoid transformation to the wager-head output to preserve the feasible range while improving numerical stability. All participants' wager networks are trained simultaneously without assuming that opponents' strategies are fixed. The wager networks are trained using the mean squared error (MSE) between the predicted wager $w$ and the hindsight target $w^*$. This optimization is equivalent to minimizing the regret of playing optimally in each round when $w^*>0$. 
When $w^*=0$, the MSE update toward zero is less aggressive than directly minimizing regret, since the penalty is quadratic rather than linear.

\subsection{Experiment Setup}

We focus on prediction tasks, including multiple-choice question answering and event forecasting. We limit the LLM output to one token and convert token logits into a probability distribution over the outcome space. We include participants from a heterogeneous pool of general models, Gemma-2-9B (Gemma2)~\citep{gemmateam2024gemma2} and Llama3.1-8B (Llama3.1)~\citep{grattafiori2024llama3herdmodels}, as well as domain-fine-tuned models, Llama3-Aloe-8B (Aloe)~\citep{gururajan2024aloe} and BioMistral-7B (BioMistral)~\citep{labrak-etal-2024-BioMistral}.
%
For every dataset, we use an $8{:}1{:}1$ train-validation-test split and fix the batch size to 100.  For learning-based methods, we use the Adam optimizer, tune the learning rate with grid search and adopt early stopping to prevent overfitting.
For baselines, we stop at the epoch that achieves the best validation performance. For \proj, because the training procedure is decentralized, we use the Brier score on the most recent training batches as the early stopping criterion, ensuring that the number of training-batch samples used for early stopping matches the validation-set size. Unless specified otherwise, all methods use linear pooling for aggregation.
 In all experiments, the aggregated predictions are evaluated on a held-out test set, where no further updates to the wagering strategy are allowed. 



\subsubsection{Metrics}

In addition to predictive performance metrics, including AUC and ACC, and expected calibration error (ECE), we evaluate whether the aggregation weights identify high-performing experts on each question. Since \proj is derived from a Brier-score wagering mechanism, our mechanism-specific metrics are defined in terms of the Brier score. In particular, Kendall's Tau, MRR, and dynamic regret all measure whether the learned wagers align with per-question Brier-score performance.

\begin{definition} [Kendall's Tau (K-Tau)] 
We use K-Tau to measure the ordinal association between two orderings: (1) the ordering of models by their per-question Brier-score performance and (2) the ordering of models by their wagers. Formally,
\[
\tau = \frac{\text{(number of concordant pairs) - (number of discordant pairs)}}{\text{(number of pairs)}}.
\]
\end{definition}

\begin{definition} [Mean Reciprocal Rank (MRR) of the Best Experts]
The per-sample reciprocal rank is the inverse rank of the best expert in terms of Brier score. Formally,
\[
\text{MRR} = \frac{1}{N} \sum_{n=1}^N 
\frac{1}{\text{rank}(\arg\min_j \|\delta^{(n)}_y - p^{(n)}_j\|_2^2)},
\]
where $N$ is the number of samples and the rank is induced by the aggregation weights.
\end{definition}

\begin{definition} [Dynamic Regret (D-Regret) in Brier Score]
The D-Regret is defined as
\[
\frac{1}{N} \sum_{n=1}^N 
\|\delta_y^{(n)} - p_{\text{agg}}^{(n)}\|_2^2 
- 
\min_j \|\delta_y^{(n)} - p_j^{(n)}\|_2^2.
\]
It measures the Brier-score gap between the aggregated prediction and the best individual model on each question.
\end{definition}

All reported intervals ($\pm$) in the result tables are $95\%$ confidence intervals over five independent runs.

    





\subsubsection{Baselines}\label{sec:baselines}

We consider seven baselines spanning pre-inference routers that assume black-box LLM access~\citep{chen2024routerdc, song2025irt, ong2025routellm}, learning-free ensembling~\citep{mavromatis2024pack,kang2025scalable,wang2022selfconsistency}, and learning-based ensembling with LLM hidden states~\citep{WOLPERT1992241}. 

Among the learning-free methods, uniform averaging (UniformAvg) performs equally weighted aggregation. SelfCertainty~\citep{kang2025scalable} uses the predicted distribution to estimate confidence, for example via entropy or KL divergence to a uniform distribution, and treats this confidence as the aggregation weight. PackLLM~\citep{mavromatis2024pack} uses the inverse of the LLM-evaluated perplexity on the input tokens as weights, which requires access to model parameters. These aggregations can be effective when the estimated confidence is positively correlated with model performance. Furthermore, all learning-free approaches can be interpreted as decentralized aggregation if we treat the predictions and, for PackLLM, the estimated perplexities as agents' reports. However, they may be sensitive to overconfidence or underconfidence across models, and the agents' reports are not incentive-compatible.

Learning-based methods, including pre-inference routers~\citep{chen2024routerdc,ong2025routellm, song2025irt} and stacked generalization (StackedGen)~\citep{WOLPERT1992241}, require centralized control of the participants to learn routing or aggregation networks. Pre-inference routers learn aggregation weights conditional only on the provided contexts, assuming black-box LLM access.\footnote{While these routers are generally used to select the best LLM by taking the argmax of the weights, we adapt them for soft weighted aggregation to be consistent with other methods.} Their predicted weights may capture a model's general capability conditional on question features such as topic, format, or context availability, but they do not capture each model's internal uncertainty on the specific question. In contrast, StackedGen concatenates the LLM hidden states as input to train a supervised aggregation network.

In comparison, \proj is a decentralized mechanism that incentivizes LLMs to report predictions truthfully (Thm.~\ref{thm:dsic}) and to wager according to their expected Brier-score advantage over the leave-one-out baseline (Thm.~\ref{thm:alignment}).

We provide a more detailed discussion of the baselines in Appendix~\ref{apx:baselines}.

\subsubsection{Datasets}

Our experiments consider six datasets: PubMedQA~\citep{jin-etal-2019-PubMedQA}, MedMCQA~\citep{MedMCQA}, MMLU~\citep{hendrycks2021measuring}, ARC-Challenge~\citep{clark2018think}, ForecastQA~\citep{jin-etal-2021-forecastqa}, and a newly created forecasting dataset called BayesX.

For PubMedQA and BayesX, the input tokens consist of two components: a base question and a private context, such as a publication abstract or an observed signal that may or may not help answer the question. In our experiments, we expose the private context to random subsets of participants.

BayesX is a synthetic Bayesian forecasting dataset where the base question contains a prior distribution for an event to be forecasted, the private context exposes the true and false positive rate with the evidence and the ground truth is the posterior probability distribution derived with Bayes theorem. Thus, BayesX captures a setting in which additional information reduces epistemic uncertainty about the data-generating process while increasing aleatoric uncertainty about the realized outcome.

Further details about the datasets are provided in Appendix~\ref{apx:datasets}.


\subsection{Scenario~I: Homogeneous Models with Private Contexts}


In this setting, all participants use the same LLM. For each question, only one randomly selected model is provided with the relevant context in the form of additional input tokens, which acts as private information. We use Aloe on PubMedQA and Gemma2 on BayesX. 
For BayesX, we use expected loss for all learning-based methods and evaluate KL divergence and TV distance between the predicted distribution and the ground-truth posterior distribution. MRR and D-Regret are evaluated based on expected Brier scores.
We scale the number of participants from 4 to 12.

On PubMedQA, \proj achieves performance similar to RouteLLM and StackedGen in terms of AUC, ACC, MRR, and D-Regret while outperforming other pre-inference routers and learning-free methods (Table~\ref{tab:homo}). Although \proj is trained through decentralized wager heads, the aggregated performance on the training batches increases steadily throughout training (Fig.~\ref{fig:perf_overtime} in the appendix), resembling the optimization behavior of centralized supervised learning. In terms of ECE, while \proj is slightly less calibrated than the strongest baselines on the full dataset, it matches or improves upon baselines on the subset where the model without context predicts incorrectly (Table~\ref{tab:homo_subset} in the appendix). As the number of participants increases, only \proj, StackedGen, and RouteLLM maintain similar performance, while all learning-free methods become less robust to the enlarged participation pool.

On BayesX, \proj is less effective than RouteLLM and StackedGen, but it outperforms the remaining baselines in KLD, TVD, and D-Regret (Table~\ref{tab:bayex}). RouteLLM and PackLLM achieve particularly high performance, including perfect MRR. This suggests that, in this setting, it is relatively easy to infer which model has access to the relevant context from the input tokens alone. However, this advantage may not generalize when all models receive contexts of varying relevance. Indeed, when RouteLLM is evaluated without access to the contexts, its performance becomes similar to UniformAvg. SelfCertainty performs poorly on BayesX because its confidence estimate is inversely correlated with the relevant notion of epistemic certainty: observing the signal makes the posterior more faithful, but also closer to uniform.

\subsection{Scenario~II: Heterogeneous Models with No Context}

In this setting, we use heterogeneous LLMs, including Gemma2, Aloe, Llama3.1, and BioMistral. We evaluate in-distribution performance on MedMCQA and MMLU, and out-of-distribution performance on ARC-Challenge and ForecastQA using MMLU as the training set. We present results with two participants in Tables~\ref{tab:2models} and~\ref{tab:2models_subset} in the appendix, and results with four participants in Tables~\ref{tab:4models} and~\ref{tab:4models_subset}.

Except for Gemma2 on MMLU and ARC-Challenge, the aggregated predictions of the baselines outperform individual LLMs under both the two- and four-participant settings, confirming the value of aggregation. With four participants, StackedGen, RouteLLM, NIRTRouter, and \proj are the strongest-performing methods across datasets.

On MedMCQA, adding two less capable models, Gemma2 and BioMistral, leads to noticeable performance drops for learning-free methods, including UniformAvg, PackLLM, and SelfCertainty, whereas supervised methods and \proj maintain stronger performance. Since the learning-free methods rely solely on self-estimated confidence or context perplexity, systematic overconfidence in weaker models can lead to excessive weight being assigned to subpar predictions. This behavior illustrates a practical limitation of confidence-based aggregation: even without strategic manipulation, miscalibrated confidence can increase a weak model's influence on the aggregate.

\subsection{Scenario~III: Heterogeneous Models with Private Contexts}
This setting combines Scenario~I and Scenario~II. We use all four LLMs for evaluation on PubMedQA. We consider two settings: (1) one randomly selected model is assigned the relevant context, as in Scenario~I; and (2) one randomly selected model is assigned the relevant context while another randomly selected model is assigned an irrelevant context, that is, an abstract randomly drawn from the test set. The results are provided in Table~\ref{tab:hetero} and~\ref{tab:hetero_subset}.

StackedGen and \proj stand out as the best-performing methods. When there is one relevant context and no irrelevant context, RouteLLM with context and PackLLM outperform the remaining baselines. Similar to the BayesX results, this likely occurs because context availability is easy to detect from the input tokens, allowing these methods to assign a high weight to the model that received the context. However, their advantage diminishes when an irrelevant context is also introduced. In this harder setting, detecting the presence of additional context is no longer sufficient: the aggregation method must also distinguish relevant context from misleading or irrelevant context.

\subsection{Additional Analyses}

We evaluate \proj~II and log pooling as alternative variants. Both lead to results similar to \proj~I with linear pooling, suggesting that the empirical performance of \proj is not driven by a single choice of leave-one-out baseline or pooling rule. The similarity between \proj~I and \proj~II is also consistent with the theory: both variants preserve truthful prediction reporting and advantage--wager alignment, while differing mainly in their mechanism-design tradeoff between normality and no-arbitrage. Likewise, the similarity between linear pooling and log pooling suggests that the learned wagers, rather than the precise pooling rule, are the primary driver of the aggregation behavior in our experiments.

\paragraph{Advantage--wager alignment.}
We empirically observe advantage--wager alignment in Fig.~\ref{fig:alignment}. The learned wagers are positively correlated with the comparative advantages revealed in hindsight, and we report the Pearson correlation in the figure. This provides direct evidence for the mechanism's intended behavior: models learn to wager more when they are likely to outperform the leave-one-out baseline.

We further train a predictor for each model's own score (Fig.~\ref{fig:own_score}) and a separate predictor for the leave-one-out baseline score (Fig.~\ref{fig:baseline}) using the same model hidden states. We then plot the difference between the predicted own score and the predicted baseline score against the learned wager (Fig.~\ref{fig:estimated_alignment}). The resulting alignment remains strong even though the wager, own-score predictor, and baseline-score predictor are trained with separate heads. This suggests that the hidden states contain sufficient information to estimate both the model's own expected Brier score and its comparative advantage over the aggregation baseline. In future work, we will explore more powerful predictors, including fine-tuning the LLMs, to further strengthen the alignment between learned wagers and ground-truth score differences.



\paragraph{Calibration.}Calibration can be interpreted as updating a participant's expected score so that it better matches the participant's realized score. We conduct calibration via temperature scaling~\citep{Xie2024Calibrating}. In Table~\ref{tab:single_calibrated}, we show that when one model is calibrated while the other models remain unchanged, the calibrated model receives improved net payoffs because its expected score advantage becomes more faithful to its realized advantage. This suggests that rational participants have an incentive to calibrate their predictions.
We also report \proj~I results when all participants are calibrated for Scenario~II and Scenario~III. The results show that calibrated aggregation can substantially improve accuracy on in-distribution tasks while maintaining similar performance on out-of-distribution tasks.
\fi
\section{Conclusion}
We proposed a family of \emph{advantage-aligned wagering mechanisms for LLM aggregation} (WALLA) that yields three properties: (1)~dominant-strategy incentive compatibility under general belief structures, (2)~advantage--wager alignment, and (3)~the decoupling of prediction and wager learning.
To our knowledge, this is the first mechanism designed for aggregating LLM predictions that simultaneously achieves advantage-weighted aggregation, uncertainty awareness, fully decentralized learning, and incentive-compatibility guarantees. Empirical results suggest that \proj matches centralized baselines in aggregated prediction performance.

So far, we explored learning a neural network for wagers. As state-of-the-art LLMs already exhibit the capability of gameplay~\citep{Bills2025Improving}, the natural next step would be to explore whether learning-in-context can be viable for proprietary LLMs to estimate and report the model's comparative advantage. Future work can also explore mechanisms for more complex collaboration settings that involve multi-step reasoning beyond the aggregation of probabilistic predictions.



\newpage
\bibliographystyle{unsrtnat}
\bibliography{mybib}
\newpage
\appendix

\section{Theoretical Properties of WSWM Using a Scaled Linear Score}\label{apx:warmup}
Consider the repeated setting where question $x^{(t)}$ arrives at time $t$ with outcome $y^{(t)}$.
Suppose all participants are non-strategic: reporting the predictions $p_i^{(t)}$ truthfully and wagering the total cumulative payout, namely, their budget, denoted as $B_i^{(t-1)}$.
Consider a WSWM that adopts a scaled linear score $s_{\text{lin}} \in \mathcal{S}: (\Delta(\mathcal{Y})\times \mathbb{R}_{\ge 0})^M$ such that  \[s_i^{(t)} = \frac{p_i^{(t)}[y]}{\sum_{j=1}^M w_j^{(t)} p_j^{(t)}[y]}\] and the linear pooling (i.e., $p^{(t)}_{\text{agg}} = \frac{\sum_{i=1}^M w_i^{(t)} p_i^{(t)}}{\sum_{i=1}^M w_i^{(t)}}$).
We first demonstrate that such a mechanism inherently performs an exponentiated gradient-descent step on the cross-entropy loss. We then show the mechanism exactly recovers Bayesian Model Averaging (BMA) \citep{hoeting1999bayesian}. Finally, we derive a tighter regret bound than the one shown in \citet{Freeman2020no}, highlighting a tradeoff between this tight bound and incentive compatibility.

For notation simplicity, we remove the timestamp and simplify $p_{\text{agg}}[y]$ as $p_{\text{agg}}$ and $p_{i}[y]$ as $p_{i}$ when the context is clear.




\subsection{WSWM Recovers the Scaled Negative Gradient of Cross-Entropy Loss}

We connect this mechanism to centralized supervised learning by analyzing the cross-entropy loss of the aggregated prediction. 


\begin{theorem}
Suppose WSWM utilizes the scoring function $s_{\text{lin}}$. The net payout function, defined as $\pi_i = w_i \left( s_i - \frac{\sum_{k=1}^M s_k w_k}{\sum_{k=1}^M w_k} \right)$, satisfies $\pi_i = -w_i \frac{\partial L}{\partial w_i}$ where $L = - \log p_{\text{agg}}$, the cross-entropy loss.
\end{theorem}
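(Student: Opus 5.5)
The proof is a direct calculation. I would compute both sides explicitly as functions of the wager profile $\mathbf{w}$, with the predictions $p_j = p_j[y]$ at the realized outcome held fixed, and show that they coincide. Throughout I write $W = \sum_{k} w_k$ and $p_{\text{agg}} = \sum_k w_k p_k / W$. I assume $W>0$ and $p_{\text{agg}}>0$, so that both the scoring rule $s_{\text{lin}}$ and $L=-\log p_{\text{agg}}$ are well defined.

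\emph{Step 1 (right-hand side).} Differentiate $L$ with respect to $w_i$ by the chain rule and the quotient rule. This gives
\[
\frac{\partial p_{\text{agg}}}{\partial w_i} = \frac{p_i}{W} - \frac{\sum_k w_k p_k}{W^2} = \frac{p_i - p_{\text{agg}}}{W},
\]
and therefore
\[
-w_i \frac{\partial L}{\partial w_i} = \frac{w_i}{p_{\text{agg}}}\cdot\frac{p_i - p_{\text{agg}}}{W}.
\]

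\emph{Step 2 (left-hand side).} Rewrite the score using the identity $\sum_j w_j p_j = W p_{\text{agg}}$, so that
\[
s_i = \frac{p_i}{W p_{\text{agg}}}.
\]
The key simplification is that the wager-weighted average score is constant:
\[
\frac{\sum_k w_k s_k}{W} = \frac{\sum_k w_k p_k}{W \cdot W p_{\text{agg}}} = \frac{1}{W}.
\]
Consequently
\[
\pi_i = w_i\left(\frac{p_i}{W p_{\text{agg}}} - \frac{1}{W}\right) = \frac{w_i\,(p_i - p_{\text{agg}})}{W\, p_{\text{agg}}},
\]
which matches the expression obtained in Step 1.

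\emph{Main obstacle.} The computation itself is routine, so the only step needing care is interpretive rather than computational. The score $s_{\text{lin}}$ itself depends on $\mathbf{w}$ through its normalizer. The identity must therefore be read as follows: the payout $\pi_i$ is evaluated at the current wagers, and it equals the partial derivative of $L(\mathbf{w}) = -\log\big(\sum_k w_k p_k / W\big)$, where $L$ is differentiated in $\mathbf{w}$ with the predictions fixed. The score's own dependence on $\mathbf{w}$ is not differentiated through. I would state this convention explicitly at the start of the proof. I would also remark that the resulting update $w_i \leftarrow w_i + \pi_i$ is then $w_i\,(1 - \partial L/\partial w_i)$. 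This is the exponentiated-gradient form used in the next results of this appendix.
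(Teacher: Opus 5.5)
Your proposal is correct and follows essentially the same route as the paper, which also differentiates $L=-\log\bigl(\sum_k w_k p_k\bigr)+\log\bigl(\sum_k w_k\bigr)$ directly and uses the identity $\sum_k s_k w_k = 1$ to match the two sides. Your observation that the wager-weighted average score equals $1/W$ is that same identity, and writing the computation in terms of $p_{\text{agg}}$ via the quotient rule is only a cosmetic difference.
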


\begin{proof}

Expanding the logarithm of the quotient yields the loss function:
\[
L = -\log \left( \sum_{i=1}^M w_i p_i \right) + \log \left( \sum_{i=1}^M w_i \right).
\]

Taking the partial derivative of $L$ with respect to a specific weight $w_i$ gives:
\[
\frac{\partial L}{\partial w_i} = -\frac{p_i}{\sum_{k=1}^M w_k p_k} + \frac{1}{\sum_{k=1}^M w_k}.
\]

 Substituting the given scoring rule $s_k$ into $\sum_{k=1}^M s_k w_k$ gives:
\begin{align*}
\sum_{k=1}^M s_k w_k &= \sum_{k=1}^M \left( \frac{p_k}{\sum_{j=1}^M w_j p_j} \right) w_k \\
&= \frac{\sum_{k=1}^M w_k p_k}{\sum_{j=1}^M w_j p_j} \\
&= 1.
\end{align*}

We substitute this result back into the net payout function $\pi_i$:
\begin{align*}
\pi_i &= w_i \left( s_i - \frac{1}{\sum_{k=1}^M w_k} \right) \\
&= w_i \left( \frac{p_i}{\sum_{j=1}^M w_j p_j} - \frac{1}{\sum_{k=1}^M w_k} \right)\\
 &= -w_i \left( -\frac{p_i}{\sum_{j=1}^M w_j p_j} + \frac{1}{\sum_{k=1}^M w_k} \right).
\end{align*}

We observe that the expression inside the parentheses is exactly the partial derivative $\frac{\partial L}{\partial w_i}$ derived previously. Therefore, by direct substitution:
\[
\pi_i = -w_i \frac{\partial L}{\partial w_i}.
\]
\end{proof}

\begin{remark} This relationship demonstrates that the budget update $B_i^{(t)} = w_i^{(t)} + \pi_i^{(t)} = w_i^{(t)} \left(1 - \frac{\partial L^{(t)}}{\partial w_i^{(t)}}\right) = w_i^{(t)} - w_i^{(t)} \frac{\partial L^{(t)}}{\partial w_i^{(t)}}$ has the form of a Multiplicative Weights Update (MWU) algorithm~\citep{v008a006}. It can be interpreted as executing decentralized gradient descent where the learning rate is scaled dynamically by the current weight. It also mirrors the exponentiated gradient step $B_i^{(t)} \approx w_i^{(t)} e^{-\frac{\partial L^{(t)}}{\partial w_i^{(t)}}}$ under a first-order Taylor approximation.
\end{remark}

\subsection{Equivalence to Bayesian Model Averaging}

We now show that the WSWM under this setting is equivalent to Bayesian model averaging (BMA)~\citep{hoeting1999bayesian}. We start with the review of BMA.

\begin{definition}[Bayesian Model Averaging (BMA)] \label{def:bma}
Let $\theta_1, \ldots, \theta_M$ be discrete parameters for the $M$ models. Given a history of observations $D^{(1:t-1)} = \{(x^{(1)}, y^{(1)}), \ldots, (x^{(t-1)}, y^{(t-1)})\}$, the posterior probability of parameter $\theta_i$ is $p(\theta_i|D^{(1:t-1)})$. Assuming $(x^{(t)},y^{(t)})$ are drawn i.i.d., the BMA predictive probability for the true label $y^{(t)}$ is:
\[
p_{\text{BMA}}(y^{(t)}|x^{(t)}, D^{(1:t-1)}) = \sum_{i=1}^M p(y^{(t)}|x^{(t)},\theta_i)p(\theta_i|D^{(1:t-1)}).
\]
\end{definition}

To prove that the mechanism recovers the exact posterior, we assume that the BMA predictive distribution perfectly recovers the true marginal probability of the label.

\begin{theorem} \label{thm:bma_recovery}
Assume the BMA predictive distribution equals the true marginal data distribution:\\ $p_{\text{BMA}}(y^{(t)}|x^{(t)}, D^{(1:t-1)}) = p(y^{(t)}|x^{(t)}, D^{(1:t-1)})$. If the initial budgets are set to the prior probabilities $B_i^{(0)} = p(\theta_i)$, the aggregated prediction satisfies $p^{(t)}_{\text{agg}} = p_{\text{BMA}}(y^{(t)}|x^{(t)}, D^{(1:t-1)})$ for all $t \ge 1$, and the budgets track the exact posterior $B_i^{(t)} = p(\theta_i|D^{(1:t)})$.
\end{theorem}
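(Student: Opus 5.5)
The proof will be an induction on $t$ that tracks the joint evolution of wagers and budgets. Throughout, agents wager their whole budget, so $w_i^{(t)} = B_i^{(t-1)}$.

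\emph{Base case.} Since $B_i^{(0)} = p(\theta_i)$, the wagers at $t=1$ sum to one. The aggregate is therefore
\[
p_{\text{agg}}^{(1)} = \sum_i p(\theta_i)\, p_i^{(1)}[y].
\]
Reading each agent's truthful report as the model likelihood, $p_i^{(t)}[y] = p(y^{(t)}\mid x^{(t)},\theta_i)$, this is exactly the BMA predictive of Definition~\ref{def:bma} with an empty history.

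\emph{Budget update.} The key computation gives the budget update in closed form. From the proof of the preceding theorem, $\sum_k s_k w_k = 1$. The net payout is
\[
\pi_i = w_i\Bigl(s_i - \tfrac{1}{W}\Bigr), \qquad W=\sum_k w_k,
\]
so
\[
B_i^{(t)} = w_i^{(t)} + \pi_i^{(t)} = w_i^{(t)}\Bigl(1 - \tfrac{1}{W^{(t)}}\Bigr) + \frac{w_i^{(t)} p_i^{(t)}}{\sum_j w_j^{(t)} p_j^{(t)}}.
\]
When $W^{(t)}=1$, the first term vanishes and
\[
B_i^{(t)} = \frac{w_i^{(t)} p_i^{(t)}}{\sum_j w_j^{(t)} p_j^{(t)}}.
\]
This is Bayes' rule applied to the prior $w_i^{(t)}$ with likelihood $p_i^{(t)}$. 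Summing over $i$ shows $\sum_i B_i^{(t)} = 1$, so the wagers at step $t+1$ again sum to one. That closes the invariant $W^{(t)}=1$ for every $t$.

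\emph{Inductive step.} Assume $w_i^{(t)} = B_i^{(t-1)} = p(\theta_i\mid D^{(1:t-1)})$. Then
\[
p_{\text{agg}}^{(t)} = \sum_i p(\theta_i\mid D^{(1:t-1)})\, p(y^{(t)}\mid x^{(t)},\theta_i) = p_{\text{BMA}}.
\]
The update above gives
\[
B_i^{(t)} = \frac{p(\theta_i\mid D^{(1:t-1)})\, p(y^{(t)}\mid x^{(t)},\theta_i)}{p_{\text{BMA}}}.
\]
This equals $p(\theta_i\mid D^{(1:t)})$ by Bayes' theorem, provided the denominator is the true marginal $p(y^{(t)}\mid x^{(t)},D^{(1:t-1)})$. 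That is precisely the stated assumption.

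\emph{Main obstacle.} The delicate step is justifying that Bayes update. It needs $p(y^{(t)}\mid x^{(t)},\theta_i,D^{(1:t-1)}) = p(y^{(t)}\mid x^{(t)},\theta_i)$, which follows from the i.i.d.\ assumption in Definition~\ref{def:bma}. It also needs the conditioning on $x^{(t)}$ not to alter the parameter posterior, i.e.\ $p(\theta_i\mid x^{(t)},D^{(1:t-1)}) = p(\theta_i\mid D^{(1:t-1)})$. I would state this second condition explicitly: inputs are independent of $\theta$. The marginal-equality hypothesis then supplies the normalizer. The remaining work is bookkeeping of the budget-sum invariant and the identification of reports with the model likelihoods.
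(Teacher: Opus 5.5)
Your proof is correct and follows the paper's route: induction on $t$ with $w_i^{(t)} = B_i^{(t-1)}$, the closed-form update $B_i^{(t)} = w_i^{(t)} p_i^{(t)}/\sum_j w_j^{(t)} p_j^{(t)}$, and Bayes' theorem with the marginal-equality hypothesis supplying the normalizer. You also verify that $\sum_i w_i^{(t)} = 1$ is preserved, which is needed for $B_i^{(t)} = w_i^{(t)} s_i^{(t)}$, and you state that $x^{(t)}$ is independent of $\theta$; the paper uses both points silently, the latter under the label ``conditional independence''.
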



\begin{proof}
We proceed by induction. The base case holds by initialization $B_i^{(0)} = p(\theta_i)$. Assume the inductive hypothesis $B_i^{(t-1)} = p(\theta_i|D^{(1:t-1)})$. Because $w_i^{(t)} = B_i^{(t-1)}$, substituting the hypothesis into the aggregated prediction yields $p^{(t)}_{\text{agg}} = \sum_{j=1}^M p(\theta_j|D^{(1:t-1)}) p(y^{(t)}|x^{(t)},\theta_j)$, which exactly matches Definition \ref{def:bma}.

Evaluating the budget update step established previously:
\begin{align*}
    B_i^{(t)} &= w_i^{(t)} s_i^{(t)} \quad= w_i^{(t)} \frac{p_i^{(t)}}{\sum_{j=1}^M w_j^{(t)} p_j^{(t)}} \quad= B_i^{(t-1)} \frac{p_i^{(t)}}{p^{(t)}_{\text{agg}}} \\
    &= \frac{p(\theta_i|D^{(1:t-1)}) p(y^{(t)}|x^{(t)},\theta_i) }{p_{\text{BMA}}(y^{(t)}|x^{(t)}, D ^{(1:t-1)})} \\
    &= \frac{p(\theta_i|x^{(t)}, D^{(1:t-1)}) p(y^{(t)}|x^{(t)}, D ^{(1:t-1)},\theta_i) }{p_{\text{BMA}}(y^{(t)}|x^{(t)}, D ^{(1:t-1)})} \quad \text{(conditional independence)} \\
    &= \frac{p(\theta_i|x^{(t)}, D^{(1:t-1)}) p(y^{(t)}|x^{(t)}, D ^{(1:t-1)},\theta_i) } {p(y^{(t)}|x^{(t)}, D ^{(1:t-1)})} \quad \text{(by assumption)} \\
    &= p(\theta_i|y^{(t)}, x^{(t)}, D^{(1:t-1)})  \quad \text{(Bayes' theorem)} \\ 
    &= p(\theta_i|D^{(1:t)}).
\end{align*}
This completes the proof.
\end{proof}

As $p_{\text{BMA}}$ cannot recover the true marginal exactly, its effectiveness is typically quantified with the notion of regret. We provide the regret bound in the next section.

\subsection{Regret Bound}

We continue to assume the initial budgets sum to one and derive a regret bound for the WSWM with non-strategic agents under $s_{lin}$ (i.e., $s_i^{(t)} = \frac{p_i^{(t)}}{\sum_{j=1}^M w_j^{(t)} p_j^{(t)}}$). 

\begin{theorem}[Regret Bound]
Assume $p_i^{(t)} > 0$ for all $i, t$. The regret of the WSWM mechanism over $T$ rounds with respect to any optimal fixed distribution of models $\mathbf{u} \in \Delta(\mathcal{M})$ is bounded by:
\[
\sum_{t=1}^T L^{(t)}(\mathbf{w}^{(t)}) - \sum_{t=1}^T L^{(t)}(\mathbf{u}) \le D_{\text{KL}}(\mathbf{u} \parallel \mathbf{w}^{(0)}) + H(\mathbf{u}),
\]
where $D_{\text{KL}}$ is the Kullback-Leibler divergence and $H(\mathbf{u})$ is the Shannon entropy.
\end{theorem}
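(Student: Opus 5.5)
The plan is to exploit the closed form of the budget dynamics established above. Throughout I read $L^{(t)}(\mathbf{w}) = -\log\big(\sum_i w_i p_i^{(t)}/\sum_i w_i\big)$ at the realized $y^{(t)}$, and I write $P_i := \prod_{t=1}^T p_i^{(t)}$ (positive by assumption).

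\textbf{Step 1: budgets stay normalized.} First I check that budgets remain on the simplex. Under $s_{\text{lin}}$ we showed $\sum_k s_k w_k = 1$, so $\sum_i \pi_i^{(t)} = 0$ and $\sum_i B_i^{(t)} = \sum_i w_i^{(t)} = 1$ for all $t$. Hence $\mathbf{w}^{(t)} \in \Delta(\mathcal{M})$ and $L^{(t)}(\mathbf{w}^{(t)}) = -\log p^{(t)}_{\text{agg}}$.

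\textbf{Step 2: unroll the update.} From the proof of Theorem~\ref{thm:bma_recovery}, the budget update is multiplicative: $w_i^{(t+1)} = w_i^{(t)} p_i^{(t)}/p_{\text{agg}}^{(t)}$. Unrolling gives
\[
w_i^{(T+1)} = \frac{w_i^{(0)}\, P_i}{\prod_{t=1}^T p_{\text{agg}}^{(t)}} .
\]
(If the indexing is shifted so that $\mathbf{w}^{(1)} = \mathbf{w}^{(0)}$, this is only bookkeeping.) Summing over $i$ and using $\sum_i w_i^{(T+1)} = 1$ yields the telescoping identity
\[
\sum_{t=1}^T L^{(t)}(\mathbf{w}^{(t)}) = -\log \sum_{i=1}^M w_i^{(0)} P_i .
\]
So the cumulative loss of the mechanism equals the log-loss of the Bayesian mixture, consistent with the BMA equivalence.

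\textbf{Step 3: bound the mixture against any $\mathbf{u}$.} Since $\log$ is increasing, for every $j$ we have $-\log\sum_i w_i^{(0)}P_i \le -\log w_j^{(0)} - \log P_j$. Averaging this over $j\sim\mathbf{u}$ gives
\[
-\log\sum_i w_i^{(0)}P_i \;\le\; \sum_j u_j\Big(\log\tfrac{u_j}{w_j^{(0)}} - \log u_j\Big) + \sum_j u_j\big(-\log P_j\big)
= D_{\text{KL}}(\mathbf{u}\,\|\,\mathbf{w}^{(0)}) + H(\mathbf{u}) + \sum_{t=1}^T\sum_j u_j\big(-\log p_j^{(t)}\big).
\]
(Using Gibbs' inequality in place of this termwise averaging would even drop the $H(\mathbf{u})$ term.)

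\textbf{Step 4: identify the comparator term.} The remaining task is to identify $\sum_t\sum_j u_j(-\log p_j^{(t)})$ with $\sum_t L^{(t)}(\mathbf{u})$. This is where I expect the main obstacle. If $L^{(t)}(\mathbf{u})$ denotes the expected log-loss of drawing a model according to $\mathbf{u}$, the "fixed distribution of models" reading, then the identification is immediate and the theorem follows.

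If instead $L^{(t)}(\mathbf{u})$ is the loss of the $\mathbf{u}$-weighted linear pool, $-\log\sum_j u_j p_j^{(t)}$, then Jensen's inequality goes the wrong way, since $-\log\sum_j u_j p_j \le \sum_j u_j(-\log p_j)$. In that case I would try the following route. Use $\prod_t \sum_j u_j p_j^{(t)} = \sum_{\text{paths}}\prod_t u_{j_t}p_{j_t}^{(t)}$ and compare against the mixture over constant paths. I expect this comparison to hold only for point-mass or near-deterministic $\mathbf{u}$, and $H(\mathbf{u})$ vanishes in exactly that regime. So I would first pin down the comparator convention, and in the write-up state the bound for the randomized-selection comparator, which is also what makes $H(\mathbf{u})$ arise naturally.
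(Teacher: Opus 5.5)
Your Steps 1--3 are the paper's argument. It unrolls the same multiplicative update to $w_i^{(T+1)} = w_i^{(0)}\prod_{t} p_i^{(t)}/Z_t$ and uses $\log w_i^{(T+1)}\le 0$, which is equivalent to your observation that the mixture dominates each of its terms. It then averages over $i\sim\mathbf{u}$ and splits $-\sum_i u_i\log w_i^{(0)} = D_{\text{KL}}(\mathbf{u}\,\|\,\mathbf{w}^{(0)}) + H(\mathbf{u})$. For Step 4 the paper explicitly adopts the linear-pool reading $L^{(t)}(\mathbf{u}) = -\log\sum_i u_i p_i^{(t)}$. It finishes by ``substituting'' Jensen's inequality $\sum_i u_i(-\log p_i^{(t)})\ge L^{(t)}(\mathbf{u})$ into the averaged bound. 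That replaces a term on the right-hand side of an upper bound by something smaller, which is exactly the wrong-direction step you flagged. The paper's proof does not go through at that point.

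Your suspicion is justified in the strongest sense: under the linear-pool reading the statement is false, so no path-comparison can rescue it. Take $M=2$, $\mathbf{w}^{(0)}=\mathbf{u}=(\tfrac12,\tfrac12)$ (so the right-hand side is $\log 2$), and $T=2$. Set the realized-outcome probabilities to $p_1^{(1)}=p_2^{(2)}=1-\epsilon$ and $p_2^{(1)}=p_1^{(2)}=\epsilon$. Then:
\begin{itemize}
\item $Z_1=\tfrac12$, $\mathbf{w}^{(2)}=(1-\epsilon,\epsilon)$, and $Z_2=2\epsilon(1-\epsilon)$;
\item the mechanism's loss is $\log\frac{1}{\epsilon(1-\epsilon)}$, while the pool's loss is $2\log 2$;
\item the regret is $\log\frac{1}{4\epsilon(1-\epsilon)}$, which exceeds $\log 2$ whenever $\epsilon(1-\epsilon)<\tfrac18$.
\end{itemize}
Alternating this pattern makes the regret grow linearly in $T$.

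Letting $\epsilon\to 0$ in the same sequence also defeats every $\mathbf{u}=(1-\delta,\delta)$ with $\delta\in(0,1)$, because the pool's loss stays bounded while the mechanism's diverges. So drop the hedge about near-deterministic $\mathbf{u}$. Only exact point masses survive; there the two readings coincide and the bound is $-\log w_j^{(0)}$.

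Your write-up plan is therefore the right one. State and prove the result for the randomized-selection comparator $\sum_j u_j(-\log p_j^{(t)})$. As your Gibbs remark shows, it then holds with $D_{\text{KL}}(\mathbf{u}\,\|\,\mathbf{w}^{(0)})$ alone, and $H(\mathbf{u})$ is pure slack rather than something that arises naturally.
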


\begin{proof}

Because $\sum_{k=1}^M w_k^{(t)} = 1$ and $\sum_{k=1}^M s_k^{(t)}w_k^{(t)} = \frac{\sum_{k=1}^M w_k^{(t)} p_k^{(t)}}{\sum_{j=1}^M w_j^{(t)} p_j^{(t)}} = 1$, the net payout simplifies to:
\[
\pi_i^{(t)} = w_i^{(t)}(s_i^{(t)} - 1).
\]

Let $Z_t = \sum_{j=1}^M w_j^{(t)} p_j^{(t)}$.
A agent's updated budget for the next round is her wager plus her net payout: $w_i^{(t+1)} = B_i^{(t)} = w_i^{(t)} + \pi_i^{(t)} = w_i^{(t)} s_i^{(t)} = \frac{w_i^{(t)} p_i^{(t)}}{Z_t}$.
We expand the terminal weight $w_i^{(T+1)}$ recursively:
\[
w_i^{(T+1)} = w_i^{(0)} \prod_{t=1}^T \frac{p_i^{(t)}}{Z_t}.
\]
Taking the natural logarithm of both sides:
\[
\log w_i^{(T+1)} = \log w_i^{(0)} + \sum_{t=1}^T \log p_i^{(t)} - \sum_{t=1}^T \log Z_t.
\]
Define the instantaneous loss as $L^{(t)}(\mathbf{w}^{(t)}) = -\log Z_t$ and the loss for a fixed model $i$ as $L^{(t)}(\mathbf{e}_i) = -\log p_i^{(t)}$. 
\[
\log w_i^{(T+1)} = \log w_i^{(0)} - \sum_{t=1}^T L^{(t)}(\mathbf{e}_i) + \sum_{t=1}^T L^{(t)}(\mathbf{w}^{(t)}).
\]
Because $\mathbf{w}^{(T+1)} \in \Delta(\mathcal{M})$, we know $\log w_i^{(T+1)} \le 0$. Rearranging gives:
\[
\sum_{t=1}^T L^{(t)}(\mathbf{w}^{(t)}) - \sum_{t=1}^T L^{(t)}(\mathbf{e}_i) \le -\log w_i^{(0)}.
\]

Taking the expectation over $i \sim \mathbf{u}$:
\[
\sum_{t=1}^T L^{(t)}(\mathbf{w}^{(t)}) - \sum_{i=1}^M u_i \sum_{t=1}^T L^{(t)}(\mathbf{e}_i) \le -\sum_{i=1}^M u_i \log w_i^{(0)}.
\]
By Jensen's inequality, $-\sum_{i=1}^M u_i \log p_i^{(t)} \ge -\log \left( \sum_{i=1}^M u_i p_i^{(t)} \right) = L^{(t)}(\mathbf{u})$. Substituting this lower bound and isolating the information-theoretic quantities produces:
\[
\sum_{t=1}^T L^{(t)}(\mathbf{w}^{(t)}) - \sum_{t=1}^T L^{(t)}(\mathbf{u}) \le \sum_{i=1}^M u_i \log \frac{u_i}{w_i^{(0)}} - \sum_{i=1}^M u_i \log u_i,
\]
which simplifies to $D_{\text{KL}}(\mathbf{u} \parallel \mathbf{w}^{(0)}) + H(\mathbf{u})$.
\end{proof}

\begin{remark} The constant regret bound $D_{\text{KL}}(\mathbf{u} \parallel \mathbf{w}^{(0)}) + H(\mathbf{u})$
achieved by WSWM is tighter than the $\mathcal{O}(\sqrt{T \ln M})$ sublinear bound established by \citet{Freeman2020no}. This difference highlights a tradeoff between a tight regret bound and IC.\footnote{Although \citet{Freeman2020no} is incentive-compatible to agents who strategically report predictions, they assume a fixed wagering strategy that always wagers a fraction $\eta$ of the cumulative payout.} 
\end{remark}


\section{Deferred Content from Sec.~\ref{sec:mechanism}}
\label{app:D}

\subsection{Proof of BNE under Common Prior}\label{proof:BNE}

\bne*
 
\begin{proof}
Fix opponents' equilibrium strategies $(\mathbf{p}_{-i}^*, \mathbf{w}_{-i}^*)$.
By Theorem~\ref{thm:dsic}, $p_i^* = \Q(Y = \cdot \mid \mathcal{F}_i)$ uniquely maximizes $\mathbb{E}_{\Q}[s(p_i, Y) \mid \mathcal{F}_i]$ over all $p_i$, and hence maximizes $A_i = \mathbb{E}_{\Q}[s(p_i, Y) - b_{-i} \mid \mathcal{F}_i]$. Given $p_i^*$, Theorem~\ref{thm:swa} gives the best-response wager $w_i^*(p^*_i) = (A_i/2c_3)^+$. When $A_i > 0$, the pair $(p_i^*, w_i^*(p^*_i))$ is the unique maximizer of the expected net payout. When $A_i \leq 0$, $w_i^*(p^*_i) = 0$ and the net payout is zero regardless of $p_i$; the agent is excluded from aggregation. A fixed point of the wager best-response mapping exists by Brouwer's theorem for bounded $s$, as the mapping is continuous on $[0, (\overline{s} - \underline{s})/(2c_3)]^M$.
\end{proof}

\subsection{Proof of NE under Heterogenenous Beliefs}\label{proof:NE}

\nashe*
 
\begin{proof}
Fix opponents' equilibrium strategies $(\mathbf{p}_{-i}^*, \mathbf{w}_{-i}^*)$. 
By Theorem~\ref{thm:dsic}, $p_i^*$ maximizes $\mathbb{E}_{\Q_i}[s(p_i, Y) \mid \mathcal{F}_i]$ and hence maximizes $A_i(p_i) = \mathbb{E}_{\Q_i}[s(p_i, Y) - b_{-i} \mid \mathcal{F}_i]$ (since $b_{-i}$ depends on opponents' strategies, not on $p_i$). By Theorem~\ref{thm:swa}, for any fixed $p_i$, the net payoff is maximized at $w_i^*(p_i^*) = (A_i/2c_3)^+$. 
Since $p_i^*$ maximizes $A_i$, the pair $(p_i^*, w_i^*(p^*_i))$ maximizes the expected net payoff over all $(\tilde{p}_i, \tilde{w}_i)$ jointly. When $A_i > 0$, the pair $(p_i^*, w_i^*(p^*_i))$ is the unique maximizer of the expected net payout. A fixed point exists by Brouwer's theorem for bounded $s$.
The NE does not require information about others' beliefs because $(p^*_i, w^*_i(p_i^*))$ maximizes the expected net payoff with respect to any opponents' beliefs and profile $(\mathbf{p}_{-i}, \mathbf{w}_{-i})$.
\end{proof}

\subsection{Proofs of Properties of the Two Variants}
\label{proof:variants}

\normality*
\begin{proof}
By strict properness, $p_i^*:= \Q_i(Y = \cdot \mid \mathcal{F}_i)$ uniquely maximizes $\mathbb{E}[s(p, Y) \mid \mathcal{F}_i]$ when $w_i \ge 0$. With $w_j > 0$, replacing $p_j$ with $p'_j$ a prediction closer to $p_i^*$ increases $\mathbb{E}[s(p_j, Y) \mid \mathcal{F}_i]$, which raises the baseline $\mathbb{E}[b_{-i}^{\mathrm{I}} \mid \mathcal{F}_i]$ and hence reduces $A_i$ and the net payout. When $w_i = 0$ or $w_j=0$, the expected net payoff remains the same regardless of player $j$'s prediction. 
\end{proof}


 \arbone*
\begin{proof}
By Theorem 3.3 of~\citet{Chen2014Removing}, when at least two predictions differ ($p_j \neq p_k$ for some $j, k \neq i$), there exists $\hat{p}_i$ with $s(\hat{p}_i, y) > b_{-i}^{\mathrm{I}}$ for all $y$. Let $\delta(y) := s(\hat{p}_i, y) - b_{-i}^{\mathrm{I}} > 0$ and $\delta_{\min} := \min_y \delta(y) > 0$. Then $\pi_i = w_i(\delta(y) - c_3 w_i)$. Choosing any $w_i \in (0,\, \delta_{\min}/c_3)$ ensures $c_3 w_i < \delta(y)$ for all $y$, hence $\pi_i > 0$ for all $y$.
\end{proof}

 \noarb*
\begin{proof}
By Theorem~4.1 of~\citet{Chen2014Removing}, the base net payout $\tilde{\pi}_i := w_i(s(p_i, y) - s(q_{-i}, y))$ has the form of a no-arbitrage wagering mechanism: there exists $y^*$ with $\tilde{\pi}_i(y^*) \leq 0$ for any $\mathbf{p}$ and $\mathbf{w}$. 
Since $\pi_i = \tilde{\pi}_i - c_3 w_i^2 \leq \tilde{\pi}_i$, it follows that $\pi_i(y^*) \leq 0$.
\end{proof}
 
\normtwo*
\begin{proof}
We construct a  counterexample with the Brier score
$s(p,y)=c_1-c_2\|p-\delta_y\|_2^2$ (Def.~\ref{def:brier}).

Recall that normality (Def.~\ref{def:normality}) is asserted for every \emph{fixed}
prediction profile $\mathbf{p}\in\Delta(\mathcal{Y})^M$ and wager profile
$\mathbf{w}\in\mathbb{R}_{\ge0}^M$; these are treated as deterministic
parameters, and the conditional expectation $\mathbb{E}_{\Q_i}[\cdot\mid\mathcal{F}_i]$
is taken solely over the randomness in $Y$.  Consequently, for any fixed
$\mathbf{p}$ and $\mathbf{w}$, the pooled baseline
$q_{-i}=\sum_{j\ne i}w_jp_j/W_{-i}$ is a fixed vector in $\Delta(\mathcal{Y})$ and $p_1=\mathbb{E}_{\Q_i}[\delta_Y\mid\mathcal{F}_1]$
is $\mathcal{F}_1$-measurable,
\[
  \mathbb{E}_{\Q_i}\!\left[\|q_{-1}-\delta_Y\|_2^2\mid\mathcal{F}_1\right]
  = \|q_{-1}-p_1\|_2^2
    + \mathbb{E}_{\Q_i}\!\left[\|p_1-\delta_Y\|_2^2\mid\mathcal{F}_1\right],
\]
where the cross-term $2\langle q_{-1}-p_1,\,\mathbb{E}_{\Q_i}[p_1-\delta_Y\mid\mathcal{F}_1]\rangle
= 2\langle q_{-1}-p_1,\,p_1-p_1\rangle=0$ vanishes.

Hence
\begin{equation}
\label{eq:baseline_score}
  \mathbb{E}_{\Q_i}[s(q_{-1},Y)\mid\mathcal{F}_1]
  = c_1 - c_2\|q_{-1}-p_1\|_2^2 - c_2 \mathbb{E}_{\Q_i}\!\left[\|p_1-\delta_Y\|_2^2\mid\mathcal{F}_1\right].
\end{equation}

Let $\mathcal{Y}=\{1,2,3\}$, $M=3$, and consider normality for player $i=1$.
Fix the following deterministic profiles:
\[
  p_1=(0.5,\,0.5,\,0),\quad
  p_2=(1,\,0,\,0),\quad
  p_3=(0,\,1,\,0),\quad
  w_2=1,\quad w_3=2,
\]
so $W_{-1}=3$.

The \proj~II baseline for player~1 is
\[
  q_{-1}
  = \frac{w_2 p_2+w_3 p_3}{W_{-1}}
  = \tfrac{1}{3}(1,0,0)+\tfrac{2}{3}(0,1,0)
  = \!\left(\tfrac{1}{3},\tfrac{2}{3},0\right).
\]

The squared distance is
\[
  \|q_{-1}-p_1\|_2^2
  = \left\|\!\left(-\tfrac{1}{6},\tfrac{1}{6},0\right)\right\|_2^2
  = \frac{1}{36}+\frac{1}{36}
  = \frac{1}{18}.
\]

Now let $\mathbf{p}'$ be the profile obtained by setting
$p'_2=p_1=(0.5,0.5,0)$ (player~2's prediction improves to match player~1's
true belief) while keeping $p'_3=p_3$ and all wagers unchanged.  

The new pooled baseline is
\[
  q'_{-1}
  = \tfrac{1}{3}(0.5,0.5,0)+\tfrac{2}{3}(0,1,0)
  = \!\left(\tfrac{1}{6},\tfrac{5}{6},0\right),
\]
and the new squared distance to $p_1$ is
\[
  \|q'_{-1}-p_1\|_2^2
  = \left\|\!\left(-\tfrac{1}{3},\tfrac{1}{3},0\right)\right\|_2^2
  = \frac{1}{9}+\frac{1}{9}
  = \frac{2}{9}
  > \frac{1}{18}
  = \|q_{-1}-p_1\|_2^2.
\]


By~\eqref{eq:baseline_score},
\[
  \mathbb{E}_{\Q_i}[s(q'_{-1},Y)\mid\mathcal{F}_1]
  < \mathbb{E}_{\Q_i}[s(q_{-1},Y)\mid\mathcal{F}_1].
\]
Because the net payout subtracts the baseline score,
and $w_1>0$, the expected payoff of player~1 strictly increases:
\[
  \mathbb{E}_{\Q_i}[\pi_1^\mathrm{II}(\mathbf{w},\mathbf{p}',Y)\mid\mathcal{F}_1]
  > \mathbb{E}_{\Q_i}[\pi_1^\mathrm{II}(\mathbf{w},\mathbf{p},Y)\mid\mathcal{F}_1].
\]
This is a contradiction to Def.~\ref{def:normality} and hence \proj~II fails
normality.
\end{proof}

\budget*
\begin{proof}
We first consider \emph{\proj~I}. 
Write $s_i := s(p_i, y)$, $\bar{s}_w := \frac{\sum_j w_j s_j}{W}$ and $W_{-i} = \sum_{j\ne i}w_j$. From $W\bar{s}_w = w_i s_i + W_{-i} b_{-i}^{\mathrm{I}}$, we have:
\begin{equation}\label{eq:loo-global}
s_i - b_{-i}^{\mathrm{I}} = \frac{W(s_i - \bar{s}_w)}{W_{-i}}, \qquad s_i - \bar{s}_w = \frac{W_{-i}}{W}(s_i - b_{-i}^{\mathrm{I}}).
\end{equation}
Writing $\frac{W}{W_{-i}} = 1 + \frac{w_i}{W_{-i}}$ and using $\sum_i w_i(s_i - \bar{s}_w) = 0$, we write our net payout function~\eqref{eq:ourpayout} as the difference of the leave-one-out (LOO) inflation term and the regularization term:

\begin{equation}
\label{eq:exact-budget}
\sum_{i=1}^M \pi_i = \underbrace{\sum_{i=1}^M \frac{w_i^2(s_i - \bar{s}_w)}{W_{-i}}}_{\text{LOO inflation}} \;-\; \underbrace{c_3\sum_{i=1}^M w_i^2}_{\text{regularization}}.
\end{equation}

Using~\eqref{eq:loo-global} and $s_i - b_{-i}^{\mathrm{I}} \leq \overline{s} - \underline{s}$, we have 
$$\frac{w_i^2(s_i - \bar{s}_w)}{W_{-i}} = \frac{w_i^2(s_i - b_{-i}^{\mathrm{I}})}{W} \leq \frac{w_i^2(\overline{s} - \underline{s})}{W}.$$
Therefore, the LOO inflation is at most $\frac{\overline{s} - \underline{s}}{W}\sum_i w_i^2$, giving the first inequality in~\eqref{eq:budget-bound}. 

For the second: since $\sum_i w_i^2 / W \leq w_{\max}$ and $\sum_i w_i^2 \geq w_{\max}^2$,
\[
\frac{(\overline{s}-\underline{s})\sum_i w_i^2}{W} - c_3\sum_i w_i^2 \;\leq\; (\overline{s}-\underline{s})\, w_{\max} - c_3\, w_{\max}^2 \;\leq\; \frac{(\overline{s}-\underline{s})^2}{4c_3},
\]
where the last step maximized at $(\overline{s}-\underline{s})/(2c_3)$ with value $(\overline{s}-\underline{s})^2/(4c_3)$. 
 
\emph{\proj~II.} When $s(\cdot, y)$ is concave in $p$, Jensen's inequality gives $b_{-i}^{\mathrm{II}} \geq b_{-i}^{\mathrm{I}}$ and thus $\pi_{-i}^{\mathrm{II}} \leq \pi_{-i}^{\mathrm{I}}$ for all prediction and wager profiles, so \proj~I's bound applies.
\end{proof}

\section{Deferred Content from Sec.~\ref{sec:experiment}}

\subsection{Datasets}
\label{apx:datasets}

Table~\ref{tab:datasets} shows basic information about each dataset.

\begin{table}[ht]
\centering
\caption{Dataset attributes.}\label{tab:datasets}
 \resizebox{1.0\linewidth}{!}{%
\begin{tabular}{l
                    *{6}{c}}
\toprule
  & PubMedQA~\citep{jin-etal-2019-PubMedQA} & BayesX & MedMCQA~\citep{MedMCQA} & MMLU~\citep{hendrycks2021measuring} & ARC-Challenge~\citep{clark2018think} \\
\midrule
 Size & 37.82K  & 10K & 183K & 116K & 1.17K\\
 Task & Binary Classification & Distribution Matching & Multiple Choice & Multiple Choice & Multiple Choice  \\
 Private Context & \cmark & \cmark & \xmark & \xmark & \xmark \\
\bottomrule

\end{tabular}%
}
\end{table}

PubMedQA~\citep{jin-etal-2019-PubMedQA}\footnote{We use the publicly available PubMedQA dataset on Hugging Face: \url{https://huggingface.co/datasets/qiaojin/PubMedQA}.} is a specialized biomedical research dataset designed for answering questions based on medical abstracts.  The primary task is to provide a ``yes,'' ``no,'' or ``maybe'' response to a clinical question by synthesizing information from an associated PubMed abstract. 
For our experiment, we convert this into a binary classification by focusing on ``yes'' and ``no'' questions. We also balance the dataset such that there is a similar number of  ``yes'' and ``no'' questions. 
We extract the abstract per question and randomly assign relevant or irrelevant abstracts to a subset of participating models.

MedMCQA~\citep{MedMCQA}\footnote{We use the publicly available MedMCQA dataset on Hugging Face: \url{https://huggingface.co/datasets/openlifescienceai/medmcqa}.} is a large-scale, multiple-choice question-answering dataset specifically curated from Indian medical entrance exams (AIIMS and NEET-PG). It covers 21 diverse medical subjects, such as anatomy, pathology, and pharmacology. Because these questions are designed to test human medical students, they require deep domain knowledge, making MedMCQA a standard tool for measuring the clinical competency of general and medical-purpose LLMs.

MMLU~\citep{hendrycks2021measuring}\footnote{We use the publicly available MMLU dataset on Hugging Face: \url{https://huggingface.co/datasets/cais/mmlu}.} is one of the most comprehensive benchmarks for evaluating the general knowledge and problem-solving capabilities of language models. It spans 57 subjects across STEM, the humanities, social sciences, and more, testing both world knowledge and analytical skill. The questions are multiple-choice and range in difficulty from elementary levels to advanced professional standards (such as law or medicine). MMLU has become the industry standard for reporting a model's foundational ``intelligence'' and its ability to handle a vast breadth of human knowledge.

ARC-Challenge~\citep{clark2018think}\footnote{We use the publicly available ARC-Challenge dataset on Hugging Face: \url{https://huggingface.co/datasets/allenai/ai2_arc}.}
 is a high-difficulty subset of the AI2 Reasoning Challenge, consisting of grade-school science questions that are particularly difficult for automated systems to solve. The Challenge set contains questions that cannot be answered by simple keyword matching or statistical co-occurrence alone; they often require a degree of logical reasoning, common sense, or a basic understanding of scientific processes. It remains a key metric for determining how well a model can move beyond rote memorization toward actual conceptual understanding.

BayesX is a new dataset we constructed to capture a setting in which additional information reduces epistemic uncertainty about the world while revealing greater aleatoric uncertainty in the outcome-generating process. 
Each example describes a binary cluster-saturation forecasting problem. The prior probability is either $0.1$ or $0.9$, and the context contains a binary warning signal with sampled true-positive and false-positive rates. The evidence is selected so that receiving additional information can move the posterior closer to a uniform distribution. 
The ground truth is therefore the posterior probability rather than only a hard binary label.\footnote{We include the BayesX dataset in the repo \url{https://github.com/chailab-rutgers/WALLA}.}


\subsection{Formal Definitions of Metrics}\label{apx:metrics}

We formally define Kendall's Tau, MRR, and dynamic regret in terms of the Brier score as follows.

\begin{definition} [Kendall's Tau (K-Tau)] 
We use K-Tau to measure the ordinal association between two orderings: (1) the ordering of models by their per-question Brier-score performance and (2) the ordering of models by their wagers. Formally,
\[
\tau = \frac{\text{(number of concordant pairs) - (number of discordant pairs)}}{\text{(number of pairs)}}.
\]
\end{definition}

\begin{definition} [Mean Reciprocal Rank (MRR) of the Best Experts]
The per-sample reciprocal rank is the inverse rank of the best expert in terms of the Brier score. Formally,
\[
\text{MRR} = \frac{1}{N} \sum_{n=1}^N 
\frac{1}{\text{rank}(\arg\min_j \|\delta^{(n)}_y - p^{(n)}_j\|_2^2)},
\]
where $N$ is the number of samples and the rank is induced by the aggregation weights.
\end{definition}

\begin{definition} [Dynamic Regret (D-Regret) in the Brier Score]
The D-Regret is defined as
\[
\frac{1}{N} \sum_{n=1}^N 
\|\delta_y^{(n)} - p_{\text{agg}}^{(n)}\|_2^2 
- 
\min_j \|\delta_y^{(n)} - p_j^{(n)}\|_2^2.
\]
It measures the Brier-score gap between the aggregated prediction and the best individual model on each question.
\end{definition}

\subsection{Baselines}\label{apx:baselines}

Uniform averaging (UniformAvg) is a simple method that averages the predicted probability distributions with equal weights (i.e., $p_{\text{agg}} = \frac{\sum_{i=1}^Mp_i}{M}$).



SelfCertainty~\citep{kang2025scalable} is a learning-free ensembling that uses model confidence as weight for averaging. We use the variant that quantifies confidence through the Kullback-Leibler (KL) divergence between the model's predicted token probability distribution and a uniform distribution, as it performs best.\footnote{We use the public implementation of SelfCertainty from \url{https://github.com/backprop07/Self-Certainty}.}

PackLLM~\citep{mavromatis2024pack} is a learning-free ensembling method that aims to minimize the perplexity of the input prompt for the ensemble of all models.
As PackLLM relies on the perplexity of the input tokens for each model, it requires open-weight models.  In a decentralized setting where different providers control different models, providers can misreport perplexities to gain disproportional weights.\footnote{We use the public implementation of PackLLM from \url{https://github.com/cmavro/PackLLM}.}

RouterDC~\citep{chen2024routerdc} is a pre-inference router that finetunes a DeBERTaV3-base~\citep{he2023debertav} language model via dual-contrastive learning. It aligns a query's embedding with the top-K most capable LLMs for that specific task while distancing it from the last-K models.\footnote{We use the public implementation of RouterDC from \url{https://github.com/shuhao02/RouterDC}. We use the pretrained DeBERTaV3 from \url{https://huggingface.co/microsoft/deberta-v3-base}.}

\citet{song2025irt} proposes a pre-inference router inspired by Item Response Theory (IRT) called IRTRouter. It treats user queries as ``test items'' and LLMs as ``respondents,'' and explicitly models the relationship between a specific query's difficulty and an individual model's latent capabilities. It uses a frozen BERT model to retrieve the query embedding. We adopt NIRTRouter, a variant that trains neural networks for the IRT implementation.\footnote{We use the public implementation of NIRTRouter from \url{https://github.com/Mercidaiha/IRT-Router}. We adopt the pretrained BERT from \url{https://huggingface.co/google-bert/bert-base-uncased}.}

RouteLLM~\citep{ong2025routellm} was designed to choose between a more capable but expensive model and a weaker and cheaper model using human pairwise preference as feedback. Its base router is a BERT model that is finetuned using preference data.  We extend this framework for routing on more LLMs using cross-entropy loss and pairwise ranking loss.\footnote{We adapt the public implementation of RouteLLM from \url{https://github.com/lm-sys/RouteLLM.git}. We adopt the pretrained BERT from \url{https://huggingface.co/google-bert/bert-base-uncased}.}


Stacked generalization (StackedGen)~\citep{WOLPERT1992241} is a classical ensembling technique that learns a neural network to combine diverse predictive models to achieve greater overall accuracy than any single model could produce alone. We train a two-layer perceptron with cross-entropy loss. We use the last-layer hidden states of LLMs as input to the perceptron as \proj to be consistent.




\begin{table}[ht]
\centering
\caption{The mechanism parameters and learning parameters applied to all learning-based methods.}\label{tab:params}
 \resizebox{1.0\linewidth}{!}{%
\begin{tabular}{l
                    *{10}{c}}
\toprule
\multicolumn{6}{c}{Mechanism Parameters}  &  \multicolumn{4}{c}{Learning Parameters} \\
\cmidrule(lr){1-6} \cmidrule(lr){7-10}
$c_1$  & $c_2$  & $c_3$ & Aggregation & Scoring rule & Variant & Batch size & Dataset split & MLP hidden sizes & Optimizer\\
1 & 1/2 & 1/2 & Linear pooling & Brier score & \proj~I & 100 & 8:1:1 & 512, 256 & Adam\\ 
\bottomrule

\end{tabular}%
}
\end{table}

\subsection{Additional Experiment Setup Details}\label{apx:exp_setup}
The learning parameters provided in Table~\ref{tab:params} are common to all methods.
For every dataset, we use an $8{:}1{:}1$ train-validation-test split and fix the batch size to 100.  For learning-based methods, we use the Adam optimizer, tune the learning rate with grid search and adopt early stopping to prevent overfitting.
For baselines, we stop at the epoch that achieves the best validation performance. For \proj, because the training procedure is decentralized, we use the Brier score on the most recent training batches as the early stopping criterion, ensuring that the number of training-batch samples used for early stopping matches the validation-set size. Unless specified otherwise, all methods use linear pooling for aggregation.
 In all experiments, the aggregated predictions are evaluated on a held-out test set, where no further updates to the wagering strategy are allowed.
 
Experiments are run on 4 NVIDIA RTX PRO 6000 Blackwell Server Edition GPUs (96 GiB VRAM each) and a dual-socket AMD EPYC 9354 setup (64 CPU cores total). It provides about 1.1 TiB system RAM (plus 33GiB swap).




    

\begin{table*}[ht]
\centering
\begin{minipage}{\textwidth}
\caption{Homogeneous models (Aloe) on PubMedQA with private contexts. Questions that all models can answer correctly are filtered out. For each question, one of $M \in \{4, 8, 12\}$ models is randomly selected to receive a relevant abstract; the rest receive none.}\label{tab:homo_subset}
 \resizebox{1.0\linewidth}{!}{%
\begin{tabular}{l
                    *{5}{c} 
                   *{5}{c} *{5}{c} }
\toprule
& \multicolumn{5}{c}{\textbf{4} } & \multicolumn{5}{c}{\textbf{8} } & \multicolumn{5}{c}{\textbf{12}} \\
\cmidrule(lr){2-6} \cmidrule(lr){7-11} \cmidrule(lr){12-16}
\textbf{Method} &AUC $\uparrow$ & ACC $\uparrow$ & ECE $\downarrow$  & MRR $\uparrow$ & DRegret $\downarrow$ &AUC $\uparrow$ & ACC $\uparrow$ & ECE $\downarrow$  & MRR $\uparrow$ & DRegret $\downarrow$ &AUC $\uparrow$ & ACC $\uparrow$ & ECE $\downarrow$  & MRR $\uparrow$ & DRegret $\downarrow$\\
\midrule
 UniformAvg &  26.05 &  30.44 &  30.89 &  54.57 &  39.27 &  19.29 &  20.36 & 43.32 &  39.10&    48.38 &  19.92 &  17.03  & 47.68 &  31.92 &    51.66 \\
SelfCertainty &  54.42  & 48.94 &  23.12 &  67.94  &    31.05 &  51.38 & 43.15 &   27.94 &  62.45 &   36.41&  49.71 & 39.82 &  30.81 &  60.65&    39.30 \\
PackLLM & 78.23 &  63.72 &  4.72 &  89.58 &    15.18 &  48.75 & 55.15  &  9.35 &  88.83  &  22.87 &  41.43 &  48.42 &   41.43  & 88.45 &   28.00\\
\midrule
RouterDC w/ context & 24.58\tiny{$\pm$27.74}&  31.57\tiny{$\pm$61.88}  &30.69\tiny{$\pm$59.00}&  61.57\tiny{$\pm$366.04}  & 39.32\tiny{$\pm$39.64} & 19.86\tiny{$\pm$2.61}  & 20.32\tiny{$\pm$3.61} & 43.30\tiny{$\pm$4.81} &48.12\tiny{$\pm$146.06}& 48.17\tiny{$\pm$4.22} &20.22\tiny{$\pm$6.03} & 17.03\tiny{$\pm$1.03}&  47.61\tiny{$\pm$0.21} & 41.98\tiny{$\pm$186.21}   & 51.47\tiny{$\pm$2.50}  \\
NIRTRouter w/ context &  25.81\tiny{$\pm$3.03} & 30.56\tiny{$\pm$1.55} &30.77\tiny{$\pm$1.54} &52.02\tiny{$\pm$8.94}& 39.27\tiny{$\pm$0.04} &19.29\tiny{$\pm$0.00}&  20.36\tiny{$\pm$0.00} & 43.32\tiny{$\pm$0.00}&  33.89\tiny{$\pm$17.09} &  48.38\tiny{$\pm$0.00} & 19.92\tiny{$\pm$0.03}& 17.03\tiny{$\pm$0.00}&  47.67\tiny{$\pm$0.00} & 24.50\tiny{$\pm$1.64} &  51.66\tiny{$\pm$0.01}  \\
RouteLLM w/ context &  72.54\tiny{$\pm$0.15} & 69.27\tiny{$\pm$0.15}  & 4.89\tiny{$\pm$0.09}  & 89.30\tiny{$\pm$0.76}  & 6.47\tiny{$\pm$0.14}  & 72.55\tiny{$\pm$0.07} &  72.55\tiny{$\pm$0.07} &  4.99\tiny{$\pm$0.26} & 87.92\tiny{$\pm$0.79}  &  6.53\tiny{$\pm$0.05} & 73.11\tiny{$\pm$0.11} & 69.08\tiny{$\pm$0.08} & 5.15\tiny{$\pm$0.08}& 
86.59\tiny{$\pm$0.89} &  6.55\tiny{$\pm$0.04} \\
RouteLLM w/o context & 23.03\tiny{$\pm$0.59} & 29.69\tiny{$\pm$0.49}  & 31.82\tiny{$\pm$0.50}  & 52.08\tiny{$\pm$0.42}  & 39.24\tiny{$\pm$0.01} & 19.52\tiny{$\pm$0.83} &  20.00\tiny{$\pm$0.46} &  43.68\tiny{$\pm$0.46} & 33.43\tiny{$\pm$0.90}  & 48.52\tiny{$\pm$0.02} & 22.00\tiny{$\pm$0.81}  & 16.26\tiny{$\pm$0.41}  & 48.47\tiny{$\pm$0.40}  & 
27.78\tiny{$\pm$2.65} & 51.64\tiny{$\pm$0.06}  \\
StackedGen & 71.57\tiny{$\pm$0.36} & 69.25\tiny{$\pm$0.27} & 4.27\tiny{$\pm$0.20} & 89.18\tiny{$\pm$0.44}  & 6.26\tiny{$\pm$0.34} &  73.38\tiny{$\pm$1.09} &  69.15\tiny{$\pm$0.45} &  5.02\tiny{$\pm$0.58} &  87.10\tiny{$\pm$1.11}&    6.48\tiny{$\pm$0.61} &  72.68\tiny{$\pm$0.42} & 69.05\tiny{$\pm$0.70}  &  4.70\tiny{$\pm$0.50} &  85.99\tiny{$\pm$1.10} &   6.90\tiny{$\pm$0.65} \\
\midrule
 \proj~I &  67.60\tiny{$\pm$1.05} & 67.34\tiny{$\pm$1.19}  & 3.08\tiny{$\pm$0.37}  &  89.14\tiny{$\pm$0.55} &   8.81\tiny{$\pm$0.89}  &  70.91\tiny{$\pm$0.82} &  68.28\tiny{$\pm$0.34} &   7.55\tiny{$\pm$0.65} &  79.48\tiny{$\pm$1.08}  &   16.81\tiny{$\pm$0.36} & 64.71\tiny{$\pm$0.82} & 67.33\tiny{$\pm$0.78} &   7.22\tiny{$\pm$1.77}  & 77.34\tiny{$\pm$1.87}  & 20.45\tiny{$\pm$0.79}  \\
\proj~II & 66.07\tiny{$\pm$0.41}  & 66.55\tiny{$\pm$0.27}  & 2.45\tiny{$\pm$0.39}  &  89.62\tiny{$\pm$0.21}  &  9.95\tiny{$\pm$0.10}  & 63.16\tiny{$\pm$0.62}  &65.50\tiny{$\pm$0.96} & 2.88\tiny{$\pm$0.48}  &87.30\tiny{$\pm$1.40}   & 12.60\tiny{$\pm$0.62} &  59.42\tiny{$\pm$1.30}  & 63.68\tiny{$\pm$1.10}  & 2.84\tiny{$\pm$0.91} &   86.18\tiny{$\pm$0.46}  & 14.96\tiny{$\pm$0.40}
\\
\bottomrule

\end{tabular}%
}
\end{minipage}
\end{table*}

\begin{table*}[ht]
\centering
\begin{minipage}{\textwidth}
\caption{2 Heterogeneous models (Llama3.1 and Aloe) without context on three datasets.
}\label{tab:2models}
 \resizebox{1.0\linewidth}{!}{%
\begin{tabular}{l
                    *{5}{c} 
                   *{5}{c} *{5}{c}}
\toprule
& \multicolumn{5}{c}{\textbf{MedMCQA (i.d.)}} & \multicolumn{5}{c}{\textbf{MMLU (i.d.)}} & \multicolumn{5}{c}{\textbf{ARC-Challenge (o.o.d.)}} \\
\cmidrule(lr){2-6} \cmidrule(lr){7-11} \cmidrule(lr){12-16}
\textbf{Method} & ACC $\uparrow$ & ECE $\downarrow$  & MRR $\uparrow$ & K-Tau $\uparrow$ & DRegret $\downarrow$ & ACC $\uparrow$ & ECE $\downarrow$  & MRR $\uparrow$ & K-Tau $\uparrow$ & DRegret $\downarrow$ & ACC $\uparrow$ & ECE $\downarrow$  & MRR $\uparrow$ & K-Tau $\uparrow$ & DRegret $\downarrow$ \\
\midrule
Llama3.1 & 78.97 & 2.64& 69.51 & -21.95 & 10.62 & 81.99 & 2.52 & 68.51 & -25.96 & 7.37 &  81.30 & 3.06 & 68.39& -26.43 & 8.89\\
Aloe & 79.77 & 3.18 & 80.49 & 21.95 & 10.42 &  80.23 & 8.77 & 81.49  & 25.96 & 11.10 & 80.35 &8.58 & 81.61 & 26.43 & 11.31  \\
\midrule
UniformAvg & 81.76 & 4.60 & 69.51 & 0.00 & 7.58 & 82.22 & 2.47 & 68.51 & 0.00 & 6.66 & 82.87 & 3.05 & 68.39 & 0.00 &  7.05  \\
SelfCertainty & 81.73 & 2.24 & 88.64 & 54.56 & 7.32 & 82.15 & 5.07 & 88.57 & 54.29 & 7.56 & 82.87 & 3.94 & 87.74 & 50.96 & 7.34 \\
PackLLM & 81.54 & 4.18 & 79.04 & 16.17  & 7.62 & 82.27& 2.49 & 72.53&  -9.90  & 6.65 & 82.70 & 2.80  & 75.70 &  2.78 & 7.15 \\
\midrule
RouterDC & 82.01\tiny{$\pm$3.58} & 4.36\tiny{$\pm$0.62} & 80.18\tiny{$\pm$3.91} & 20.71\tiny{$\pm$15.64}  & 7.49\tiny{$\pm$1.11} &  82.21\tiny{$\pm$0.04} &  2.54\tiny{$\pm$0.13}  & 76.30\tiny{$\pm$8.83} & 5.19\tiny{$\pm$35.31}  &  6.68\tiny{$\pm$0.03}  &  82.89\tiny{$\pm$0.05} &  2.86\tiny{$\pm$0.45}  & 76.32\tiny{$\pm$8.99} &  5.29\tiny{$\pm$35.96} & 7.07\tiny{$\pm$0.03}      \\
NIRTRouter & 81.98\tiny{$\pm$3.61} & 4.27\tiny{$\pm$0.98} & 80.56\tiny{$\pm$0.94} & 22.24\tiny{$\pm$3.75} & 7.49\tiny{$\pm$1.13} & 82.36\tiny{$\pm$0.57}  & 2.23\tiny{$\pm$0.23}  & 68.51\tiny{$\pm$0.03}  & -25.97\tiny{$\pm$0.13}  &  6.54\tiny{$\pm$0.55}  & 82.78\tiny{$\pm$0.00} & 3.42\tiny{$\pm$0.00} & 68.39\tiny{$\pm$0.00} & -26.43\tiny{$\pm$0.00}  & 7.00\tiny{$\pm$0.00}   \\
RouteLLM & 81.82\tiny{$\pm$2.71} & 3.89\tiny{$\pm$1.74}  & 80.56\tiny{$\pm$0.94} & 22.24\tiny{$\pm$3.75} & 7.51\tiny{$\pm$1.14} & 82.56\tiny{$\pm$0.19} & 1.86\tiny{$\pm$1.28} & 68.51\tiny{$\pm$0.03} & -25.97\tiny{$\pm$0.13} & 6.37\tiny{$\pm$0.07} & 83.13\tiny{$\pm$0.00}  & 2.13\tiny{$\pm$0.00} & 68.39\tiny{$\pm$0.00} & -26.43\tiny{$\pm$0.00} & 6.94\tiny{$\pm$0.00} \\
StackedGen & 82.61\tiny{$\pm$0.28} & 2.16\tiny{$\pm$0.32} & 85.51\tiny{$\pm$0.39} & 42.04\tiny{$\pm$1.57} & 6.35\tiny{$\pm$0.11}  & 82.52\tiny{$\pm$0.21} & 2.29\tiny{$\pm$0.21} & 77.47\tiny{$\pm$1.03} & 9.88\tiny{$\pm$4.12} & 6.21\tiny{$\pm$0.19} & 83.33\tiny{$\pm$0.21}  & 3.07\tiny{$\pm$0.35} & 75.22\tiny{$\pm$0.37} & 0.87\tiny{$\pm$1.46}  & 6.61\tiny{$\pm$0.03}   \\
\midrule
 \proj~I & 82.02\tiny{$\pm$0.32} & 4.53\tiny{$\pm$0.28} & 73.69\tiny{$\pm$3.26} & -5.25\tiny{$\pm$13.06} & 7.57\tiny{$\pm$0.12} & 82.41\tiny{$\pm$0.17} & 1.84\tiny{$\pm$0.09} & 69.07\tiny{$\pm$1.01} & -23.72\tiny{$\pm$4.03} & 6.49\tiny{$\pm$0.11} &  83.00\tiny{$\pm$0.14} & 2.44\tiny{$\pm$0.17} & 68.30\tiny{$\pm$0.20} & -26.78\tiny{$\pm$0.81} & 7.01\tiny{$\pm$0.04}  \\
\proj~II & 82.02\tiny{$\pm$0.23} & 4.48\tiny{$\pm$0.32} & 74.07\tiny{$\pm$6.44} & -3.73\tiny{$\pm$25.78}  & 7.56\tiny{$\pm$0.10}  & 82.35\tiny{$\pm$0.33}  & 1.90\tiny{$\pm$0.23}  & 68.69\tiny{$\pm$0.67} &  -25.23\tiny{$\pm$2.67}  & 6.51\tiny{$\pm$0.12} &  82.91\tiny{$\pm$0.18} &  2.34\tiny{$\pm$0.37}  & 68.30\tiny{$\pm$0.28}  & -26.78\tiny{$\pm$1.13}  & 7.01\tiny{$\pm$0.05}  \\
\proj~I (log-pool) &  81.73\tiny{$\pm$0.17} &  2.70\tiny{$\pm$0.66}  & 73.96\tiny{$\pm$5.91} &  -4.17\tiny{$\pm$23.66}&   7.19\tiny{$\pm$0.08}  & 82.33\tiny{$\pm$0.08} &  3.09\tiny{$\pm$0.43} &  70.08\tiny{$\pm$0.37}&   -19.67\tiny{$\pm$1.50}&   6.55\tiny{$\pm$0.09} &   82.65\tiny{$\pm$0.29}  & 3.02\tiny{$\pm$0.80} &  68.02\tiny{$\pm$0.49}&  -27.91\tiny{$\pm$1.95}&  6.98\tiny{$\pm$0.37}  \\
\proj~I (calibrated) &  82.55\tiny{$\pm$0.16}  & 5.20\tiny{$\pm$0.17} &  75.03\tiny{$\pm$4.71} &  0.10\tiny{$\pm$18.83}&   6.78\tiny{$\pm$0.02}  &  82.76\tiny{$\pm$0.05}  & 2.45\tiny{$\pm$0.08} &   79.11\tiny{$\pm$1.47}  & 16.44\tiny{$\pm$5.88}&   5.79\tiny{$\pm$0.03} &  82.87\tiny{$\pm$0.11} &  2.88\tiny{$\pm$0.44} &  81.67\tiny{$\pm$0.17}&  26.70\tiny{$\pm$0.70} & 6.64\tiny{$\pm$0.05}  \\
\bottomrule

\end{tabular}%
}
\end{minipage}
\end{table*}

\begin{table*}[ht]
\centering
\begin{minipage}{\textwidth}
\caption{2 Heterogeneous models (Llama3.1 and Aloe) without context on three datasets, with questions that both models can answer correctly filtered out.
}\label{tab:2models_subset}
 \resizebox{1.0\linewidth}{!}{%
\begin{tabular}{l
                    *{5}{c} 
                   *{5}{c} *{5}{c} }
\toprule
& \multicolumn{5}{c}{\textbf{MedMCQA (i.d.)}}  & \multicolumn{5}{c}{\textbf{MMLU (i.d.)}} & \multicolumn{5}{c}{\textbf{ARC-Challenge (o.o.d.)}} \\
\cmidrule(lr){2-6} \cmidrule(lr){7-11} \cmidrule(lr){12-16}
\textbf{Method} & ACC $\uparrow$ & ECE $\downarrow$  & MRR $\uparrow$ & K-Tau $\uparrow$ & DRegret $\downarrow$ & ACC $\uparrow$ & ECE $\downarrow$  & MRR $\uparrow$ & K-Tau $\uparrow$ & DRegret $\downarrow$ & ACC $\uparrow$ & ECE $\downarrow$  & MRR $\uparrow$ & K-Tau $\uparrow$ & DRegret $\downarrow$ \\
\midrule
Llama3.1 & 26.38 & 30.11 & 77.69 & 10.78 & 24.86 & 26.94 & 38.89 & 81.74 & 26.94 & 21.41 & 27.85 & 38.59 & 80.37 & 21.48 & 24.50 \\
Aloe & 29.20 & 36.31 & 72.31 & -10.78 & 32.14 & 19.79 & 54.53 & 68.26 & -26.94 & 41.29 & 24.16 & 50.18 & 69.63 & -21.48 & 39.75 \\
\midrule
UniformAvg & 36.17 & 17.01 & 77.69 & 0.00 & 20.68 & 27.87 & 34.11 & 81.74 & 0.00 & 22.88 & 33.89  &31.10  & 80.37 & 0.00 & 22.62 \\
SelfCertainty & 36.03  & 23.67 & 72.61 & -9.55 & 22.65 & 27.59 & 40.88 & 67.33 & -30.68 & 28.38 & 33.89 &  34.01 & 69.46 & -22.15 & 25.80 \\
PackLLM & 35.40 & 18.32 & 73.39 & -6.45 & 21.08 & 28.08 & 33.83  & 78.34 & 13.37 & 22.79  & 33.22 & 31.28 & 73.32 & -6.71  & 23.04  \\
\midrule
RouterDC & 36.44\tiny{$\pm$5.23} & 16.94\tiny{$\pm$6.40} & 72.11\tiny{$\pm$2.45} & -11.55\tiny{$\pm$9.82} & 20.66\tiny{$\pm$1.91} & 27.81\tiny{$\pm$0.17}  &  34.39\tiny{$\pm$0.46}   &  73.65\tiny{$\pm$9.16}  & -5.39\tiny{$\pm$36.64}  &  22.97\tiny{$\pm$0.17} & 33.96\tiny{$\pm$0.19} & 29.69\tiny{$\pm$0.15}   &  73.93\tiny{$\pm$7.30} & -4.30\tiny{$\pm$29.21} & 22.71\tiny{$\pm$0.17}  \\
NIRTRouter & 36.34\tiny{$\pm$5.34} & 17.10\tiny{$\pm$6.61}  & 72.04\tiny{$\pm$3.38} & -11.84\tiny{$\pm$13.53} & 20.73\tiny{$\pm$1.92} & 28.68\tiny{$\pm$5.62} & 33.36\tiny{$\pm$1.67}  & 81.28\tiny{$\pm$5.82}  & 25.11\tiny{$\pm$23.28} & 22.06\tiny{$\pm$4.98}  &33.56\tiny{$\pm$0.00} & 28.53\tiny{$\pm$0.01}  & 80.37\tiny{$\pm$0.00}  & 21.48\tiny{$\pm$0.00}  &22.30\tiny{$\pm$0.02} \\
RouteLLM & 35.78\tiny{$\pm$2.09} & 18.07\tiny{$\pm$2.44} & 72.04\tiny{$\pm$3.38} &  -11.84\tiny{$\pm$13.53} & 21.03\tiny{$\pm$1.65} & 29.49\tiny{$\pm$2.49} & 32.43\tiny{$\pm$1.57} & 81.28\tiny{$\pm$5.82}& 25.11\tiny{$\pm$23.28} & 20.73\tiny{$\pm$2.98} & 34.90\tiny{$\pm$0.00} & 26.61\tiny{$\pm$0.02} & 80.37\tiny{$\pm$0.00}& 21.48\tiny{$\pm$0.00} & 21.33\tiny{$\pm$0.04}\\
StackedGen & 38.65\tiny{$\pm$1.16} & 20.23\tiny{$\pm$1.00} & 78.26\tiny{$\pm$0.29}  & 13.02\tiny{$\pm$1.15} & 13.02\tiny{$\pm$1.15} & 29.80\tiny{$\pm$0.60}  & 32.88\tiny{$\pm$0.88}  & 79.54\tiny{$\pm$0.40}  & 18.17\tiny{$\pm$1.61} & 20.44\tiny{$\pm$0.67} & 35.65\tiny{$\pm$0.80} & 29.35\tiny{$\pm$1.20} & 80.29\tiny{$\pm$0.15} & 21.14\tiny{$\pm$0.62} & 20.42\tiny{$\pm$0.10}  \\
\midrule
 \proj~I & 36.56\tiny{$\pm$1.20} & 16.28\tiny{$\pm$1.50} & 75.88\tiny{$\pm$0.87} & 3.50\tiny{$\pm$3.47} & 20.64\tiny{$\pm$0.19} & 29.37\tiny{$\pm$0.81} & 31.47\tiny{$\pm$1.36} & 81.04\tiny{$\pm$0.99} & 24.17\tiny{$\pm$3.96} & 20.85\tiny{$\pm$0.66} & 34.40\tiny{$\pm$0.53} & 26.95\tiny{$\pm$0.46} & 80.16\tiny{$\pm$0.26} & 20.64\tiny{$\pm$1.02} & 21.00\tiny{$\pm$0.03}   \\
\proj~II & 36.55\tiny{$\pm$0.88} & 16.34\tiny{$\pm$0.47} & 75.67\tiny{$\pm$2.61} & 2.68\tiny{$\pm$10.45} & 20.64\tiny{$\pm$0.28}  & 29.12\tiny{$\pm$0.75}   & 31.71\tiny{$\pm$0.88} &  81.01\tiny{$\pm$0.84} &  24.03\tiny{$\pm$3.37}  &  21.00\tiny{$\pm$0.85}   &  34.06\tiny{$\pm$0.69}  &  26.87\tiny{$\pm$0.74}  &  80.20\tiny{$\pm$0.38}  &  20.81\tiny{$\pm$1.51}  & 21.03\tiny{$\pm$0.09}  \\
\proj~I (log-pooling) &  36.06\tiny{$\pm$0.58}  & 19.84\tiny{$\pm$1.13}  & 75.57\tiny{$\pm$3.00}  & 2.27\tiny{$\pm$11.99}  & 21.32\tiny{$\pm$0.62} & 28.31\tiny{$\pm$0.31}  & 34.79\tiny{$\pm$0.69}  & 80.67\tiny{$\pm$0.64}& 
22.67\tiny{$\pm$2.56}  & 22.92\tiny{$\pm$0.97}  & 33.05\tiny{$\pm$1.11}  & 31.67\tiny{$\pm$2.59} &  79.70\tiny{$\pm$0.58}& 
18.79\tiny{$\pm$2.31} &  22.31\tiny{$\pm$0.29}  \\
\proj~I (calibrated) & 38.92\tiny{$\pm$0.55} &  11.68\tiny{$\pm$1.05}&   74.77\tiny{$\pm$1.09}&  -0.91\tiny{$\pm$4.38} &  18.42\tiny{$\pm$0.07}  &  30.05\tiny{$\pm$0.20} &  24.90\tiny{$\pm$0.29} &  76.88\tiny{$\pm$0.21}  & 7.54\tiny{$\pm$0.85} & 18.15\tiny{$\pm$0.09}  & 33.89\tiny{$\pm$0.44}&  26.35\tiny{$\pm$0.62} &  76.26\tiny{$\pm$0.27}&   5.03\tiny{$\pm$1.07} &  19.98\tiny{$\pm$0.08}\\
\bottomrule

\end{tabular}%
}
\end{minipage}
\end{table*}

\begin{table*}[ht]
\centering
\begin{minipage}{\textwidth}
\caption{Heterogeneous models (Gemma2, Aloe, Llama3.1, and BioMistral) without context on three datasets, with questions that all models can answer correctly filtered out. 
}\label{tab:4models_subset}
 \resizebox{1.0\linewidth}{!}{%
\begin{tabular}{l
                    *{5}{c} 
                   *{5}{c} *{5}{c} }
\toprule
& \multicolumn{5}{c}{\textbf{MedMCQA (i.d.)}} & \multicolumn{5}{c}{\textbf{MMLU (i.d.)}} & \multicolumn{5}{c}{\textbf{ARC-Challenge (o.o.d.)}}  \\
\cmidrule(lr){2-6} \cmidrule(lr){7-11} \cmidrule(lr){12-16} 
\textbf{Method} & ACC $\uparrow$ & ECE $\downarrow$  & MRR $\uparrow$ & K-Tau $\uparrow$ & DRegret $\downarrow$ & ACC $\uparrow$ & ECE $\downarrow$  & MRR $\uparrow$ & K-Tau $\uparrow$ & DRegret $\downarrow$ & ACC $\uparrow$ & ECE $\downarrow$  & MRR $\uparrow$ & K-Tau $\uparrow$ & DRegret $\downarrow$ \\
\midrule
Gemma2 &  43.25  & 40.85  & 56.83  & -8.50  & 71.61  & 70.20  & 21.74  & 76.62  & 21.61 &  29.71  & 78.04  & 17.82  & 83.15  & 28.84  & 24.36   \\
Aloe &  69.05   & 7.86  &  60.32  & 16.12 &   23.26  &  54.05  &  26.10  &   50.61  &  -3.01 &   47.75 &   54.89  &  24.89  &  48.85  &  -5.09  &  52.09 \\
Llama3.1 &   67.82  &  2.57  &  56.19  &  11.84  &  22.25   & 58.15  & 15.59   & 43.17  &  0.69  &  37.34 &   57.09   & 15.80  & 39.87 &   -2.76  & 43.73  \\
BioMistral &  19.64   &  38.15   &  41.48 &   -19.47  &  78.88  &  21.09  &   45.46  &   38.26  &   -19.29  &   82.59  &   23.35  &   42.36  &   35.84   &  -20.99   &  84.41   \\ 
\midrule
UniformAvg & 63.97 & 8.20  & 58.06 & 0.00 & 30.79 & 65.48 &  9.12 & 76.33 & 0.00 &  30.32 &  71.06 & 12.69 &  82.58 &  0.00 &  30.86    \\
SelfCertainty & 60.87   & 9.05 &   71.25  & 19.89  &   33.34  &  68.81  &  7.37  & 78.04  &  27.88  & 27.03   & 75.25  & 7.46  &  83.53  & 36.66 &  24.16  \\
PackLLM &  67.87  &  10.75  &  55.85  &  21.00  &  27.51  &  64.99  &  8.92  &  41.64  &  -15.83  &  31.16  &  68.46  &  11.31  &  38.52  &  -16.10  &  32.40 \\
\midrule
RouterDC    &  67.42\tiny{$\pm$3.61}  &  9.29\tiny{$\pm$0.91}  & 59.79\tiny{$\pm$0.83}   & 30.02\tiny{$\pm$1.96}   & 27.16\tiny{$\pm$8.06} &  65.78\tiny{$\pm$0.51}   & 8.28\tiny{$\pm$0.51}  &   59.41\tiny{$\pm$18.55}  & 21.66\tiny{$\pm$13.68}  &  29.83\tiny{$\pm$0.32}  & 71.26\tiny{$\pm$0.43}  & 12.43\tiny{$\pm$0.47}  &  60.92\tiny{$\pm$25.76}  &  22.91\tiny{$\pm$19.13} &  30.36\tiny{$\pm$0.38}  \\
NIRTRouter &  70.82\tiny{$\pm$3.04} &  6.77\tiny{$\pm$0.10}  &  59.79\tiny{$\pm$0.81}  &  30.03\tiny{$\pm$1.91}  &  21.47\tiny{$\pm$2.98} &   70.08\tiny{$\pm$7.19}  &  4.81\tiny{$\pm$0.11}  & 76.51\tiny{$\pm$1.37} &   32.92\tiny{$\pm$8.11}  &  25.38\tiny{$\pm$7.51}  &  77.45\tiny{$\pm$5.07}  &  8.87\tiny{$\pm$0.99}  &  83.15\tiny{$\pm$0.00}  &  39.79\tiny{$\pm$0.00}  &  23.58\tiny{$\pm$6.45}  \\
RouteLLM &  71.67\tiny{$\pm$1.79}  &  2.09\tiny{$\pm$4.02}  &  59.79\tiny{$\pm$0.81}   & 30.03\tiny{$\pm$1.91}  &  18.77\tiny{$\pm$0.39}  &  69.95\tiny{$\pm$8.91}  &  8.92\tiny{$\pm$4.87}  &  76.51\tiny{$\pm$1.37} &   32.92\tiny{$\pm$8.11}   & 24.56\tiny{$\pm$7.84}  &  78.44\tiny{$\pm$0.00}  &  6.28\tiny{$\pm$3.05}  &  83.15\tiny{$\pm$0.00}  &  39.79\tiny{$\pm$0.00}  & 20.90\tiny{$\pm$3.11}  \\
StackedGen &  73.13\tiny{$\pm$0.30}  &  1.47\tiny{$\pm$0.59}  &  64.16\tiny{$\pm$0.20}  &  33.91\tiny{$\pm$0.07} & 17.09\tiny{$\pm$0.16}   & 70.59\tiny{$\pm$1.23}   & 8.42\tiny{$\pm$1.15} &  76.20\tiny{$\pm$0.58}  & 32.87\tiny{$\pm$1.56}  & 23.62\tiny{$\pm$1.43}   & 77.84\tiny{$\pm$0.26}  &  7.41\tiny{$\pm$1.70}  &  82.96\tiny{$\pm$0.31}   & 39.40\tiny{$\pm$0.18}  & 19.97\tiny{$\pm$0.10}   \\
\midrule
 \proj~I &  70.82\tiny{$\pm$0.24}  &  7.38\tiny{$\pm$0.43}  &  59.90\tiny{$\pm$0.60}  &  28.92\tiny{$\pm$0.64}   & 21.93\tiny{$\pm$0.41} &  69.88\tiny{$\pm$1.10}   & 6.51\tiny{$\pm$1.01}  &  76.32\tiny{$\pm$0.35}  &  32.16\tiny{$\pm$0.77}   & 26.00\tiny{$\pm$0.55}  &  76.80\tiny{$\pm$0.70}  &  10.87\tiny{$\pm$1.65}  &  80.47\tiny{$\pm$0.97}   & 38.59\tiny{$\pm$0.85}   & 24.24\tiny{$\pm$0.28}  \\
\proj~II &  70.56\tiny{$\pm$0.35}  &  7.62\tiny{$\pm$0.78}  &  59.10\tiny{$\pm$0.61}   & 26.11\tiny{$\pm$0.83}   & 22.65\tiny{$\pm$0.42}  &  70.32\tiny{$\pm$1.11}  &  6.20\tiny{$\pm$0.94}  &  76.41\tiny{$\pm$0.56}  &  29.48\tiny{$\pm$4.68}  &  25.63\tiny{$\pm$0.70}   & 77.69\tiny{$\pm$0.54}  &  11.16\tiny{$\pm$2.10} &   81.94\tiny{$\pm$0.85}  &  36.48\tiny{$\pm$5.60}  &  23.26\tiny{$\pm$0.32}  \\
\proj~I (log-pooling) &  69.89\tiny{$\pm$0.16}  & 1.26\tiny{$\pm$0.14}  & 59.32\tiny{$\pm$0.08}  & 28.42\tiny{$\pm$0.36}  & 21.81\tiny{$\pm$0.17} & 70.63\tiny{$\pm$0.11}  & 10.02\tiny{$\pm$0.62}  & 70.92\tiny{$\pm$1.48}  & 23.15\tiny{$\pm$2.32}  & 24.03\tiny{$\pm$0.02}  & 78.54\tiny{$\pm$0.18} &  8.91\tiny{$\pm$0.49}  & 75.94\tiny{$\pm$1.27} &  28.08\tiny{$\pm$2.08} &  20.90\tiny{$\pm$0.07} \\
\proj~I (calibration) &  73.27\tiny{$\pm$0.07} & 10.17\tiny{$\pm$0.14} &  67.35\tiny{$\pm$0.52} & 34.12\tiny{$\pm$0.52} &  16.57\tiny{$\pm$0.10}   &71.36\tiny{$\pm$0.19} &  9.27\tiny{$\pm$0.17} & 70.23\tiny{$\pm$0.10}  & 21.01\tiny{$\pm$0.13}&  31.58\tiny{$\pm$0.36}  & 76.10\tiny{$\pm$0.16}&  13.39\tiny{$\pm$0.76}  & 76.92\tiny{$\pm$0.20} &  38.81\tiny{$\pm$0.48} &  21.06\tiny{$\pm$0.10}\\
\bottomrule

\end{tabular}%
}
\end{minipage}
\end{table*}

\begin{table*}[ht]
\centering
\begin{minipage}{\textwidth}
\caption{Heterogeneous models (Gemma2, Aloe, Llama3.1, and BioMistral) with private contexts on the PubMedQA dataset, with questions that all models can answer correctly filtered out. For each question, we randomly assign one model with a relevant context (left columns) or two models, each with a relevant and an irrelevant context (right columns).}\label{tab:hetero_subset}
 \resizebox{1.0\linewidth}{!}{%
\begin{tabular}{l
                    *{6}{c} 
                   *{6}{c}}
\toprule
& \multicolumn{6}{c}{\textbf{1 Relevant Context}} & \multicolumn{6}{c}{\textbf{1 Relevant Context \& 1 Irrelavent Context}} \\
\cmidrule(lr){2-7} \cmidrule(lr){8-13} 
\textbf{Method} & AUC $\uparrow$ & ACC $\uparrow$ & ECE $\downarrow$  & MRR $\uparrow$ & K-Tau $\uparrow$ & DRegret $\downarrow$ & AUC $\uparrow$ & ACC $\uparrow$ & ECE $\downarrow$  & MRR $\uparrow$ & K-Tau $\uparrow$ & DRegret $\downarrow$ \\
\midrule
UniformAvg &  62.26  &   67.04 &  5.93   & 61.04  &  0.00  &   29.64   &  65.29  &  68.43 &   7.61   & 65.47   & 0.00   & 31.42\\
SelfCertainty &  64.02  &  68.15  &  9.67  &  80.56  &  25.57   & 29.57  &  63.05 &   64.45  & 16.27 &   79.63  &  25.50  &  38.47 \\
PackLLM & 73.28  & 77.04 &11.34 & 69.89 &22.08& 20.68 & 66.20  &  71.43 &  9.66  & 59.43  & 7.04  & 29.60 \\
\midrule
RouterDC w/ context & 62.16\tiny{$\pm$1.03} & 67.18\tiny{$\pm$1.76} & 6.04\tiny{$\pm$2.34} & 54.05\tiny{$\pm$21.81} & 7.18\tiny{$\pm$13.04} & 29.58\tiny{$\pm$0.83} &  65.03\tiny{$\pm$1.62}  & 68.80\tiny{$\pm$3.66} &  8.11\tiny{$\pm$4.30}   & 50.04\tiny{$\pm$1.33} &   -0.76\tiny{$\pm$14.62}  &  31.39\tiny{$\pm$0.78} \\
NIRTRouter w/ context  &  62.06\tiny{$\pm$2.79}  &  67.69\tiny{$\pm$1.18}  &  6.56\tiny{$\pm$8.74}   & 46.88\tiny{$\pm$40.59}  &  0.07\tiny{$\pm$83.43} &   29.39\tiny{$\pm$2.96}  & 65.08\tiny{$\pm$1.26}  & 69.42\tiny{$\pm$0.00}  & 8.77\tiny{$\pm$3.77}  & 45.02\tiny{$\pm$68.41}  & -2.09\tiny{$\pm$81.12}  & 31.17\tiny{$\pm$1.96}  \\
RouteLLM w/ context  & 70.38\tiny{$\pm$0.62} & 70.75\tiny{$\pm$0.82}  & 2.97\tiny{$\pm$0.68} & 62.28\tiny{$\pm$1.39}  & 19.27\tiny{$\pm$0.41} & 23.45\tiny{$\pm$0.67} & 70.39\tiny{$\pm$0.59} & 72.52\tiny{$\pm$0.80}& 
3.13\tiny{$\pm$0.81} & 63.46\tiny{$\pm$2.28} & 17.10\tiny{$\pm$1.38}& 
25.22\tiny{$\pm$0.57} 
\\
RouteLLM w/o context &  62.54\tiny{$\pm$11.96} &  66.78\tiny{$\pm$0.29} & 2.74\tiny{$\pm$1.75}  &54.72\tiny{$\pm$32.92} & 7.50\tiny{$\pm$6.67}  &28.87\tiny{$\pm$3.44} &  64.62\tiny{$\pm$21.30}  & 69.93\tiny{$\pm$4.44} &  2.90\tiny{$\pm$1.45}  & 61.40\tiny{$\pm$4.40}  & 12.26\tiny{$\pm$14.45} & 28.98\tiny{$\pm$10.12} \\
StackedGen &  78.87\tiny{$\pm$0.39}  & 78.94\tiny{$\pm$0.25}    &  4.54\tiny{$\pm$0.31}    &  73.21\tiny{$\pm$0.22} & 29.97\tiny{$\pm$0.57}   & 14.51\tiny{$\pm$0.06}  &    77.62\tiny{$\pm$0.76}    &   79.44\tiny{$\pm$2.26}    &  4.14\tiny{$\pm$1.87}  &   72.15\tiny{$\pm$1.23}  & 25.05\tiny{$\pm$2.25}   & 16.65\tiny{$\pm$1.21}   \\
\midrule
 \proj~I &   73.49\tiny{$\pm$0.87}   &  74.95\tiny{$\pm$0.26}   &  7.64\tiny{$\pm$0.59}   & 73.65\tiny{$\pm$0.30}  &  24.48\tiny{$\pm$0.63}   &   20.75\tiny{$\pm$0.20}   &    73.50\tiny{$\pm$0.43}  &    76.51\tiny{$\pm$0.29}   &   9.52\tiny{$\pm$0.38}  &    75.21\tiny{$\pm$0.55}   &   25.33\tiny{$\pm$0.41}  &  23.09\tiny{$\pm$0.06}  \\
\proj~II &  73.68\tiny{$\pm$0.31}  &   75.07\tiny{$\pm$0.43}  &    7.31\tiny{$\pm$0.68}     &  73.43\tiny{$\pm$0.07} &   23.53\tiny{$\pm$0.64}   &   20.45\tiny{$\pm$0.14}   &   72.93\tiny{$\pm$0.05}   &   76.18\tiny{$\pm$0.29}  &   8.45\tiny{$\pm$0.47}   &   75.29\tiny{$\pm$0.59}    &   22.84\tiny{$\pm$0.58}   &  23.05\tiny{$\pm$0.08}     \\
\proj~I (log-pooling) & 72.53\tiny{$\pm$0.76}  & 74.24\tiny{$\pm$0.23}  & 3.36\tiny{$\pm$0.86}  & 73.59\tiny{$\pm$0.27}  & 24.61\tiny{$\pm$0.72}  & 20.32\tiny{$\pm$0.36}  &  73.30\tiny{$\pm$0.48}  &   76.57\tiny{$\pm$0.23}  &       9.60\tiny{$\pm$0.39} &  75.22\tiny{$\pm$0.59}  & 25.37\tiny{$\pm$0.24}  &  23.12\tiny{$\pm$0.13} \\
\proj~I (calibrated) &  83.43\tiny{$\pm$0.32}  & 80.65\tiny{$\pm$0.20} &  10.01\tiny{$\pm$0.24}  & 78.36\tiny{$\pm$0.10} &  30.74\tiny{$\pm$0.30} &  15.18\tiny{$\pm$0.08} &  83.86\tiny{$\pm$0.28} &  84.08\tiny{$\pm$0.06} & 10.56\tiny{$\pm$0.13}  & 80.52\tiny{$\pm$0.19} &  32.92\tiny{$\pm$0.31} & 13.40\tiny{$\pm$0.10} \\
\bottomrule

\end{tabular}%
}
\end{minipage}
\end{table*}

\begin{figure}[]
    \centering
    \begin{subfigure}[b]{0.49\textwidth}
        \centering
        \includegraphics[width=\textwidth]{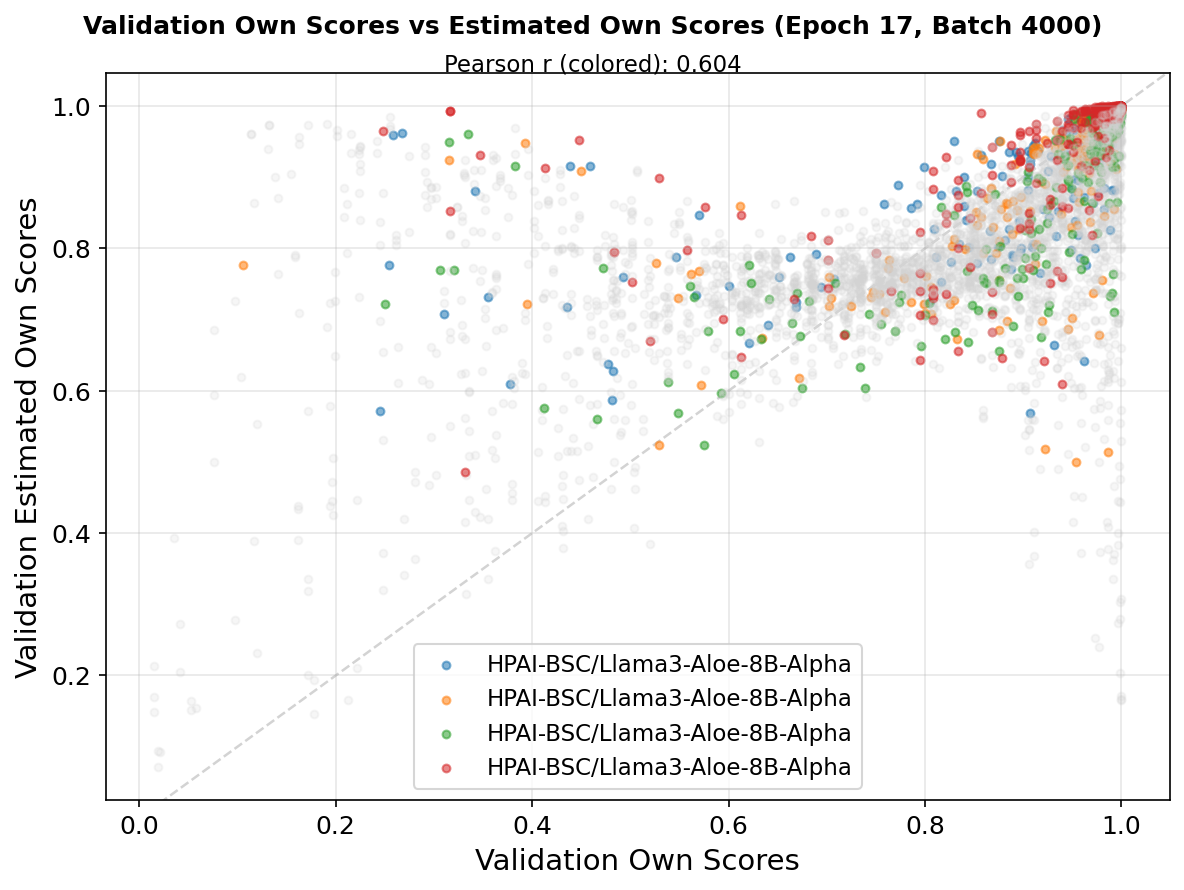}
    \end{subfigure}
    \hfill 
    \begin{subfigure}[b]{0.49\textwidth}
        \centering
        \includegraphics[width=\textwidth]{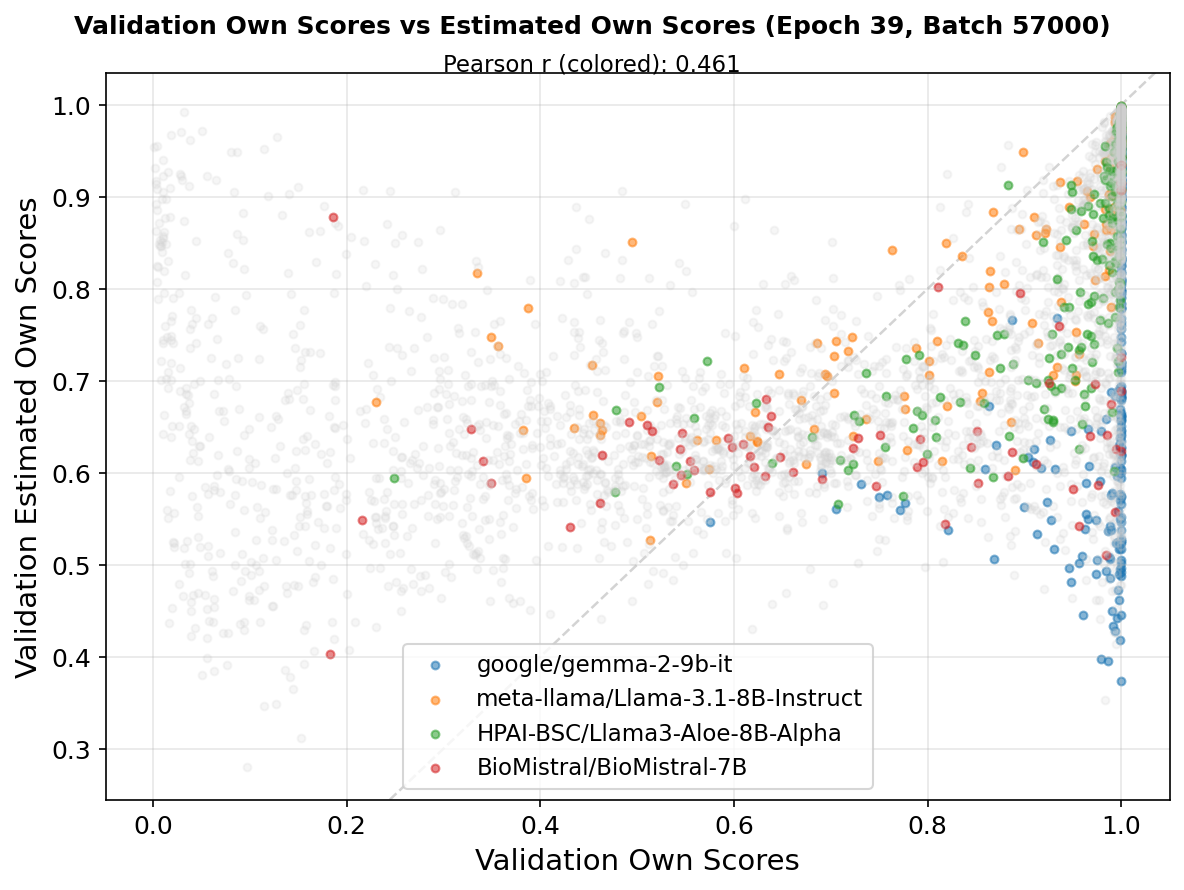}
    \end{subfigure}
    \hfill
    \begin{subfigure}[b]{0.49\textwidth}
        \centering
        \includegraphics[width=\textwidth]{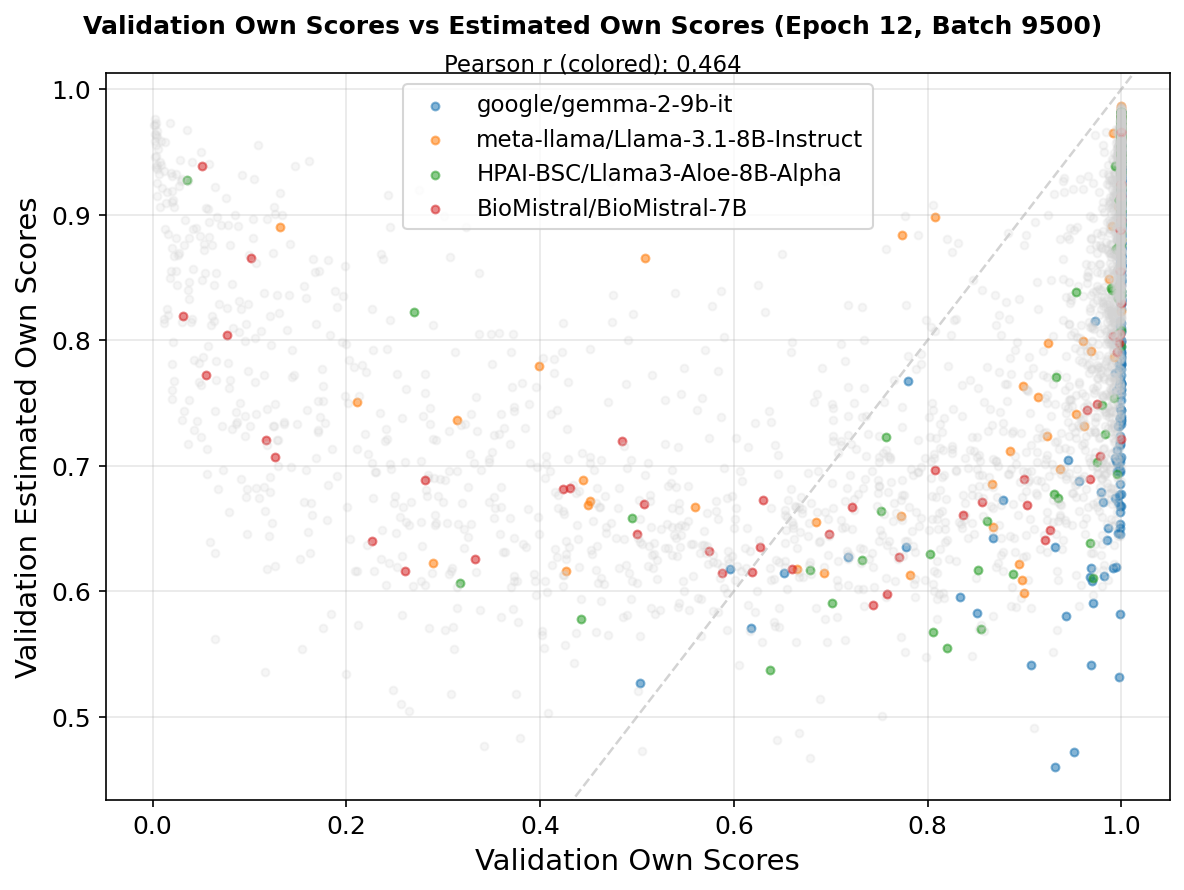}
    \end{subfigure}
    \hfill
    \begin{subfigure}[b]{0.49\textwidth}
        \centering
        \includegraphics[width=\textwidth]{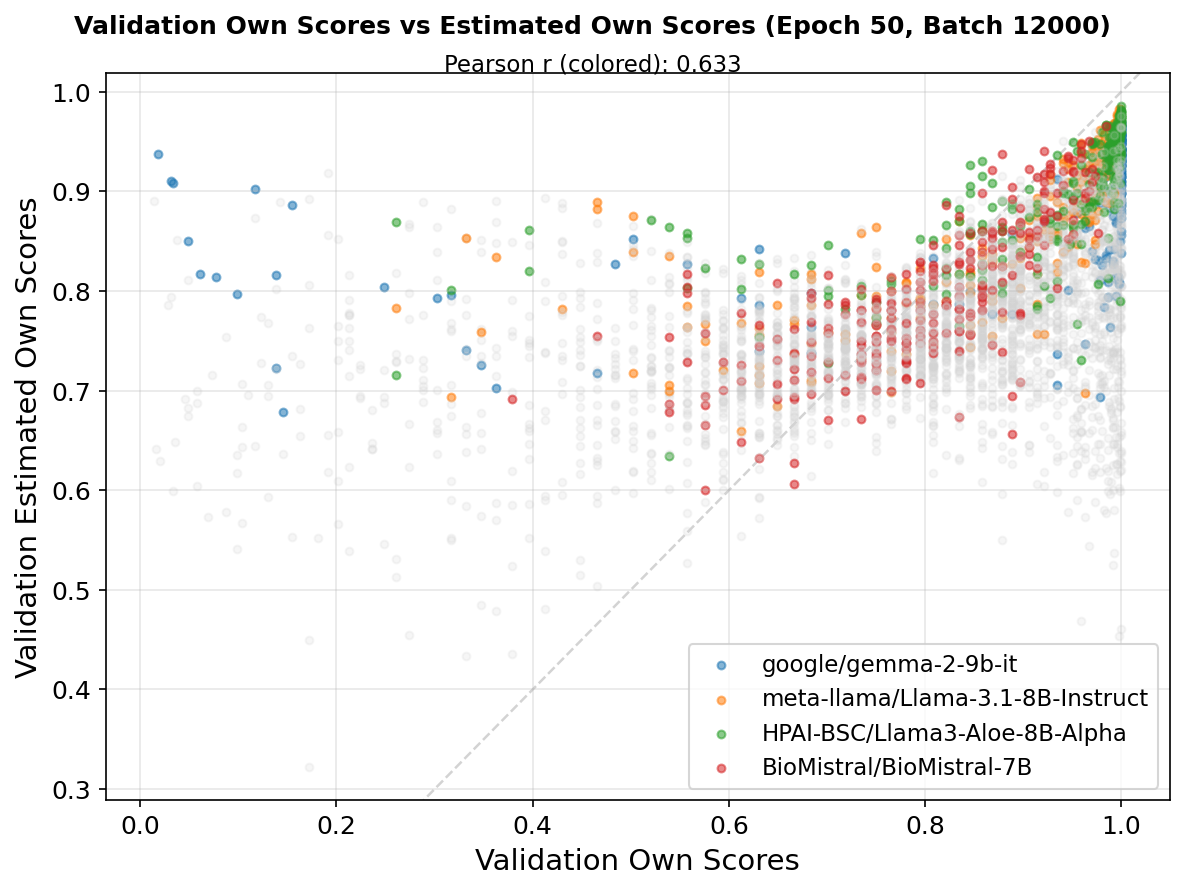}
    \end{subfigure}
    \caption{The alignment between the actual Brier scores ($x$-axis) and the predicted Brier scores trained on last-layer hidden states.
    }
    \label{fig:own_score}
\end{figure}

\begin{figure}[]
    \centering
    \begin{subfigure}[b]{0.49\textwidth}
        \centering
        \includegraphics[width=\textwidth]{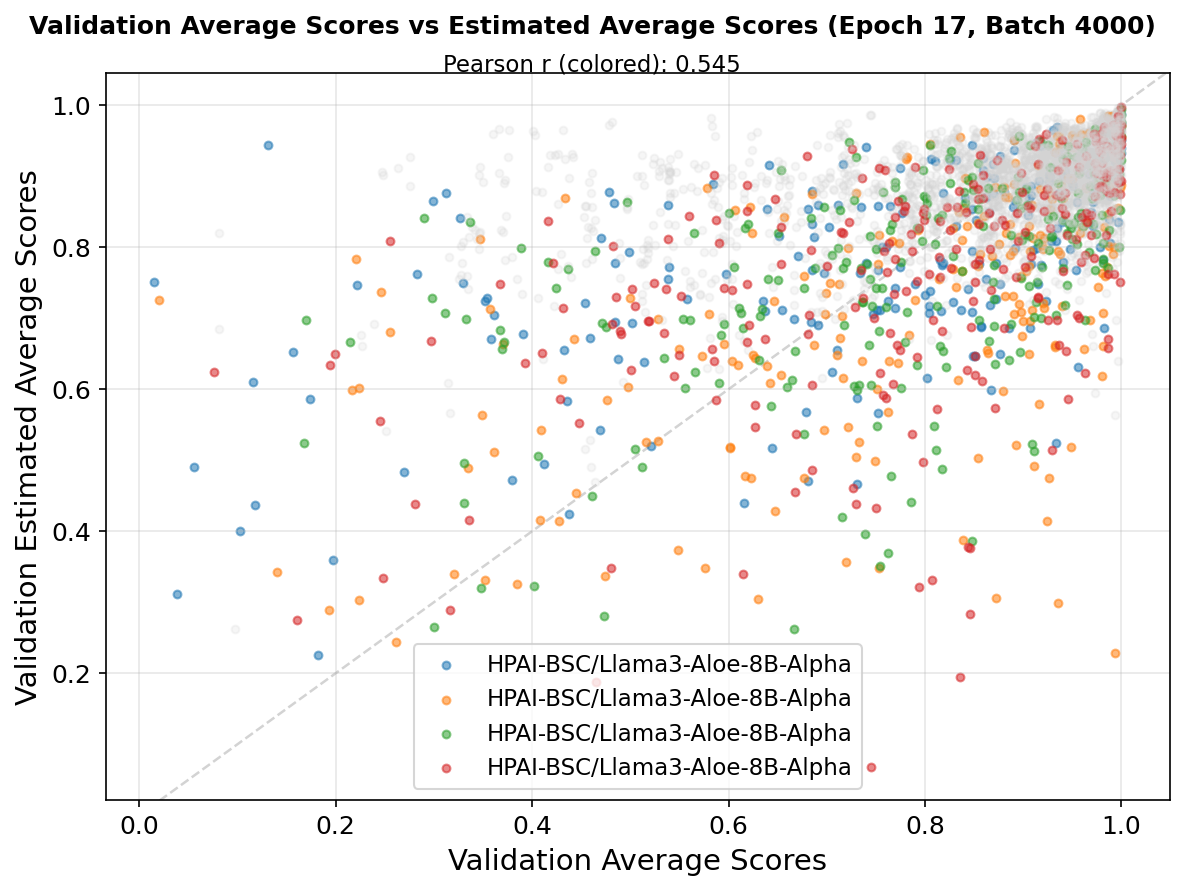}
    \end{subfigure}
    \hfill 
    \begin{subfigure}[b]{0.49\textwidth}
        \centering
        \includegraphics[width=\textwidth]{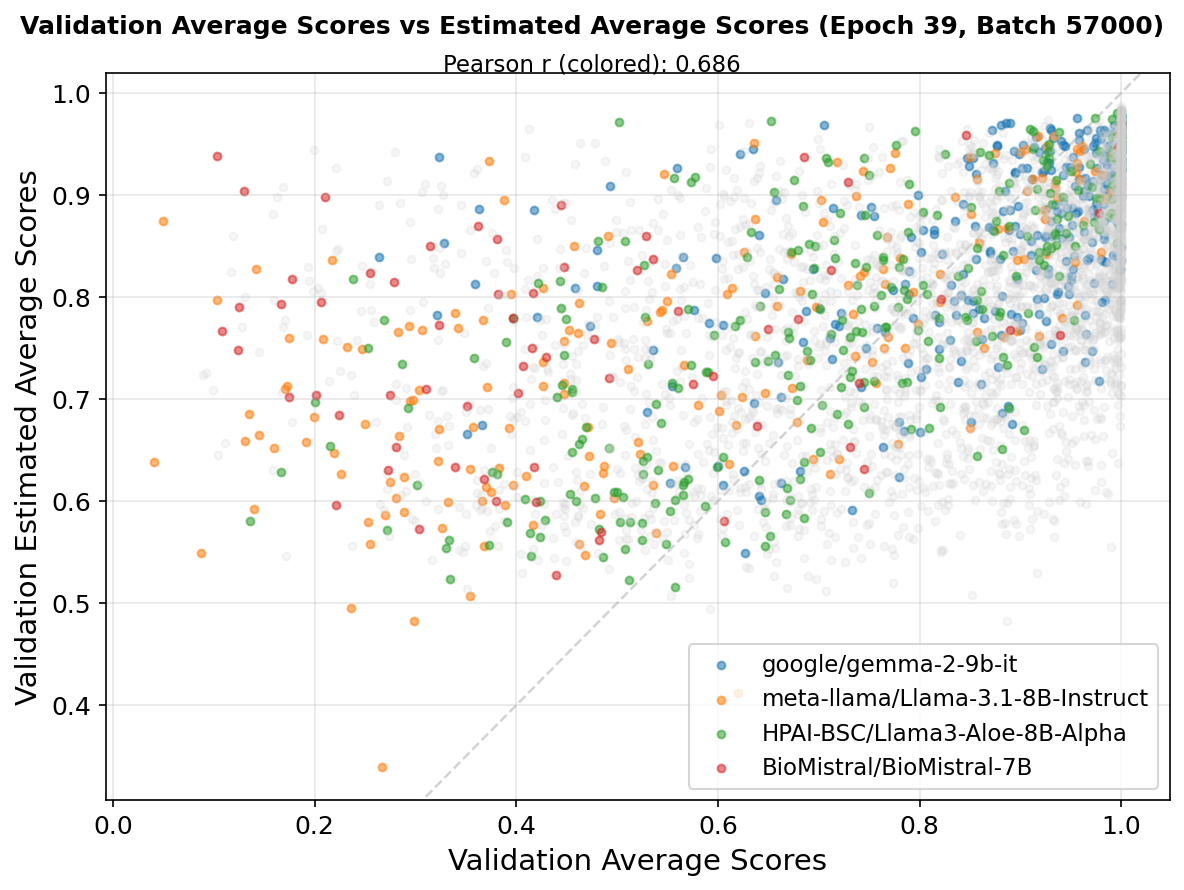}
    \end{subfigure}
    \hfill
    \begin{subfigure}[b]{0.49\textwidth}
        \centering
        \includegraphics[width=\textwidth]{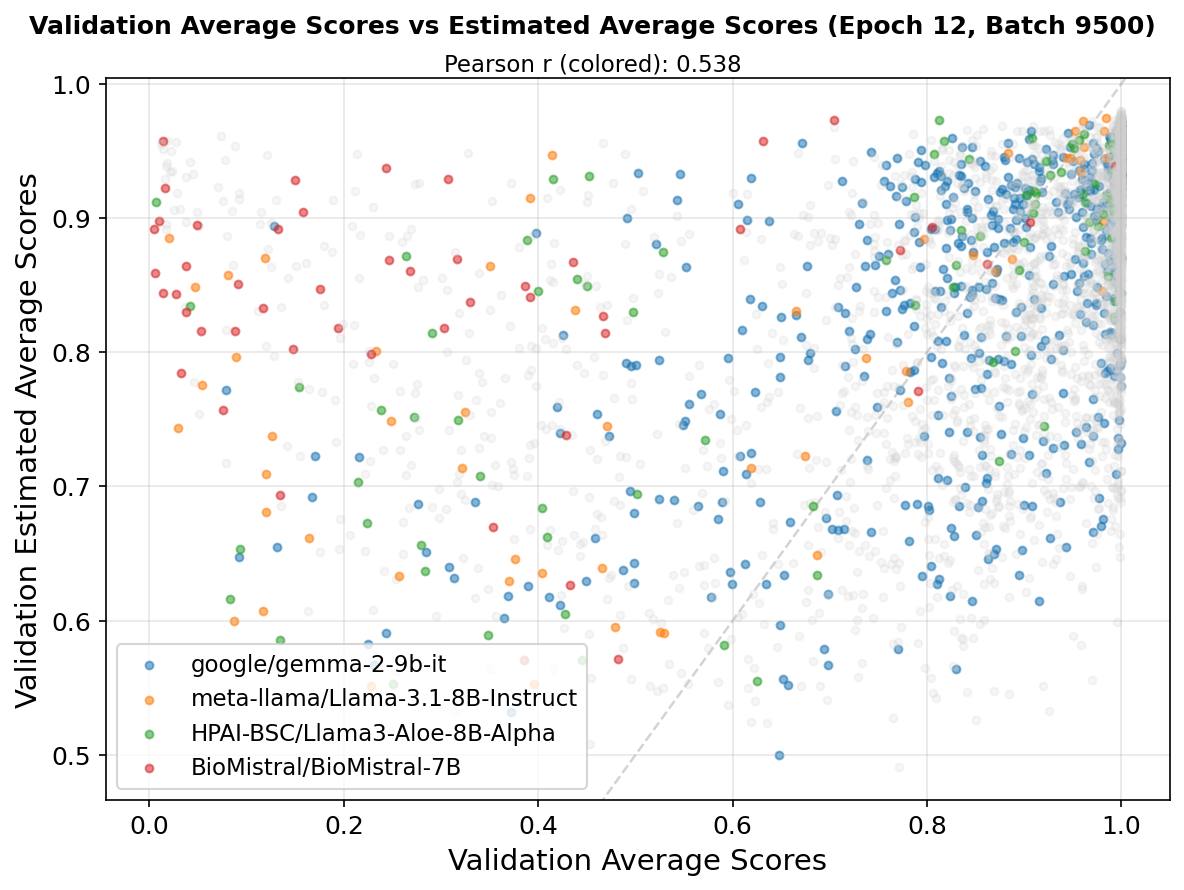}
    \end{subfigure}
    \hfill
    \begin{subfigure}[b]{0.49\textwidth}
        \centering
        \includegraphics[width=\textwidth]{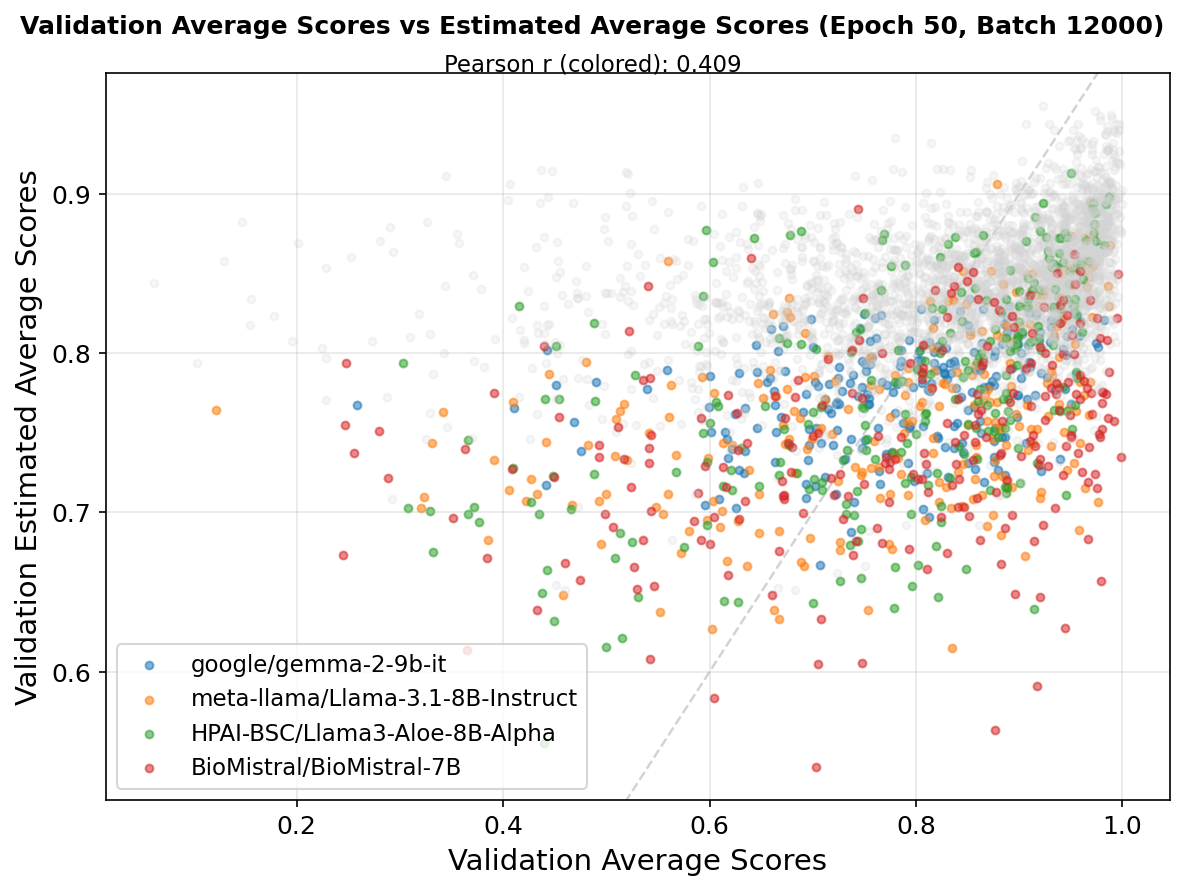}
    \end{subfigure}
    \caption{The alignment between the actual baseline average scores ($b_{-i}$ ($x$-axis) and the predicted baseline average scores trained on last-layer hidden states.}
    \label{fig:baseline}
\end{figure}

\begin{figure}[]
    \centering
    \begin{subfigure}[b]{0.49\textwidth}
        \centering
        \includegraphics[width=\textwidth]{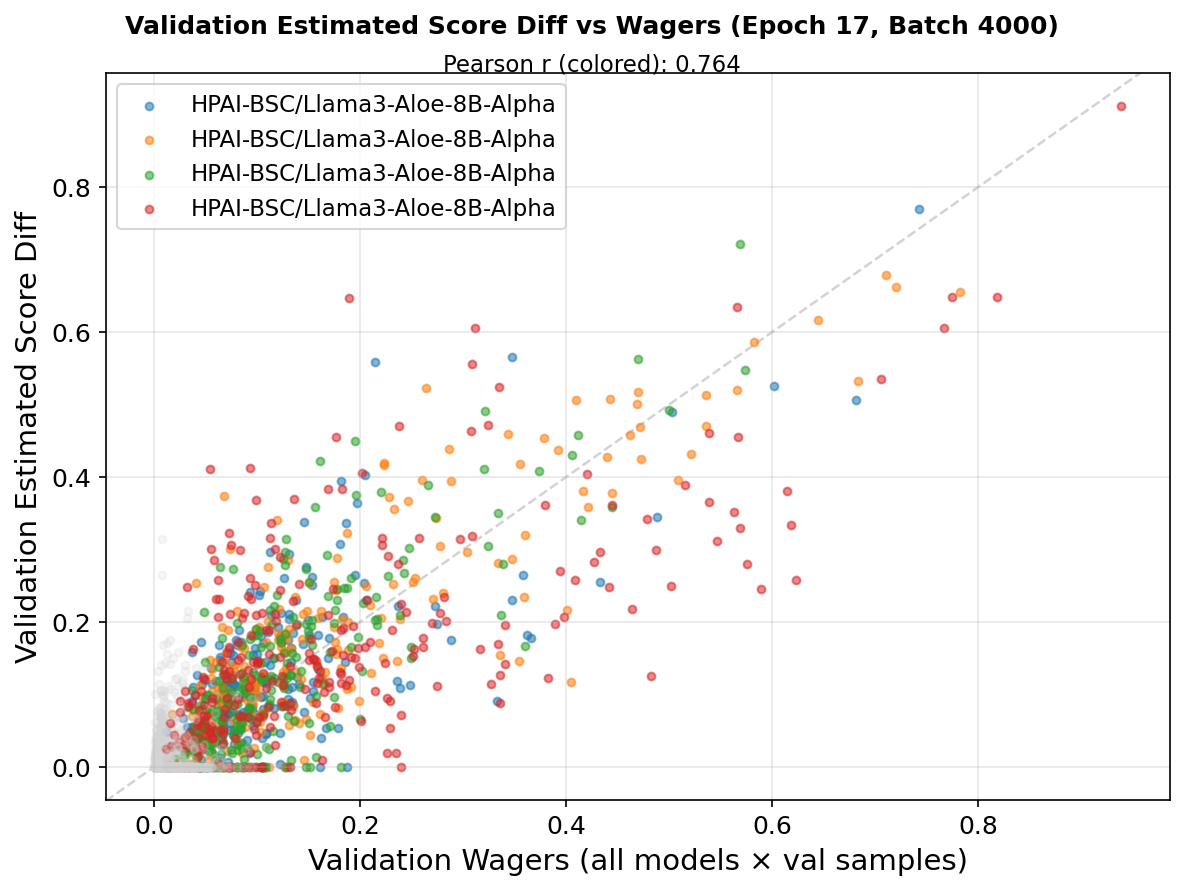}
    \end{subfigure}
    \hfill 
    \begin{subfigure}[b]{0.49\textwidth}
        \centering
        \includegraphics[width=\textwidth]{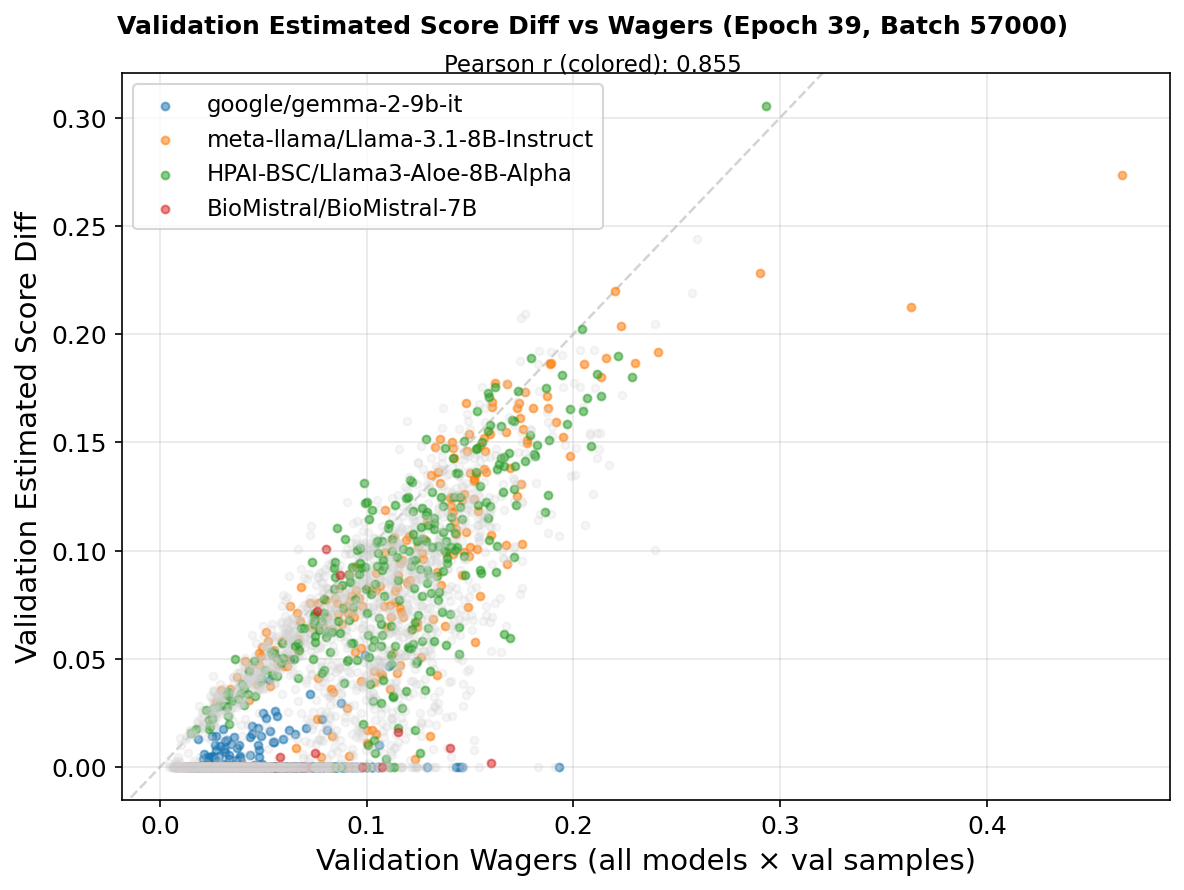}
    \end{subfigure}
    \hfill
    \begin{subfigure}[b]{0.49\textwidth}
        \centering
        \includegraphics[width=\textwidth]{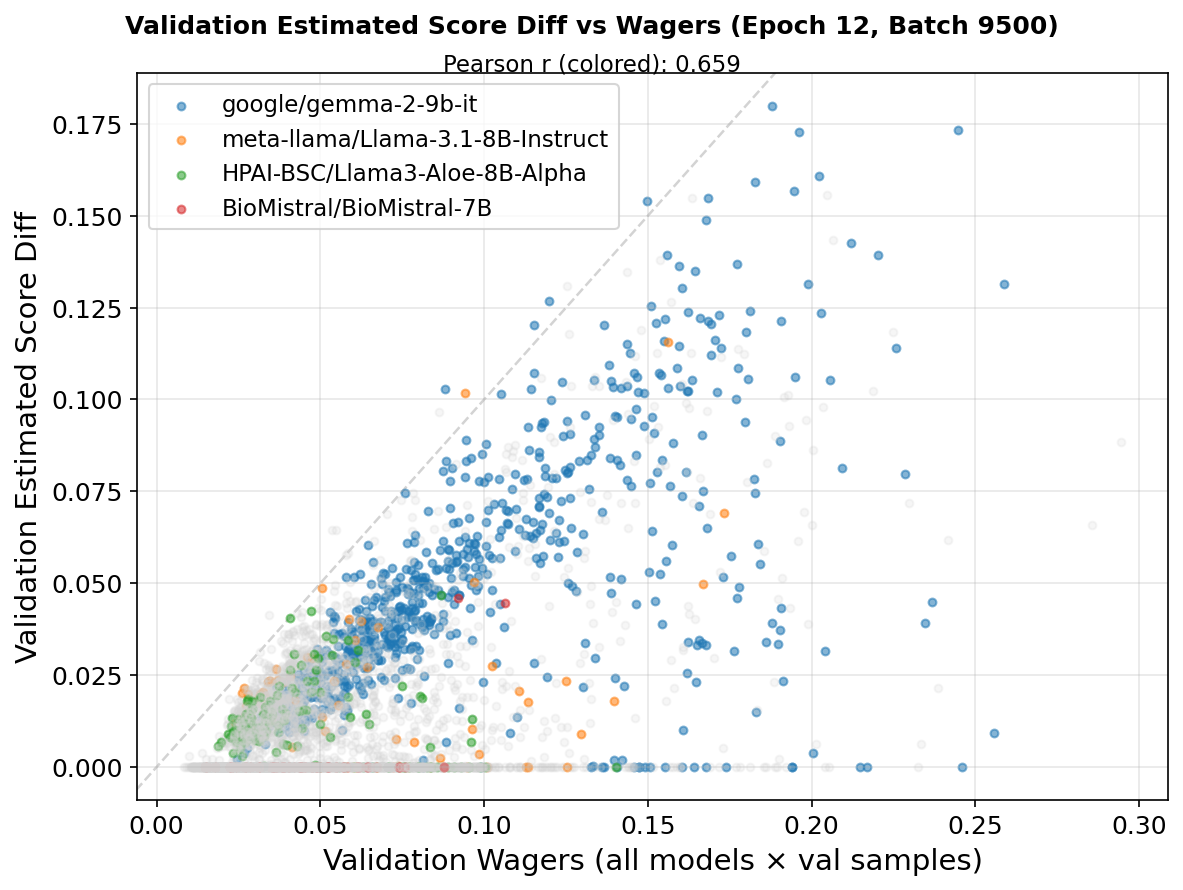}
    \end{subfigure}
    \hfill
    \begin{subfigure}[b]{0.49\textwidth}
        \centering
        \includegraphics[width=\textwidth]{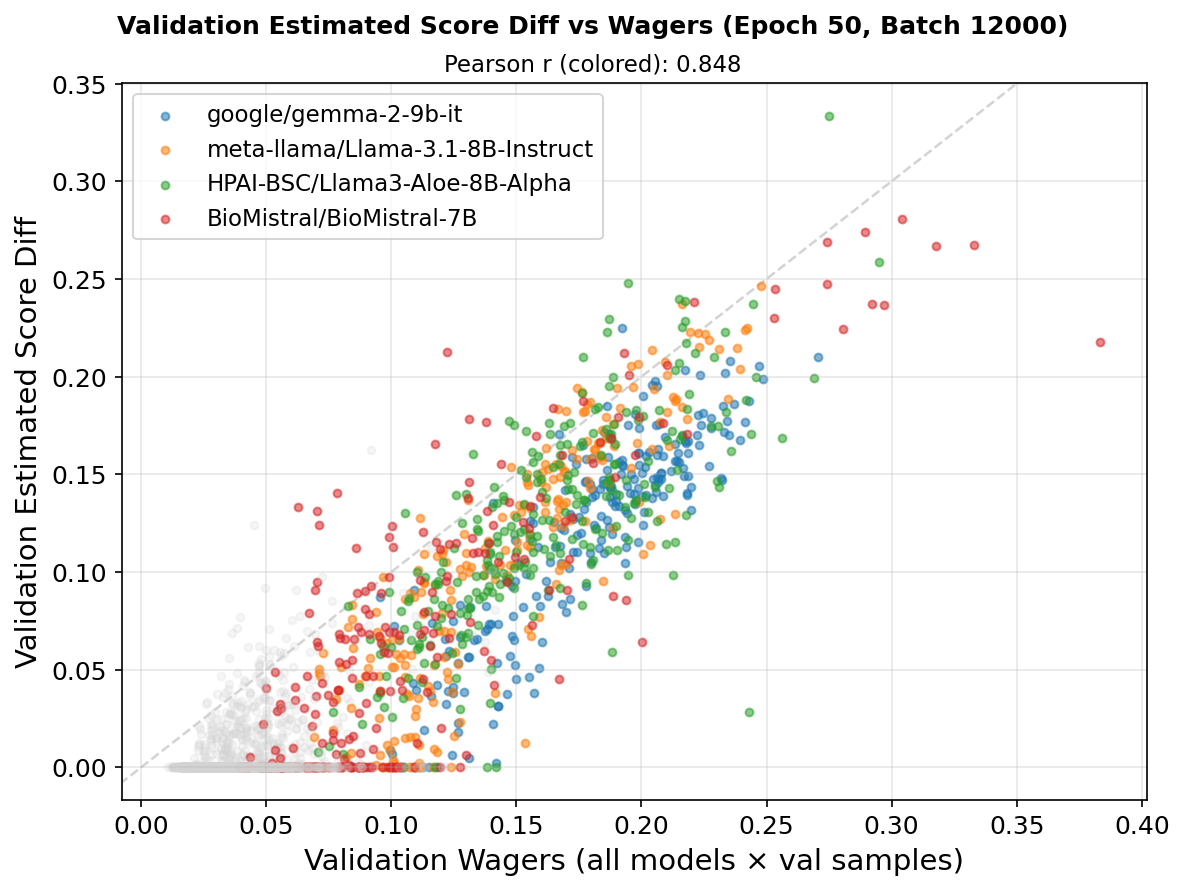}
    \end{subfigure}
    \caption{The alignment between the wagers ($x$-axis) and the differences between the predicted own scores and the predicted baseline average scores ($y$-axis), which can be interpreted as the best-response wager under one's belief. 
      The alignment is strong even though the wagers, the predicted own scores, and the predicted baseline average scores are trained with separate heads.}
    \label{fig:estimated_alignment}
\end{figure}

\begin{figure}[htbp]
    \centering
    \begin{subfigure}[b]{0.32\textwidth}
        \centering
        \includegraphics[width=\textwidth]{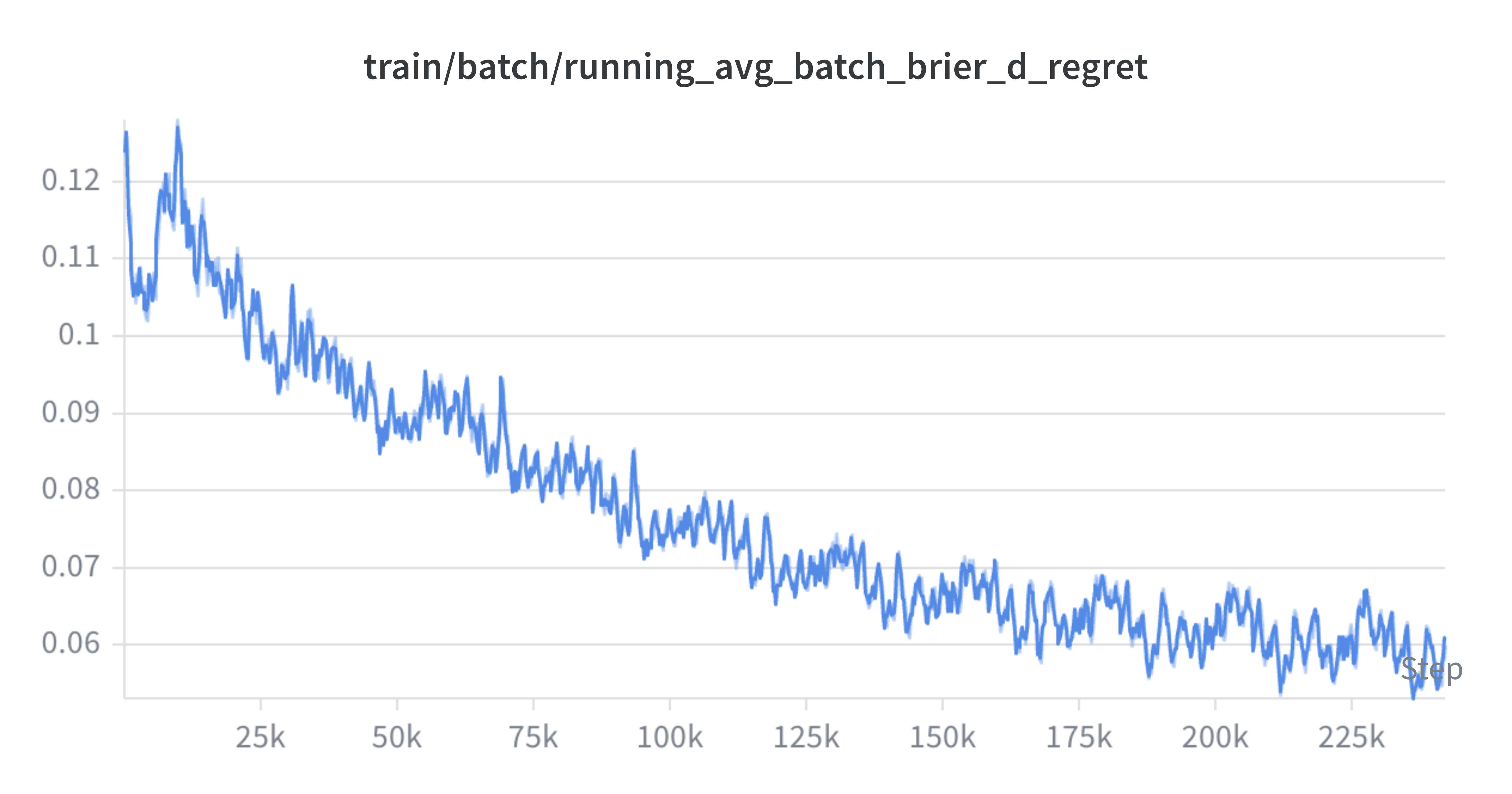}
    \end{subfigure}
    \hfill 
    \begin{subfigure}[b]{0.32\textwidth}
        \centering
        \includegraphics[width=\textwidth]{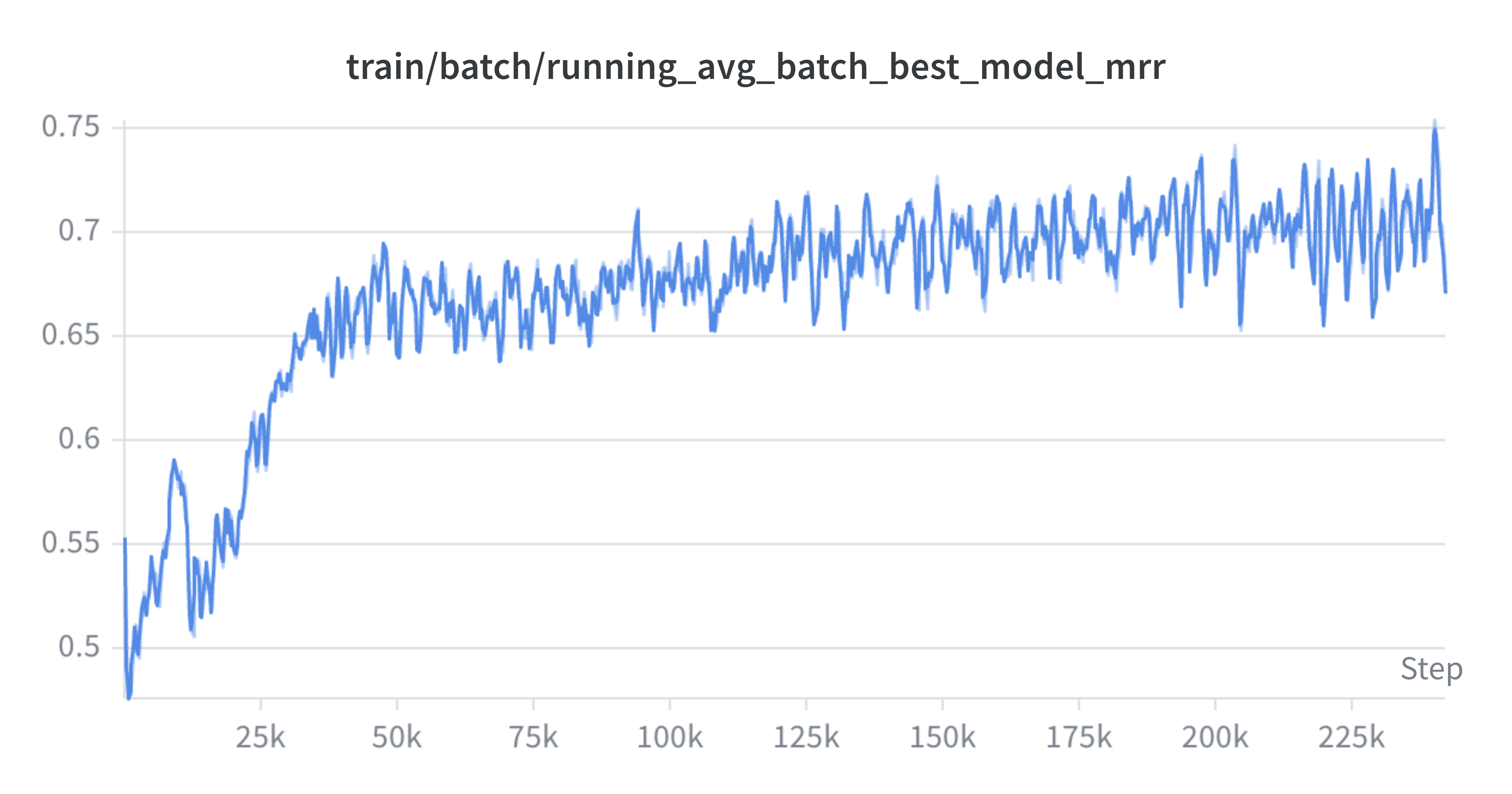}
    \end{subfigure}
    \hfill
    \hfill
    \begin{subfigure}[b]{0.32\textwidth}
        \centering
        \includegraphics[width=\textwidth]{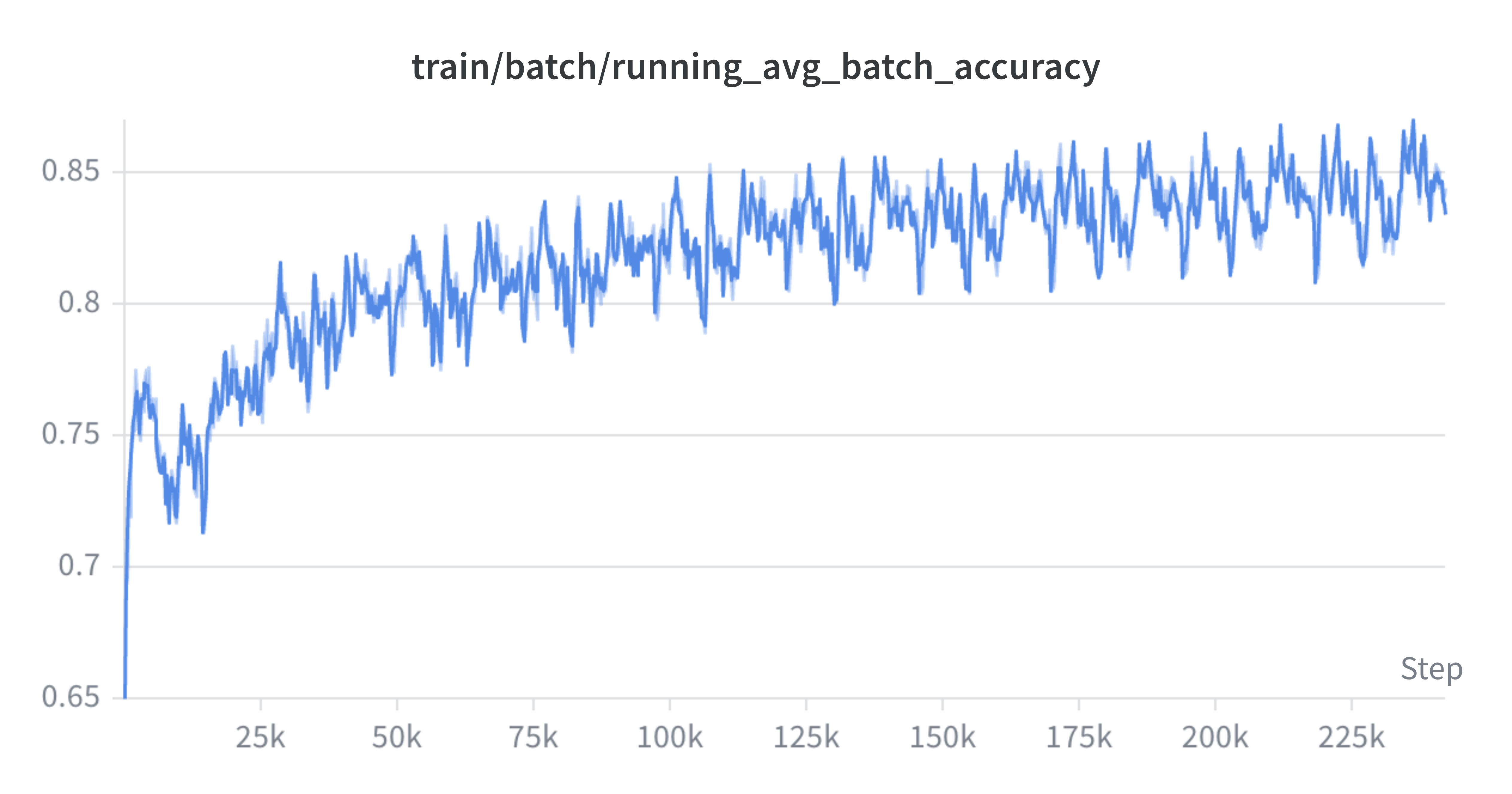}
    \end{subfigure}
    
    \caption{Dynamic regret, mean reciprocal rank of the best expert (MRR), and accuracy (ACC) over time of \proj under Scenario I on PubMedQA. They demonstrate that \proj's performance gradually improves over time, though in a decentralized learning environment. A similar trend can be observed in other scenarios or datasets.}
    \label{fig:perf_overtime}
\end{figure}


\end{document}